\documentclass{article}

\usepackage{microtype}
\usepackage{graphicx}
\usepackage{subcaption}
\usepackage{booktabs} 

\usepackage{hyperref}

\usepackage[accepted]{icml2026}

\usepackage{amsmath}
\usepackage{amssymb}
\usepackage{mathtools}
\usepackage{amsthm}
\usepackage{bm}
\usepackage{multirow}
\usepackage[table]{xcolor}
\usepackage{array}
\usepackage{enumitem}
\usepackage{colortbl}
\usepackage{float}

\definecolor{oursblue}{RGB}{225,235,245}
\definecolor{rowgray}{gray}{0.95}

\usepackage[capitalize,noabbrev]{cleveref}

\theoremstyle{plain}
\newtheorem{theorem}{Theorem}[section]
\newtheorem{proposition}[theorem]{Proposition}
\newtheorem{lemma}[theorem]{Lemma}

\theoremstyle{definition}

\newtheorem{assumption}[theorem]{Assumption}
\theoremstyle{remark}
\newtheorem{remark}[theorem]{Remark}

\usepackage[textsize=tiny]{todonotes}

\newcommand{\best}[1]{\textcolor{red}{#1}}
\newcommand{\secondbest}[1]{\textcolor{blue}{#1}}

\definecolor{morandiL}{RGB}{235, 240, 245}
\definecolor{morandiM}{RGB}{205, 218, 228}
\definecolor{morandiD}{RGB}{175, 195, 210}
\icmltitlerunning{Is Task-Specific Training Necessary for Anomaly Detection?}

\begin{document}

\twocolumn[
  \icmltitle{Is Task-Specific Training Necessary for Anomaly Detection?}



  \icmlsetsymbol{equal}{*}
  \icmlsetsymbol{corr}{\dag}

  \begin{icmlauthorlist}
    \icmlauthor{Xingwu Zhang}{yyy}
    \icmlauthor{Guanxuan Li}{yyy}
    \icmlauthor{Paul Henderson}{sch}
    \icmlauthor{Gerardo Aragon-Camarasa}{sch}
    \icmlauthor{Zijun Long}{yyy,corr}
  \end{icmlauthorlist}

  \icmlaffiliation{yyy}{College of Electrical and Information Engineering, Hunan University, Changsha, China}
  \icmlaffiliation{sch}{School of Computing Science, University of Glasgow, Glasgow, United Kingdom}

  \icmlcorrespondingauthor{Zijun Long}{longzijun@hnu.edu.cn}

  \icmlkeywords{Machine Learning, ICML}

  \vskip 0.3in
]



\printAffiliationsAndNotice{}  

\begin{abstract}

Current state-of-the-art multi-class unsupervised anomaly detection (MUAD) methods rely on training encoder--decoder models to reconstruct anomaly-free features. However, we argue that such task-specific training is costly under distribution shifts, and that reconstruction-based residual scoring further faces a fidelity--stability dilemma. Existing training-free alternatives, in turn, remain prone to cross-category and cross-region mismatches in MUAD. Motivated by these limitations, we propose Retrieval-based Anomaly Detection (RAD), a task-specific training-free framework that stores anomaly-free features in a memory and detects anomalies through multi-level retrieval, matching test patches against the memory. Experiments demonstrate that RAD achieves state-of-the-art performance across four established benchmarks (MVTec-AD, VisA, Real-IAD, 3D-ADAM) under both standard and few-shot settings. On MVTec-AD, RAD reaches 96.7\% Pixel AUROC with just a single anomaly-free image compared to 98.5\% of RAD's full-data performance. Collectively, these findings overturn the assumption that MUAD requires task-specific training, showing that state-of-the-art anomaly detection is feasible with training-free memory-based retrieval. Our code is available at \url{https://github.com/longkukuhi/RAD}.

\end{abstract}

\setlength{\parskip}{0.4em}
\section{Introduction}
Multi-class Unsupervised Anomaly Detection (MUAD) aims to detect abnormal patterns and localize anomalous regions using only anomaly-free samples, without access to class labels. Due to the diversity and scarcity of potential anomalies, the task is typically formulated as measuring how much a test sample deviates from the anomaly-free data pattern.
Existing state-of-the-art methods rely on task-specific training to capture anomaly-free patterns, especially when the anomaly-free distribution changes across categories. Prior work spans several training paradigms, including student--teacher distillation~\cite{wei2025uninet}, diffusion-based generative modeling~\cite{he2024diffusion,fengomiad}, encoder--decoder reconstruction~\cite{guo2025dinomaly,he2024mambaad,zhang2023exploring,you2022unified}, and memory-based techniques~\cite{lee2024continuous,yao2024resad,tan2024unsupervised,liang2025look,liang2025lightweight} (see Appendix Sec.~\ref{sec:rw} for details). The empirical success of these approaches has reinforced the belief that competitive task-specific training-free MUAD is unattainable.


\begin{figure}
    \centering
    \includegraphics[width=1\linewidth]{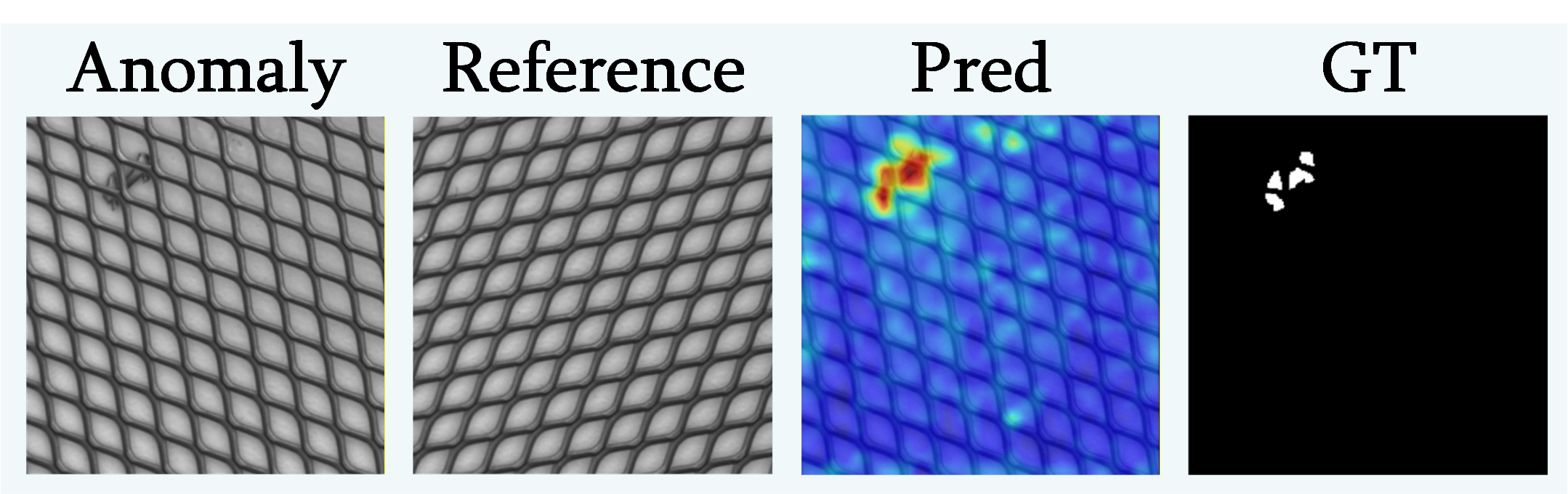}
    \caption{Visualizations of MUAD with only one reference by RAD.}
    \label{fig:Figure_RAD}
\end{figure}
Among the above methods, encoder--decoder reconstruction methods have been particularly successful. A pre-trained encoder is typically frozen, a lightweight decoder is trained on anomaly-free data to reconstruct encoder features, and anomalies are detected by thresholding the discrepancies between original and reconstructed features.
In this work, we analyze the behavior of reconstruction-based MUAD in representation space, and show for the first time that these methods suffer from an inherent \textit{fidelity--stability dilemma}.
They must simultaneously achieve high-fidelity reconstruction of anomaly-free features (towards identity mapping) and stable residual scores under benign variations (a clear boundary between normal and abnormal). We show that these two training objectives are naturally contradictory and eventually lead to the \emph{dilemma} as discussed in Sec. \ref{sec:theory}. When high-fidelity reconstruction is enforced, the decoder must amplify small benign variations in encoder features, making the decision boundary highly sensitive. Conversely, relaxing reconstruction pressure improves stability but introduces a systematic approximation gap on the anomaly-free distribution, diluting residual-based anomaly evidence. 

The cost of task-specific training under distribution shifts and the fidelity--stability dilemma of reconstruction-based scoring together call for task-specific training-free MUAD. However, existing task-specific training-free methods, especially deep k-Nearest-Neighbor-based methods~\cite{roth2022towards,bergman2020deep,cohen2020sub}, typically score anomalies by directly matching test features against stored anomaly-free features. In MUAD, such naive retrieval can suffer from cross-category and cross-region mismatches.

To realize robust task-specific training-free MUAD, we propose RAD (Retrieval-based Anomaly Detection), a framework that replaces decoder training with context-conditioned multi-level retrieval over anomaly-free feature memories to mitigate MUAD-specific mismatches. For each test image, RAD first retrieves a small set of globally compatible anomaly-free references, restricting comparisons to features that match its global context and spatial layout. It then conducts spatially constrained patch matching within these references across multiple encoder layers, and fuses the resulting layer-wise scores to combine low-level local sensitivity with high-level semantic robustness. RAD can adapt to new data simply by updating the memory bank rather than re-training, naturally accommodating new categories and distribution shifts; it also directly benefits from stronger frozen encoders as they become available, as confirmed by our experiments in Sec. \ref{sec:exp&analysis}.

We further provide a theoretical analysis showing that, when sharing the same frozen encoder and anomaly-free training set, retrieval-based anomaly scores are maximal within a restricted class of stable feature-space scores, while avoiding the \emph{fidelity--stability dilemma} induced by information loss in decoder-based reconstruction. Empirically, RAD is, to our knowledge, the first task-specific training-free MUAD framework that consistently surpasses task-specific training-based methods, achieving state-of-the-art performance on four benchmarks~\cite{bergmann2019mvtec,zou2022spot,wang2024real,mchard20253d} under both standard multi-class and few-shot MUAD settings. We visualize how RAD accurately localizes anomalies on MVTec-AD (see Fig.~\ref{fig:Figure_RAD}).
It remains effective even with a single anomaly-free reference whose appearance differs from the test sample, demonstrating that competitive anomaly detection is possible without task-specific training. 

Specifically, our main contributions are: (i) Analyzing the fidelity--stability dilemma in encoder--decoder MUAD.
(ii) Proposing RAD, the first task-specific training-free MUAD framework that mitigates cross-category and cross-region mismatches via multi-level retrieval.
(iii) Demonstrating that RAD achieves state-of-the-art results across four benchmarks under both standard and few-shot MUAD settings compared with both trained and training-free methods.



\section{Fidelity-Stability Dilemma}
\label{sec:theory}

In this section, we analyze encoder--decoder methods from the view of representation space and show they inherently face a fidelity--stability dilemma. We first show that, even on anomaly-free patches, the optimal reconstruction map generally deviates from the identity, so residuals remain non-zero due to unavoidable reconstruction errors. We then study how enforcing high-fidelity reconstruction of benign feature variations interacts with the bottleneck, and derive a lower bound that ties this requirement to the decoder’s local gain, revealing a fundamental fidelity--stability trade-off in residual scores.

\paragraph{Problem formulation.}
Let $\mathcal{D}_{\mathrm{norm}}^{\mathrm{img}} = \{x_i\}_{i=1}^N$ be a dataset of $N$ anomaly-free training images covering a variety of classes. Each image is decomposed into patches on a fixed grid, and we denote by $\mathcal{D}_{\mathrm{norm}}^{\mathrm{patch}}$ the resulting set of anomaly-free training patches.
In contrast, test images may contain unseen defects, with some patches anomalous and others normal.
Given a test image $x$, a detector produces an anomaly score $S(x,t)$ for each patch $t$.
For the theoretical analysis below, we fix a test image and a patch and write $S(t)$ for the corresponding patch-wise anomaly score.

\paragraph{Reconstruction-based scoring.}

Current state-of-the-art reconstruction methods~\cite{you2022unified,guo2025dinomaly,zhang2023exploring} use a training paradigm that freezes a pre-trained encoder $\Phi$ and learns a decoder $\Psi_\theta$ that approximately inverts an information-losing bottleneck $B$, aiming to reconstruct the encoder features from their bottlenecked versions.

To prevent the decoder from trivially copying $\Phi(n)$, the bottleneck $B$ stochastically removes or perturbs part of the representation, so that the decoder must infer missing information from the learned anomaly-free patterns rather than simply act as an exact identity mapping.
The reconstruction path can then be written as $R_\theta(n) = \Psi_\theta\big(B(\Phi(n))\big)$.
The decoder is trained on anomaly-free patches by: 
\begin{equation}
\theta^* = \arg\min_\theta
    \sum_{n \in \mathcal{D}_{\mathrm{norm}}^{\mathrm{patch}}}
    \ell_{\mathrm{rec}}\big(\Phi(n),\, R_\theta(n)).
\end{equation}

Here $\theta^*$ represents the (ideal) best decoder parameters achievable under this objective, and $R_{\theta^*}$ is the corresponding optimal reconstruction decoder for anomaly-free patches. At test time, we use $R_{\theta^*}$ to reconstruct each patch $t$, defining the anomaly score as the feature residual
$
    S_{\mathrm{rec}}(t) = \big\|\Phi(t) - R_{\theta^*}(t)\big\|.
$
\label{eqa:residual_score}

To analyze equation \ref{eqa:residual_score}, we study the reconstruction map $R_{\theta^*}$
in feature space: (i) its nominal deviation from the identity on anomaly-free features, and (ii) how benign feature variations are propagated through the bottleneck--decoder pipeline. We begin by decomposing $R_{\theta^*}$
on anomaly-free patches into an identity term plus a reconstruction error.

\begin{figure*}
    \centering
    \includegraphics[width=1\linewidth]{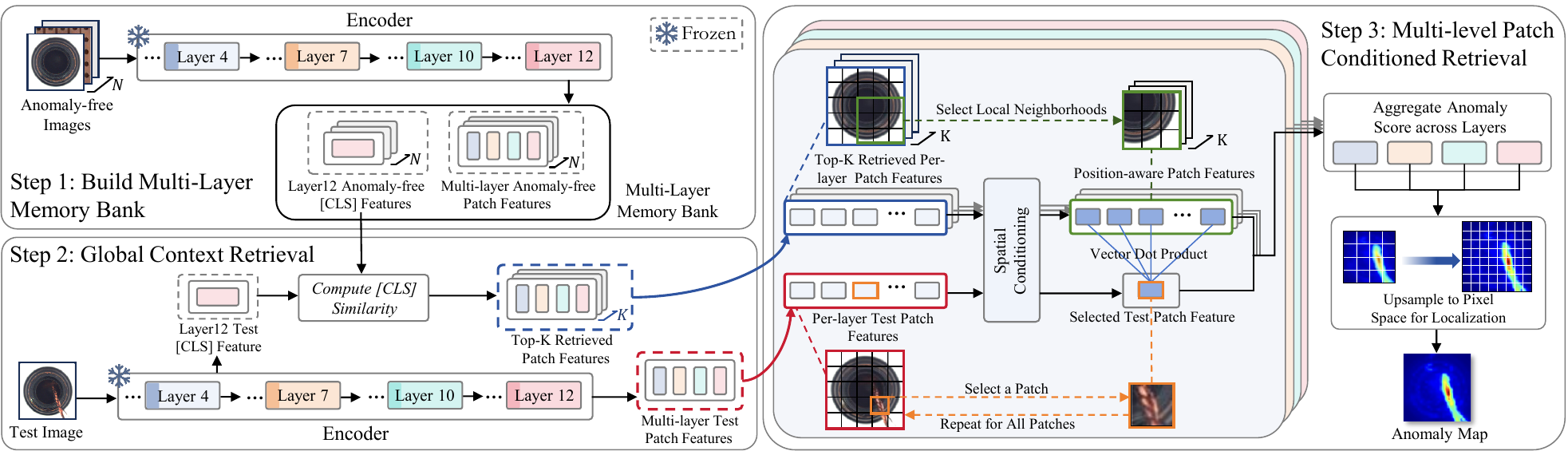}
    \caption{Overview of the proposed RAD framework.}
    \label{fig:Figure_fw}
\end{figure*}

\paragraph{Reconstruction as amortized inverse.}
For anomaly-free patches $n \in \mathcal{D}^{patch}_{\mathrm{norm}}$, the ideal behavior would be $R_{\theta^*}(n) = \Phi(n)$.
In practice, the information loss in $B$, finite capacity, and imperfect optimization jointly induce a systematic deviation from the identity, which we write as
\begin{equation}
    R_{\theta^*}(n) = \Phi(n) + \varepsilon_{\mathrm{approx}}(n),
    \qquad n \in \mathcal{D}_{\mathrm{norm}}^{\mathrm{patch}},
\end{equation}



We refer to $\varepsilon_{\mathrm{approx}}(n)$ as the per-patch \emph{reconstruction error} in feature space.
Its magnitude $\alpha(n) \triangleq \|\varepsilon_{\mathrm{approx}}(n)\|$ is the \emph{reconstruction gap} on patch $n$.
Notably, for anomaly-free patches, the residual score satisfies
\[
S_{\mathrm{rec}}(n)
= \|\Phi(n)-R_{\theta^*}(n)\|
= \|\varepsilon_{\mathrm{approx}}(n)\|
= \alpha(n).
\]

Thus the reconstruction path acts as an amortized, biased inverse of the
bottlenecked encoder on the anomaly-free feature manifold rather than an exact
identity.
Moreover, both the \emph{magnitude} of $\varepsilon_{\mathrm{approx}}$ (captured by $\alpha(n)$) and its \emph{geometry} across $n$
(e.g., which feature directions are systematically distorted)
govern how reconstruction fidelity trades off against the stability of the residual score $S_{\mathrm{rec}}(t)$.


\paragraph{Decoder amplification under information loss.}
Let $R = \Psi \circ B$ denote the feature reconstruction map and $\mathcal{V}_n \subset \mathbb{R}^D$ be a subspace of benign feature
variations around $\Phi(n)$ that should not be treated as anomalies, e.g.,
feature differences between $\Phi(n)$ and $\Phi(n)+\delta$ for nearby
anomaly-free patches.
We write $J_R(u)$ for the Jacobian of the map $R$ evaluated at $u$ (and similarly
$J_\Psi(u)$ and $J_B(u)$).
By the chain rule,
\begin{equation}
    J_R(\Phi(n)) = J_\Psi\big(B(\Phi(n))\big)\, J_B\big(\Phi(n)\big).
\label{eq:chainrule}
\end{equation}

This decomposition allows us to convert the fidelity requirement on $R$
(Assumption~\ref{ass:fidelity}) and the contraction induced by $B$
into a lower bound on the decoder gain $J_\Psi$.

\begin{assumption}[anomaly-free feature fidelity]
\label{ass:fidelity}
Fix a fidelity parameter $\eta \in (0,1)$, which upper-bounds the relative reconstruction error on benign feature variations, with the high-fidelity regime corresponding to $\eta \ll 1$.
For each anomaly-free training patch $n \in \mathcal{D}_{\mathrm{norm}}^{\mathrm{patch}}$, there exists a radius $r>0$ such that for all benign feature perturbations $\delta \in \mathcal{V}_n$ with $\|\delta\|\le r$,
\begin{equation}
    \big\| R(\Phi(n)+\delta) - R(\Phi(n)) - \delta \big\| \le \eta \|\delta\|.
\end{equation}
Equivalently, the Jacobian of $R_{\theta^*}$ restricted to $\mathcal{V}_n$ satisfies
\begin{equation}
    \sigma_{\min}\big(J_{R}(\Phi(n))\big|_{\mathcal{V}_n}\big) \ge 1 - \eta,
\end{equation}
where $\sigma_{\min}(\cdot)$ is the smallest singular value.
\end{assumption}

Assumption~\ref{ass:fidelity} enforces a local near-identity regime: the reconstruction map preserves benign feature variations in $\mathcal{V}_n$ up to relative error $\eta$, which is the formal notion of high-fidelity reconstruction used in our amplification analysis.

\begin{lemma}[Decoder amplification]
\label{lem:amplification}
Under Assumption~\ref{ass:fidelity}, the decoder Jacobian satisfies
\begin{equation}
    \sigma_{\max}\big(J_\Psi(B(\Phi(n)))\big)
    \;\ge\;
    \frac{1-\eta}{
    \sigma_{\min}\big(J_B(\Phi(n))\big|_{\mathcal{V}_n}\big)},
\end{equation}
where $\sigma_{\max}(\cdot)$ is the largest singular value.
\end{lemma}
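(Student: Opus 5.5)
The plan is to combine the chain rule \eqref{eq:chainrule} with the singular-value bound of Assumption~\ref{ass:fidelity}, working one unit test vector at a time. Write $u=\Phi(n)$, $J_\Psi = J_\Psi(B(u))$, $J_B = J_B(u)$ and $J_R = J_R(u)$. By \eqref{eq:chainrule} we have $J_R = J_\Psi J_B$. The aim is to bound $\|J_R v\|$ from below using the fidelity assumption, and from above by $\sigma_{\max}(J_\Psi)\,\|J_B v\|$. We then choose $v$ so that $\|J_B v\|$ is as small as the restricted singular value allows.

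\textbf{Step 1 (lower bound from fidelity).} For every $v\in\mathcal{V}_n$ with $\|v\|=1$, Assumption~\ref{ass:fidelity} gives $\|J_R v\|\ge \sigma_{\min}(J_R|_{\mathcal{V}_n})\ge 1-\eta$. If one wants to start from the finite-difference form, apply the inequality to $\delta = s v$ with $s\to 0^+$. Dividing by $s$ gives $\|J_R v - v\|\le\eta$, and the reverse triangle inequality then yields $\|J_R v\|\ge 1-\eta$. This assumes $R$ is differentiable at $u$, which the paper takes for granted when it writes Jacobians.

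\textbf{Step 2 (upper bound via the chain rule).} For the same $v$,
\begin{equation*}
\|J_R v\| = \|J_\Psi (J_B v)\| \le \sigma_{\max}(J_\Psi)\,\|J_B v\|.
\end{equation*}

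\textbf{Step 3 (choice of $v$).} Let $v^\star\in\mathcal{V}_n$ be a unit vector attaining $\sigma_{\min}(J_B|_{\mathcal{V}_n}) = \min_{v\in\mathcal{V}_n,\|v\|=1}\|J_B v\|$. The minimum is attained because the unit sphere of the finite-dimensional subspace $\mathcal{V}_n$ is compact and $v\mapsto\|J_Bv\|$ is continuous. Combining Steps 1 and 2 at $v^\star$ gives
\begin{equation*}
1-\eta \le \sigma_{\max}(J_\Psi)\,\sigma_{\min}(J_B|_{\mathcal{V}_n}).
\end{equation*}
When $\sigma_{\min}(J_B|_{\mathcal{V}_n})>0$, we divide to obtain the claim.

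\textbf{Main obstacle: the degenerate case.} The only delicate point is $\sigma_{\min}(J_B|_{\mathcal{V}_n})=0$, i.e.\ the bottleneck annihilates some benign direction. Then Steps 1 and 2 give $1-\eta\le 0$, which contradicts $\eta<1$. So under Assumption~\ref{ass:fidelity} this case cannot occur, and the right-hand side of the lemma is always finite. One can equivalently read the bound as $+\infty$, meaning that fidelity is impossible. I will state this explicitly, since it is exactly the information-loss message the lemma is meant to convey. I will also note that $\sigma_{\min}(\cdot|_{\mathcal{V}_n})$ is defined through the restriction to $\mathcal{V}_n$, so that no full-rank assumption on $J_B$ over $\mathbb{R}^D$ is needed.
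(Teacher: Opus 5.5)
Your proof is correct and matches the paper's argument. The paper packages your Steps~2--3 as the general inequality $\sigma_{\min}(AB)\le\sigma_{\max}(A)\,\sigma_{\min}(B)$, proved exactly by choosing a unit vector that attains $\sigma_{\min}(B)$; it obtains the Jacobian form of Assumption~\ref{ass:fidelity} by the same limiting argument as your Step~1, and it rules out $\sigma_{\min}\big(J_B(\Phi(n))|_{\mathcal{V}_n}\big)=0$ in a remark via the same contradiction with $\eta<1$ that you give.
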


\emph{Proof sketch.}
Restrict all operators to the subspace $\mathcal{V}_n$.
For matrices $A,B$, one has $\sigma_{\min}(AB) \le \sigma_{\max}(A)\,\sigma_{\min}(B)$.
Applying this to $A = J_\Psi\big(B(\Phi(n))\big)$ and 
$B = J_B\big(\Phi(n)\big)\big|_{\mathcal{V}_n}$ and using
Eq.~\ref{eq:chainrule}
yields
\begin{equation}
\resizebox{0.9\linewidth}{!}{$
\sigma_{\min}\big(J_R(\Phi(n))\big|_{\mathcal{V}_n}\big)
\le
\sigma_{\max}\big(J_\Psi(B(\Phi(n)))\big)\,
\sigma_{\min}\big(J_B(\Phi(n))\big|_{\mathcal{V}_n}\big)
$}
\end{equation}

Rearranging and invoking Assumption~\ref{ass:fidelity} finishes the proof (see Appendix Sec.~\ref{sec:proof} for the full proof). Thus, for a strong bottleneck, the restricted smallest singular value
$\sigma_{\min}(J_B(\Phi(n))|_{\mathcal{V}_n})$ can be very small since $B$ removes or compresses information along many benign feature variations.
Lemma~\ref{lem:amplification} then implies that the decoder must have a large $\sigma_{\max}(J_\Psi)$ in order to maintain high-fidelity reconstruction on anomaly-free patches, which makes it behave as a high-gain amplifier.


In feature space, reconstruction acts as a biased, amortized inverse of the bottlenecked encoder on normal features, so residuals on anomaly-free patches are driven by reconstruction error; enforcing variation preservation through an information-losing bottleneck forces the decoder into a high-gain amplifier, making residuals unstable to benign perturbations.
Consequently, either many anomaly-free patches acquire large residuals and are incorrectly flagged as anomalous, or strong regularization makes the decoder overly smooth so that small defects are reconstructed together with benign variations and go undetected. This stability--fidelity dilemma echoes the stability--accuracy trade-off for deep inverse problems~\cite{colbrook2022difficulty}, but here it appears directly in patch-wise reconstruction scores for anomaly detection. In the next section we show that retrieval-based scoring, which replaces learning-based approximation with an explicit memory of anomaly-free features, sidesteps this dilemma.

\section{RAD: Towards Training-free MUAD}
\label{sec:evoad}
In this section, we formalize the retrieval-based view of RAD and derive its concrete multi-level design. We begin by constructing an explicit multi-layer memory of anomaly-free features that serves as the reference bank for all subsequent retrieval operations (Sec.~\ref{sec:memory}). Building on this, a global context retrieval stage selects a small, semantically compatible reference set for each test image (Sec.~\ref{sec:cls-retrieval-subsec}). We then describe multi-level conditional patch retrieval and anomaly scoring, where spatially conditioned patch matching is performed within local neighborhoods and scores are fused across layers into dense anomaly maps (Sec.~\ref{sec:retrieval}).




\subsection{Multi-layer Anomaly-free Feature Memory Bank}
\label{sec:memory}
Features from different encoder layers provide complementary similarity signals~\cite{park2023self,amir2022effectiveness}. Shallow and intermediate layers retain high-frequency, local information such as edges and small geometric deviations, such that similarity is sensitive to subtle surface changes. Deeper layers aggregate information over larger receptive fields and become more invariant to low-level appearance, so similarity instead reflects semantic and structural consistency rather than precise texture.

For multi-class anomaly detection (MUAD), this complementarity is crucial. Fine-grained defects such as scratches, or small dents are best captured by shallow and intermediate features, while larger structural changes or misalignments are more naturally expressed in deeper semantic features. By retrieving and scoring patches jointly across multiple layers, RAD combines the sensitivity of shallow features with the robustness and semantic context of deeper ones, leading to more reliable localization of both subtle surface anomalies and larger structural defects.

Concretely, we instantiate this idea by constructing a multi-layer memory bank of anomaly-free features.
Let $\Phi$ be a frozen encoder, and let $\mathcal{L}$ be a small set of encoder layers chosen for multi-layer feature extraction. For each anomaly-free image $x_i$, we pass $x_i$ through $\Phi$ and take the \texttt{[CLS]} token of the final hidden state, apply $\ell_2$ normalization, and  store it as a global descriptor $g(x_i)$, representing the entire input as a compact, global summary. These descriptors are later used for global context retrieval in Sec.~\ref{sec:cls-retrieval-subsec}.
In addition, we store all $\ell_2$-normalized patch embeddings $z^{(\ell)}(x_i) \in \mathbb{R}^{d_\ell}$ at layers $\ell \in \mathcal{L}$, which are later used for conditioned retrieval and scoring in Sec.~\ref{sec:retrieval}.
These stored features form a multi-layer memory bank of anomaly-free representations that enables the task-specific training-free advantage of RAD.

\subsection{Global Context Conditioned Retrieval}
\label{sec:cls-retrieval-subsec}
A naive patch-retrieval scheme would compare each test patch against all normal patches in the memory bank, regardless of their semantic class. In the MUAD setting, this leads to two problems: (i) patches from semantically unrelated classes can become nearest neighbors, introducing severe semantic mismatch, and (ii) dense patch matching over the entire memory bank becomes increasingly expensive as the number of stored anomaly-free images grows. To address these issues, RAD first performs a coarse global retrieval step that filters the memory to a small set of semantically compatible reference images.

As shown in Figure~\ref{fig:Figure_fw}, for each anomaly-free global descriptor $g(x_i)$ in the memory bank, given a test image $x$, we compute the cosine similarity between them as $\label{eq:cls-retrieval}
\mathrm{sim}_i(x)
:=
\big\langle g(x),\, g(x_i) \big\rangle.$
We then define $\mathcal{N}_K(x)
:=
\operatorname{TopK}\big(\{\mathrm{sim}_i(x)\}_{i=1}^{N}\big),$
as the indices of the $K$ most similar anomaly-free images to $x$.
All subsequent patch comparisons at any layer $\ell \in \mathcal{L}$ are restricted to images in $\mathcal{N}_K(x)$, enforcing semantic consistency and reducing the cost of dense patch matching.

We use only the last-layer \texttt{[CLS]} token as $g(x_i)$ because it provides a semantically rich global summary of the image, optimized during pre-training to capture high-level category and scene information. In MUAD, this semantic descriptor lets us retrieve a few globally similar references from a heterogeneous memory bank, suppressing features from unrelated classes before patch-level matching. Low- and mid-level cues are already handled by the multi-layer patch memory, so adding additional global tokens from earlier layers would increase complexity without clear benefit.

\subsection{Patch-level Conditioned Retrieval and Scoring}
\label{sec:retrieval}

On the global context retrieved subset $\mathcal{N}_K(x)$, RAD performs
patch-level conditioned retrieval and scoring independently based on multi-layer features.
For clarity, we first describe the procedure for a layer
$\ell \in \mathcal{L}$; the same steps are then applied in parallel to all
layers in $\mathcal{L}$, and the resulting scores are fused across layers.

\paragraph{Spatial conditioning via local neighborhoods.}

In anomaly detection, spatial location often provides a useful cue: different regions of an object or scene tend to correspond to different functional parts (e.g., screw vs.\ background), while small shifts, pose changes, and viewpoint variations may still occur.
Matching a query patch against anomaly-free patches from arbitrary locations, even within the global-context retrieved set $\mathcal{N}_K(x)$, can therefore pair visually plausible but functionally unrelated regions, which may distort anomaly scores.
Thus, we use patch location as a coarse locality prior.

To implement this inductive bias, RAD applies spatial conditioning through local neighborhoods.
Instead of requiring rigid alignment, this design follows the intuition of local receptive fields: nearby grid locations are more likely to share similar functional semantics, while a neighborhood radius $\rho$ allows tolerance to small misalignments and viewpoint changes.
This preserves coarse spatial semantics without forcing each patch to match only the exact same grid position.

Formally, we view a test image $x$ as an $H \times W$ grid of patches and let
$t \leftrightarrow (h_t, w_t)$ index patch $t$.
For each layer $\ell$, the memory bank stores anomaly-free patch embeddings
$\{z^{(\ell)}_n\}$, where each $z^{(\ell)}_n$ comes from a patch at spatial
coordinates $(h_n, w_n)$ in an anomaly-free image.
Given a neighborhood radius $\rho \in \mathbb{Z}_{\ge 0}$, define $\mathcal{P}_\rho(t)
:=
\Big\{ n : \big\|(h_n, w_n) - (h_t, w_t)\big\|_\infty \le \rho \Big\}$ as the set of anomaly-free patches within an $\ell_\infty$-ball of radius $\rho$
around $t$.
For layer $\ell$ and retrieved image indices $\mathcal{N}_K(x)$, we then form
the position-aware candidate set of $\ell_2$-normalized anomaly-free patch embeddings
\begin{equation}
\label{eq:cand-set}
\mathcal{M}_\ell(x,t)
:=
\Big\{ z^{(\ell)}_{n} : n \in \mathcal{P}_\rho(t),\ i(n) \in \mathcal{N}_K(x) \Big\},
\end{equation}
where $i(n)$ is the index of the image from which patch $n$ is extracted.
Thus $\mathcal{M}_\ell(x,t)$ restricts patch matching to candidates that are globally compatible and locally plausible under this coarse spatial prior.

\paragraph{Multi-layer Anomaly Scoring.}
Given the globally and spatially conditioned candidate set $\mathcal{M}_\ell(x,t)$, RAD assigns each test patch an anomaly score by measuring how well it is supported by retrieved features.
For a layer $\ell$ test patch embedding $z^{(\ell)}_{t}(x)$ at location $t$ in image $x$, we define the layer-wise anomaly score via 1-NN cosine dissimilarity:
\begin{equation}
\label{eq:patch-energy-cos}
S_\ell(x,t)
:= 1 - \max_{z \in \mathcal{M}_\ell(x,t)}
\left\langle z^{(\ell)}_{t}(x), z \right\rangle.
\end{equation}
If a test patch is well supported by retrieved features, it admits a close neighbor in $\mathcal{M}_\ell(x,t)$ and thus a small $S_\ell(x,t)$, whereas patches that deviate from the local anomaly-free distribution yield larger $S_\ell(x,t)$.

To aggregate evidence across layers, we adopt a simple weighted fusion and combine the layer-wise scores as
\begin{equation}
S(x,t)
:=
\sum_{\ell \in \mathcal{L}} w_\ell \, S_\ell(x,t),
\quad
w_\ell \ge 0,\ \sum_{\ell \in \mathcal{L}} w_\ell = 1.
\end{equation}
Here, $w_\ell$ denotes the layer-fusion weight.

Each layer thus contributes as an expert in its own feature space, and the fused patch scores $S(x,t)$ are finally upsampled to pixel space for localization, while image-level anomaly scores are obtained by pooling.

\subsection{Computational Efficiency Analysis}
\label{sec:efficiency}
Let $P$ denote the number of patch tokens per image. A naive dense patch-retrieval scheme compares $P$ test patches against the $NP$ stored patch features, leading to $O(NP^2)$ patch comparisons. RAD instead first performs global retrieval over $N\times$ \texttt{[CLS]} descriptors with cost $O(N)$, and then conducts dense patch matching only within the top-$K$ retrieved images, reducing the dominant patch-level cost to $O(KP^2)$. Thus, the overall retrieval cost becomes $O(N+KP^2)$ rather than $O(NP^2)$, with $K \ll N$ in practice.
\section{Why is Retrieval Better?}
\label{sec:why-retrieval}
We now explain why, for the same encoder and training data, RAD avoids the fidelity--stability dilemma analyzed in Sec.~\ref{sec:theory}, and why its retrieval score is maximal within a class of stable feature-space scores.

\paragraph{Fidelity.}
Encoder--decoder methods attempt to reconstruct anomaly-free patch embeddings
through a parametric, bottlenecked mapping, which inevitably incurs
function-approximation error on the training embeddings due to information loss
(Sec.~\ref{sec:theory}).
In contrast, RAD is task-specific training-free as it stores all anomaly-free embeddings in a
memory bank and defines the retrieval score as the distance to this empirical support.


Let $\gamma$ denote the set of stored anomaly-free embeddings, and define
\begin{equation}
S_{\mathrm{ret}}(z) := d_\gamma(z) := \min_{u \in \gamma} \|z-u\|,
\end{equation}
where $\|\cdot\|$ is the feature-space norm used for retrieval.
Then for every stored normal embedding $z_i \in \gamma$ we have
$S_{\mathrm{ret}}(z_i) = 0$, since its nearest neighbor is itself.
Thus retrieval attains zero empirical distance on the stored anomaly-free embeddings.

More importantly, $S_{\mathrm{ret}}$ is the canonical distance-to-set score.
Let $\mathcal{F}_\gamma$ be the class of non-negative, $1$-Lipschitz feature-space
scores that vanish on $\gamma$. For any $S\in\mathcal{F}_\gamma$, we have
$S(z)\le d_\gamma(z)=S_{\mathrm{ret}}(z)$, so retrieval is pointwise maximal within
this stable score class. Moreover, for anomalous--normal feature pairs $(A,N)$
drawn from a distribution $\pi$ with
$\mathbb{E}_{\pi}[d_\gamma(A)+d_\gamma(N)]<\infty$, and the expected score gap
\begin{equation}
J_\pi(S)=\mathbb{E}_\pi[S(A)-S(N)],
\end{equation}
we show in Appendix Sec.~\ref{sec:proof-retrieval} that
\begin{equation}
\sup_{S\in\mathcal{F}_\gamma} J_\pi(S)-J_\pi(S_{\mathrm{ret}})
\le
\mathbb{E}_\pi[d_\gamma(N)].
\end{equation}
Thus, when unseen normal features are close to the anomaly-free memory,
retrieval is near-optimal within this stable score class for expected
anomaly--normal separation. Empirically, on MVTec-AD, the mean distance
from an unseen normal test patch to its nearest anomaly-free memory patch is only
\textbf{0.024}, supporting the small-$\mathbb{E}_\pi[d_\gamma(N)]$ regime.
We further report image-level unseen-normal scores and false-positive rates in
Appendix Sec.~\ref{sec:unseen-normal}.

\begin{table*}[t]
\centering
\caption{Standard MUAD results.
Best results are highlighted in boldface, and the runner-up is underlined.}
\label{tab:muad_results}

\begingroup
\small                        
\setlength{\tabcolsep}{3.8pt}   
\renewcommand{\arraystretch}{1.15}
\setlength{\extrarowheight}{0.3ex}

\begin{tabular}{l c|cc|cc|cc}
\specialrule{1.2pt}{0pt}{2pt}

\multicolumn{2}{c|}{\textbf{Dataset} $\rightarrow$} &
\multicolumn{2}{c|}{MVTec-AD} &
\multicolumn{2}{c|}{VisA} &
\multicolumn{2}{c}{Real-IAD} \\
\specialrule{0.6pt}{2pt}{2pt}

\multicolumn{2}{c|}{\textbf{Metric} $\rightarrow$} &
\multicolumn{3}{c}{Image-level (I-AUROC/I-AP/I-$F_1$-max)} &
\multicolumn{3}{c}{Pixel-level (P-AUROC/P-AP/P-$F_1$-max/AUPRO)} \\
\specialrule{0.6pt}{2pt}{2pt}

\multicolumn{2}{c|}{\textbf{Method} $\downarrow$}  &
\multicolumn{1}{c}{Image-level} & \multicolumn{1}{c|}{Pixel-level} &
\multicolumn{1}{c}{Image-level} & \multicolumn{1}{c|}{Pixel-level} &
\multicolumn{1}{c}{Image-level} & \multicolumn{1}{c}{Pixel-level} \\
\specialrule{0.6pt}{2pt}{2pt}

\multicolumn{2}{c|}{RD4AD  \tiny \textit{CVPR'22}} &
94.6/96.5/95.2 & 96.1/48.6/53.8/91.1 &
92.4/92.4/89.6 & 98.1/38.0/42.6/91.8 &
82.4/79.0/73.9 & 97.3/25.0/32.7/89.6 \\

\multicolumn{2}{c|}{UniAD \tiny \textit{NIPS'22}}&
96.5/98.8/96.2 & 96.8/43.4/49.5/90.7 &
88.8/90.8/85.8 & 98.3/33.7/39.0/85.5 &
83.0/80.9/74.3 & 97.3/21.1/29.2/86.7 \\

\multicolumn{2}{c|}{SimpleNet \tiny \textit{CVPR'23}}&
95.3/98.4/95.8 & 96.9/45.9/49.7/86.5 &
87.2/87.0/81.8 & 96.8/34.7/37.8/81.4 &
57.2/53.4/61.5 & 75.7/2.8/6.5/39.0 \\

\multicolumn{2}{c|}{DeSTSeg \tiny \textit{CVPR'23}}&
89.2/95.5/91.6 & 93.1/54.3/50.9/64.8 &
88.9/89.0/85.2 & 96.1/39.6/43.4/67.4 &
82.3/79.2/73.2 & 94.6/37.9/41.7/40.6 \\

\multicolumn{2}{c|}{GeneralAD \tiny \textit{ECCV'24}}&
95.4/98.5/95.9 & 97.1/46.0/50.6/87.0 &
87.4/87.3/82.0 & 96.5/35.3/38.7/81.9 &
57.5/53.6/61.6 & 76.5/4.8/7.9/40.8 \\

\multicolumn{2}{c|}{DiAD \tiny \textit{AAAI'24}}&
97.2/99.0/96.5 & 96.8/52.6/55.5/90.7 &
86.8/88.3/85.1 & 96.0/26.1/33.0/75.2 &
75.6/66.4/69.9 & 88.0/2.9/7.1/58.1 \\

\multicolumn{2}{c|}{MambaAD \tiny \textit{NIPS'24}}&
98.6/99.6/97.8 & 97.7/56.3/59.2/93.1 &
94.3/94.5/89.4 & 98.5/39.4/44.0/91.0 &
86.3/84.6/77.0 & 98.5/33.0/38.7/90.5 \\

\multicolumn{2}{c|}{OmiAD \tiny \textit{ICML'25}}&
98.8/99.7/98.5 & 97.7/52.6/56.7/93.2 &
95.3/96.0/91.2 & 98.9/40.4/44.1/89.2 &
\underline{90.1}/\underline{88.6}/\underline{82.8} & \underline{98.9}/37.7/42.6/93.1 \\

\multicolumn{2}{c|}{Dinomaly \tiny \textit{CVPR'25}}&
\textbf{99.6}/\textbf{99.8}/\textbf{99.0} &
\underline{98.4}/\underline{69.3}/\underline{69.2}/\underline{94.8} &
\textbf{98.7}/\textbf{98.9}/\textbf{96.2} &
\underline{98.7}/\underline{53.2}/\underline{55.7}/\underline{94.5} &
89.3/86.8/80.2 &
98.8/\underline{42.8}/\underline{47.1}/\underline{93.9} \\

\specialrule{0.6pt}{2pt}{3pt}

\rowcolor{blue!10}
\multicolumn{2}{c|}{\textbf{RAD (Ours)}} &
\textbf{99.6}/\textbf{99.8}/\textbf{99.0} &
\textbf{98.5}/\textbf{75.6}/\textbf{71.3}/\textbf{94.9} &
\underline{97.5}/\underline{97.3}/\underline{95.2} &
\textbf{99.1}/\textbf{55.3}/\textbf{57.5}/\textbf{94.6} &
\textbf{92.0}/\textbf{89.9}/\textbf{83.0} &
\textbf{99.1}/\textbf{53.7}/\textbf{54.2}/\textbf{96.1} \\

\specialrule{1.2pt}{2pt}{2pt}
\end{tabular}
\endgroup
\end{table*}

\paragraph{Inherited Stability.}
Retrieval also inherits a strong stability guarantee in feature space.
Because the encoder is frozen, the memory set $\gamma$ is fixed, and the
retrieval score is computed solely as a distance to this fixed set.
Hence, we obtain the following property.
\begin{lemma}[Non-expansiveness]
\label{lem:nonexpansive}
For any features $u,v \in \mathbb{R}^D$,
\begin{equation}
    \big|S_{\mathrm{ret}}(u) - S_{\mathrm{ret}}(v)\big|
    \le \|u - v\|.
\end{equation}
\end{lemma}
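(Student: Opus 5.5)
The plan is to prove the standard fact that a distance-to-set function is $1$-Lipschitz, using only the definition $S_{\mathrm{ret}}(z)=d_\gamma(z)=\min_{u'\in\gamma}\|z-u'\|$ and the triangle inequality of the feature-space norm. Since $\gamma$ is the finite set of stored anomaly-free embeddings (fixed because the encoder is frozen), the minimum is attained. For a general nonempty $\gamma$ I would instead work with an infimum and an $\epsilon$-argument.

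First, fix $u,v\in\mathbb{R}^D$ and take $w^\star\in\gamma$ attaining the minimum for $v$, i.e.\ $\|v-w^\star\|=d_\gamma(v)$. The triangle inequality then gives
\begin{equation}
S_{\mathrm{ret}}(u)\le \|u-w^\star\|\le \|u-v\|+\|v-w^\star\| = \|u-v\|+S_{\mathrm{ret}}(v),
\end{equation}
so $S_{\mathrm{ret}}(u)-S_{\mathrm{ret}}(v)\le\|u-v\|$. Second, exchanging the roles of $u$ and $v$ yields $S_{\mathrm{ret}}(v)-S_{\mathrm{ret}}(u)\le\|u-v\|$. Combining the two one-sided bounds gives the absolute-value statement of Lemma~\ref{lem:nonexpansive}.

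No step is a real obstacle. The only point needing care is existence of the minimizer, which is immediate for a finite memory bank. For an arbitrary nonempty $\gamma$ I would choose $w_\epsilon\in\gamma$ with $\|v-w_\epsilon\|\le d_\gamma(v)+\epsilon$, run the same chain of inequalities, and let $\epsilon\to0$. I would also note that the argument holds for any norm, so it covers the cosine-based score of Eq.~\ref{eq:patch-energy-cos} on $\ell_2$-normalized features. There, $1-\langle z,u\rangle=\tfrac12\|z-u\|^2$, so the cosine score is a monotone function of $d_\gamma$. This remark is only interpretive, since the lemma itself concerns $S_{\mathrm{ret}}$ as defined.
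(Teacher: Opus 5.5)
Your proof is correct and follows the paper's own argument: take a nearest neighbour $a_v\in\gamma$ of $v$, bound $S_{\mathrm{ret}}(u)\le\|u-a_v\|\le\|u-v\|+S_{\mathrm{ret}}(v)$ by the triangle inequality, and exchange $u$ and $v$. The paper also notes that $S_{\mathrm{ret}}$ is a pointwise minimum of the finitely many $1$-Lipschitz maps $z\mapsto\|z-a\|$. Your $\epsilon$-extension to infinite $\gamma$ and the cosine remark are harmless additions. In fact, for unit-norm memory the cosine score $\min_{w\in\gamma}(1-\langle z,w\rangle)$ is itself $1$-Lipschitz in the Euclidean norm, because each map $z\mapsto 1-\langle z,w\rangle$ is affine with a unit-norm gradient.
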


\emph{Proof.}
Let $a_v \in \arg\min_{a\in\gamma} \|v-a\|$.
By the triangle inequality,
$S_{\mathrm{ret}}(u) \le \|u-a_v\|
\le \|u-v\| + \|v-a_v\|
= \|u-v\| + S_{\mathrm{ret}}(v)$,
which yields $S_{\mathrm{ret}}(u) - S_{\mathrm{ret}}(v) \le \|u-v\|$.
Exchanging $u$ and $v$ gives the claim.

Lemma~\ref{lem:nonexpansive} shows that retrieval does not amplify benign perturbations in the encoder features as small changes in $z$ translate to at most equally small changes in $S_{\mathrm{ret}}(z)$; i.e. the variation in anomaly score between any two features is upper bounded by their feature-space distance.
This stands in sharp contrast to reconstruction-based anomaly scoring in
Sec.~\ref{sec:theory}, where information loss in the bottleneck forces the
decoder to have large Jacobian norm (Lemma~\ref{lem:amplification}), so the
residual score $S_{\mathrm{rec}}$ inevitably exhibits decoder amplification. The full proofs are presented in Appendix Sec.~\ref{sec:proof-retrieval}.

\paragraph{Discussion.}
Together, these properties explain why RAD avoids the fidelity--stability
dilemma faced by encoder--decoder architectures.
On the one hand, it attains \emph{fidelity} by using the canonical distance-to-set score on the empirical set of anomaly-free embeddings, which assigns zero score to stored normal embeddings and is pointwise maximal within the stable score class $\mathcal{F}_\gamma$.
Moreover, when unseen normal features remain close to the anomaly-free memory, this score is near-optimal within $\mathcal{F}_\gamma$ for expected anomaly--normal separation.
On the other hand, it maintains \emph{stability} by operating with this non-expansive nearest-neighbor score in frozen feature space, rather than through a high-gain decoder.
In this sense, retrieval-based scoring preserves empirical fidelity while controlling score stability, providing a principled way to avoid the decoder-induced fidelity--stability dilemma.

\section{Experiments and Analysis}\label{sec:exp&analysis}
\subsection{Experimental Setup}
\paragraph{Datasets.} Our empirical analysis is performed on four widely adopted multi-class anomaly detection (MUAD) datasets: \textbf{MVTec-AD}~\cite{bergmann2019mvtec}, \textbf{VisA}~\cite{zou2022spot}, \textbf{Real-IAD}~\cite{wang2024real} and \textbf{3D-ADAM}~\cite{mchard20253d}. 
Specifically, \textbf{MVTec-AD} comprises 15 texture and object categories with high-resolution, lab-style imagery and localized visually salient defects. \textbf{VisA} covers 12 everyday product categories with more cluttered backgrounds and subtle fine-grained anomalies, stressing generalization beyond controlled setups. \textbf{Real-IAD} further scales to a large, real-production benchmark (151,230 samples) with substantial intra-class variation and naturally occurring defects. Finally, \textbf{3D-ADAM} targets 3D anomaly detection in advanced manufacturing, providing RGB+3D multi-camera scans of machined parts in a factory environment with diverse surface defects.

\paragraph{Metrics.} Consistent with previous studies~\cite{he2024mambaad,he2024diffusion,fengomiad,guo2025dinomaly}, we use seven metrics for evaluation and follow their metric computation protocols. Image-level anomaly detection performance is assessed using the Area Under the Receiver Operating Curve (AUROC), Average Precision (AP), and the $F_1$-score at the optimal threshold ($F_1$-max). For pixel-level anomaly localization, we utilize AUROC, AP, $F_1$-max, and the Area Under the Per-Region-Overlap (AUPRO).
\paragraph{Implementation Details.} Our proposed RAD adopts a frozen DINOv3~\cite{simeoni2025dinov3} ViT-B/16 encoder. We extract patch tokens from layers $4, 7, 10$, and $12$ to perform multi-level feature matching. The neighborhood radius $\rho$ is set to $1$ for MVTec-AD, $2$ for VisA, $0$ for Real-IAD and $1$ for 3D-ADAM to accommodate different levels of viewpoint changes. We use uniform layer fusion weight $w_\ell = 0.25$ in all experiments. Detailed settings are in Appendix Sec.~\ref{sec:id}. Appendix Section~\ref{sec:sensitivity} further shows that RAD remains robust across different hyperparameter settings.

\subsection{Standard Multi-class Anomaly Detection.}
We benchmark RAD against a broad set of state-of-the-art multi-class anomaly detection (MUAD) baselines under the standard MUAD setting~\cite{guo2025dinomaly}, including reconstruction-based methods RD4AD~\cite{deng2022anomaly}, UniAD~\cite{you2022unified}, DiAD~\cite{he2024diffusion}, MambaAD~\cite{he2024mambaad}, Dinomaly~\cite{guo2025dinomaly}, OmiAD~\cite{fengomiad}, as well as embedding-based approaches SimpleNet~\cite{liu2023simplenet},  DeSTSeg~\cite{zhang2023destseg} and GeneralAD~\cite{strater2024generalad}. As summarized in Table~\ref{tab:muad_results}, RAD achieves state-of-the-art performance across all three benchmarks, with particularly strong gains on pixel-level localization.
On the most widely used MVTec-AD, RAD delivers its main improvements on localization, outperforming the previous best (Dinomaly) by \textbf{6.3$\uparrow$/2.1$\uparrow$} on \textbf{P-AP/P-$F_1$-max}, while remaining on par with the strongest baselines at the image level.
On VisA, RAD achieves the best pixel-level performance, outperforming the strongest baseline (Dinomaly) by \textbf{2.1$\uparrow$/1.8$\uparrow$} on \textbf{P-AP/P-$F_1$-max}, while remaining competitive at the image level.
On the most challenging multi-view Real-IAD, RAD further extends these gains, achieving \textbf{1.9$\uparrow$/1.3$\uparrow$/0.2$\uparrow$} improvements on image-level metrics and \textbf{0.2$\uparrow$/10.9$\uparrow$/7.1$\uparrow$/2.2$\uparrow$} on pixel-level metrics. We provide a same-backbone comparison with Dinomaly in Appendix Sec.~\ref{sec:same-backbone} to further separate the contribution of retrieval from backbone strength. Per-class performances and qualitative visualizations are presented in Appendix Sec.~\ref{sec:pre-category} and Sec.~\ref{sec:visualization}.

Although our study primarily targets conventional 2D MUAD, we also report a preliminary evaluation on the 3D benchmark 3D-ADAM using only its RGB modality; this dataset exhibits substantially larger geometric and appearance variation than standard 2D benchmarks.
In this challenging regime, RAD achieves the best performance on all seven metrics, outperforming upgraded Dinomaly by \textbf{6.5$\uparrow$/3.3$\uparrow$/2.8$\uparrow$} on \textbf{I-AUROC/I-AP/I-$F_1$-max} and by \textbf{0.1$\uparrow$/2.3$\uparrow$/2.4$\uparrow$/0.5$\uparrow$} on \textbf{P-AUROC/P-AP/P-$F_1$-max/P-AUPRO}.
More details are presented in Appendix Sec.~\ref{sec:adam}.




\begin{figure}[t]
    \centering
    \includegraphics[width=1\linewidth]{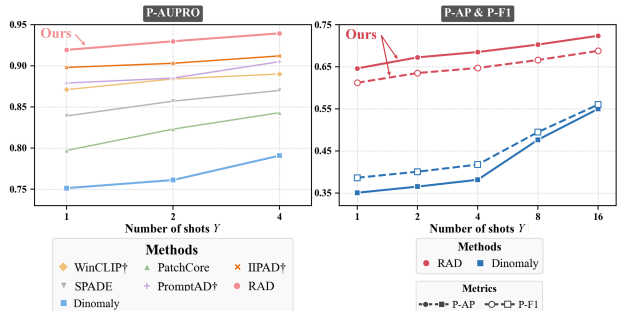}
    \caption{Few-shot results on MVTec-AD.  $\dag$ indicates few-shot-specific method.}
    \label{fig:fewshot_scaling}
\end{figure}

\subsection{Few-Shot Multi-class Anomaly Detection}
To study whether a task-specific training-free method can compete with or even surpass dedicated few-shot MUAD approaches, we evaluate our RAD under the multi-class MVTec-AD protocol of IIPAD~\cite{lvone}, where each category is limited to at most $Y$ anomaly-free references.

As shown in Fig.~\ref{fig:fewshot_scaling}, RAD is highly effective in this setting. On Pixel AUPRO at $Y \in \{1,2,4\}$, RAD consistently outperforms the unified baseline Dinomaly by \textbf{16.8$\uparrow$/16.9$\uparrow$/14.9$\uparrow$}, and also surpasses all few-shot-specific baselines. In particular, it improves over the strongest few-shot method IIPAD~\cite{lvone} by \textbf{2.1$\uparrow$/2.7$\uparrow$/2.7$\uparrow$} at $Y \in \{1,2,4\}$, with larger margins over PromptAD~\cite{li2024promptad}, WinCLIP~\cite{jeong2023winclip}, SPADE~\cite{cohen2020sub}, and PatchCore~\cite{roth2022towards}. More comparison results are presented in Appendix Sec.~\ref{sec:few-shot}.

To probe localization quality along the full few-shot trajectory, we additionally track P-AP and P-$F_1$ for $Y{=}1\rightarrow16$, metrics omitted by existing few-shot-specific baselines, under the same few-shot protocol using Dinomaly as the MUAD baseline. RAD maintains large margins throughout scaling, with \textbf{29.5$\uparrow$/25.7$\uparrow$} P-AP/P-$F_1$ at $Y{=}1$ and consistently elevated curves as $Y$ increases.


Overall, these results show that RAD is a plug-and-play few-shot detector. Even when each class has only a few anomaly-free samples and no few-shot training, it exceeds specialized few-shot methods while providing strong, stable localization across the entire few-shot range.


\subsection{Data Scaling in Cold-Start Anomaly Detection}
\label{sec:data_sacling}

In this section, we analyze how scaling anomaly-free data affects MUAD under cold-start conditions on MVTec-AD, using Dinomaly~\cite{guo2025dinomaly} as the state-of-the-art baseline. We focus on practically relevant scenarios with limited initial data and incomplete class coverage. In contrast, most existing evaluations assume a fixed, relatively clean anomaly-free set, leaving progressive scaling of anomaly-free data largely underexplored. We consider three complementary scaling settings: \emph{single-class scaling}, which follows the standard per-category protocol and isolates per-class sample efficiency; \emph{multi-class scaling}, which pools anomaly-free images across categories to assess robustness under mixed deployments; and \emph{incremental-class scaling}, which mimics deployment-time onboarding of a new class as its anomaly-free evidence grows. Together, these regimes disentangle (i) per-class sample efficiency, (ii) robustness under multi-class mixture, and (iii) behavior under continual class expansion.  Table~\ref{tab:scaling} summarizes results.

\paragraph{Single-class Scaling.}
This setting tests whether RAD’s gains are merely artifacts of cross-class mixture or persist in the standard single-class setting. Each category is evaluated independently using only its anomaly-free in-class data, and results are averaged across categories, removing cross-category interference and isolating the effect of data scarcity.

RAD remains robust under severe data scarcity. When the available anomaly-free data ratio is $\tau{=}0.05$, image-level detection is already nearly saturated; increasing $\tau$ from $0.05$ to $1.00$ raises Image AUROC by only \textbf{2.5$\uparrow$}, whereas pixel-level localization continues to benefit from additional anomaly-free data, with P-AP gaining \textbf{5.1$\uparrow$}. RAD consistently outperforms Dinomaly, and the gains are dominated by localization: at $\tau{=}0.05$, P-AP, P-$F_1$, and AUPRO improve by \textbf{15.9$\uparrow$}, \textbf{11.2$\uparrow$}, and \textbf{7.1$\uparrow$}, respectively.

These results indicate that, even when category mixture is removed and each class is evaluated independently, the same pattern holds: image-level metrics are nearly saturated, but localization continues to benefit from additional anomaly-free evidence, and RAD provides robust, persistent localization gains across the full single-class scaling range.

\begin{table}[!t]
\centering
\caption{Data scaling on MVTec-AD with three settings mentioned in Section \ref{sec:data_sacling}. $\tau$ denotes the ratio of anomaly-free data usage.}
\scriptsize
\setlength{\tabcolsep}{3.0pt}
\resizebox{\linewidth}{!}{
\begin{tabular}{l |c| c c c c c c c}
\toprule
\multirow{2}{*}{Method} & \multirow{2}{*}{$\tau\uparrow$} &
\multicolumn{3}{c}{Image-level} & \multicolumn{4}{c}{Pixel-level} \\
\cmidrule(lr){3-5} \cmidrule(lr){6-9}
& & AUROC & AP & $F_1$-max & AUROC & AP & $F_1$-max & AUPRO \\
\bottomrule

\specialrule{0em}{1pt}{1pt}
\rowcolor{morandiL}
\multicolumn{9}{c}{\textbf{Single-class Scaling}} \\ 
\midrule
\multirow{5}{*}{Dinomaly}
& 0.05  & 92.3 & 95.7 & 93.7 & 92.6 & 54.7 & 56.2 & 87.4 \\
& 0.20  & 94.0 & 96.9 & 95.5 & 94.8 & 58.0 & 58.9 & 89.7 \\
& 0.40& 97.0 & 98.7 & 97.4 & 97.0 & 64.5 & 64.5 & 92.8 \\
& 0.80& 99.1 & 99.6 & 98.4 & 97.8 & 66.7 & 66.7 & 93.8 \\
& 1.00 & 99.3 & 99.7 & 98.8 & 98.0 & 67.4 & 67.3 & 94.2 \\
\midrule
\multirow{5}{*}{RAD}
& 0.05  & 97.1 & 98.5 & 97.2 & 98.0 & 70.6 & 67.4 & 94.5 \\
& 0.20  & 98.8 & 99.5 & 98.3 & 98.3 & 74.1 & 70.1 & 95.4 \\
& 0.40& 99.2 & 99.7 & 98.7 & 98.4 & 74.9 & 70.8 & 95.7 \\
& 0.80& 99.5 & 99.8 & 98.8 & 98.5 & 75.6 & 71.4 & 95.9 \\
& 1.00 & 99.6 & 99.8 & 98.9 & 98.5 & 75.7 & 71.4 & 95.8 \\
\bottomrule

\specialrule{0em}{1pt}{1pt}
\rowcolor{morandiM}
\multicolumn{9}{c}{\textbf{Multi-class Scaling}} \\ 
\midrule
    \multirow{6}{*}{Dinomaly} & 0.05 & 90.5 & 94.8 & 93.4 & 93.0 & 51.1 & 52.1 & 87.0 \\
     & 0.10 & 95.3 & 97.9 & 96.2 & 96.3 & 61.7 & 61.5 & 92.0 \\
     & 0.20 & 98.6 & 99.5 & 98.3 & 97.8 & 67.4 & 66.8 & 94.0 \\
     & 0.40 & 99.1 & 99.6 & 98.7 & 98.1 & 69.1 & 68.5 & 94.4 \\
     & 0.80 & 99.5 & 99.8 & 99.1 & 98.3 & 69.2 & 69.0 & 94.5 \\
     & 1.00 & 99.6 & 99.8 & 99.0 & 98.3 & 69.0 & 68.9 & 94.5 \\
    \midrule
    \multirow{6}{*}{RAD}    &  0.05 & 95.4 & 97.5 & 96.6 & 97.7 & 67.9 & 64.7 & 93.9 \\
     & 0.10 & 97.7 & 98.8 & 97.5 & 98.2 & 71.7 & 68.1 & 95.1 \\
     & 0.20 & 98.6 & 99.4 & 98.3 & 98.3 & 73.7 & 69.9 & 95.5 \\
     & 0.40 & 99.3 & 99.7 & 98.6 & 98.4 & 74.7 & 70.7 & 95.7 \\
     & 0.80 & 99.6 & 99.8 & 98.9 & 98.5 & 75.5 & 71.3 & 95.9 \\
     & 1.00 & 99.6 & 99.8 & 99.0 & 98.5 & 75.6 & 71.3 & 95.9 \\
\bottomrule

\specialrule{0em}{1pt}{1pt}
\rowcolor{morandiD}
\multicolumn{9}{c}{\textbf{Incremental-class Scaling}} \\ 
\midrule
\multirow{6}{*}{Dinomaly}
& 0.01 & 79.8 & 76.4 & 71.0 & 78.7 & 33.1 & 35.0 & 57.7 \\
& 0.02 & 78.8 & 74.8 & 68.7 & 81.1 & 35.5 & 36.4 & 60.8 \\
& 0.10 & 95.9 & 92.7 & 90.0 & 88.6 & 48.7 & 45.9 & 69.5 \\
& 0.20 & 98.1 & 96.2 & 94.0 & 90.4 & 53.3 & 50.6 & 71.6 \\
& 0.80 & 99.0 & 98.0 & 96.3 & 93.2 & 60.7 & 58.2 & 76.1 \\
& 1.00 & 98.8 & 97.6 & 96.3 & 93.5 & 61.6 & 59.1 & 76.2 \\
\midrule
\multirow{6}{*}{RAD}
& 0.01 & 82.3 & 79.2 & 70.9 & 93.0 & 53.7 & 51.9 & 74.9 \\
& 0.02 & 96.3 & 96.1 & 92.3 & 95.3 & 75.1 & 68.6 & 88.0 \\
& 0.10 & 99.0 & 98.7 & 94.0 & 96.9 & 79.3 & 71.5 & 90.3 \\
& 0.20 & 98.8 & 98.2 & 94.0 & 96.9 & 78.1 & 70.7 & 90.9 \\
& 0.80 & 99.5 & 99.2 & 96.3 & 97.4 & 80.6 & 72.4 & 92.6 \\
& 1.00 & 99.5 & 99.3 & 96.3 & 97.4 & 80.7 & 72.4 & 92.5 \\
\bottomrule
\end{tabular}
}
\label{tab:scaling}
\end{table}

\paragraph{Multi-class Scaling.}
We next pool anomaly-free images across categories and study how a multi-class detector behaves as the total anomaly-free samples increases. This setting evaluates how performance changes as normal coverage increases uniformly across all categories.

Fraction scaling mirrors the metric-scaling trend observed in the single-class setting. This indicates that the dominant challenge is refining a precise and stable \emph{localization} boundary rather than achieving coarse image-level detection. In the most data-constrained regime with $\tau{=}0.05$, RAD exhibits strong plug-and-play behavior and achieves clear gains over the strongest unified baseline Dinomaly, improving I-AUROC by \textbf{4.9$\uparrow$} and P-AP by \textbf{16.8$\uparrow$}. Even at $\tau{=}1.00$, RAD still retains advantages in localization, indicating consistent benefits across the full cold-start scaling range.


\paragraph{Incremental-class Scaling.}
How well can a deployed multi-class detector onboard a new class over time without re-training? To answer this, we fix a training set of four relatively challenging categories (\textit{carpet}, \textit{metal\_nut}, \textit{toothbrush}, \textit{leather}) and introduce a new class \textit{transistor} with a gradually increasing ratio $\tau$ of target-class anomaly-free images. 

At $\tau{=}0.01$, where only two \textit{transistor} anomaly-free images are available, RAD already attains $93.0$ P-AUROC (95.4\% of its full performance). Relative to Dinomaly, this corresponds to gains of \textbf{20.6$\uparrow$} in P-AP, and \textbf{17.2$\uparrow$} in AUPRO. These localization margins remain large throughout the scaling process. While image-level metrics for both methods become nearly saturated, RAD consistently lifts the entire pixel-level scaling curve for \textit{transistor}.


Overall, these results indicate that RAD (i) is highly sample-efficient for onboarding new classes, recovering most of its eventual performance from only a handful of anomaly-free examples, and (ii) continues to exploit additional anomaly-free data more effectively than a trained MUAD model, suggesting that task-specific training-free, retrieval-based anomaly detection remains advantageous than training-based approaches even as class-specific anomaly-free sample accumulates. We further validate this incremental-class scaling behavior on VisA in Appendix Sec.~\ref{sec:visa_incremental_scaling}, where RAD remains stronger than Dinomaly across all memory sizes under a challenging target-class scaling protocol.

\subsection{Impact of Encoder Quality and Resolution on RAD}
Since RAD is entirely task-specific training-free, its performance is driven by the quality of its frozen encoder and the fidelity of its inputs. A natural question is therefore: beyond increasing the amount of available anomaly-free data, can RAD benefit from stronger foundation encoders and higher input resolution?

We evaluate RAD with a range of frozen ViT foundation encoders in the MUAD setting, using pixel-level AUPRO (P-AUPRO) as the primary localization metric. With image resolution fixed at $224{\times}224$, the encoders exhibit a clear performance hierarchy. DINOv3~\cite{simeoni2025dinov3} yields the highest P-AUPRO among all foundations, followed by other contrastive and hybrid encoders such as DINOv2~\cite{oquab2023dinov2}, iBOT~\cite{zhou2021ibot}, DINO~\cite{caron2021emerging}, D-iGPT~\cite{ren2023rejuvenating}, and MoCov3~\cite{chen2021empirical}, and then by supervised DeiT~\cite{touvron2021training}. Encoders pre-trained with masked image modeling (MIM), including MAE~\cite{he2022masked}, BEiT~\cite{bao2021beit}, and BEiTv2~\cite{peng2022beit}, obtain the lowest P-AUPRO. This ordering closely mirrors their reported ImageNet~\cite{deng2009imagenet} linear probing strength, indicating that  RAD can directly inherit advances in generic visual representation learning: simply swapping to a stronger foundation encoder yields better anomaly localization, making RAD naturally compatible with future backbone improvements.


We next increase the image resolution to $448{\times}448$. P-AUPRO improves for almost all encoders, while the relative ranking across foundation encoders remains essentially unchanged.  Overall, these trends show that RAD’s anomaly localization ability is affected by the encoder’s ImageNet representation quality, while higher input resolution provides a mostly uniform boost without altering this dependence. Detailed results are presented in Appendix Sec.~\ref{sec:encoder}.

\subsection{Efficiency and Memory Compression}
\label{sec:efficiency}

We further examine the practical cost of RAD in terms of inference latency and memory footprint. On a single NVIDIA RTX 5090 GPU, RAD runs at \textbf{21.6 ms/image} on 3D-ADAM, corresponding to \textbf{46.3 FPS}. On MVTec-AD, the global \texttt{[CLS]} retrieval stage reduces inference latency from \textbf{293.7 ms/image} to \textbf{44.7 ms/image} by restricting dense patch matching to the top-$K$ retrieved references. The memory footprint requires \textbf{12.1/32.6/40.7/60.8 GB} on 3D-ADAM, MVTec-AD, VisA, and Real-IAD, respectively. Although this storage cost is higher than purely feed-forward models, it can be effectively reduced by memory compression.

On MVTec-AD, we evaluate a coreset-compressed variant of RAD. As shown in Table~\ref{tab:coreset}, coreset sampling reduces the memory bank from \textbf{32.6 GB} to \textbf{0.37 GB}, while causing only minor degradation in localization performance. Specifically, P-AUROC/P-AUPRO drop by only \textbf{0.44/0.15}, and P-AP/P-$F_1$-max remain close to the full-memory version. This suggests that coreset compression is a practical way to scale RAD to larger datasets with limited memory overhead.

\begin{table}[t]
  \centering
  \caption{Effect of coreset compression on RAD on MVTec-AD.}
  \label{tab:coreset}

  \begingroup
  \footnotesize
  \setlength{\tabcolsep}{4pt}
  \renewcommand{\arraystretch}{0.95}

  \resizebox{0.98\linewidth}{!}{%
  \begin{tabular}{lccccc}
    \toprule
    Method & Memory & P-AUROC & P-AP & P-$F_1$ & P-AUPRO \\
    \midrule
    RAD & 32.6 GB & 98.50 & 75.60 & 71.30 & 94.90 \\
    RAD + coreset & 0.37 GB & 98.06 & 73.86 & 70.24 & 94.75 \\
    \bottomrule
  \end{tabular}%
  }

  \endgroup
  \vspace{-4mm}
\end{table}

\section{Conclusion and Limitations}
In this work, we revisited the question of whether task-specific training is necessary for strong multi-class unsupervised anomaly detection (MUAD). We further introduced RAD, a task-specific training-free, retrieval-based MUAD framework that surpasses previous state-of-the-art MUAD methods. We hope that retrieval-based anomaly scoring will open up new opportunities for task-specific training-free anomaly detection research. However, RAD has several limitations. First, keeping a memory of patch features incurs an additional storage cost. Second, retrieval introduces extra latency compared to a single feed-forward pass at inference time. We view these costs as an invitation to explore more efficient storage and retrieval methods.



\section*{Impact Statement}
This paper presents work whose goal is to advance multi-class unsupervised anomaly detection (MUAD) with a task-specific training-free retrieval-based scoring framework that replaces domain-specific training with multi-level retrieval over a memory of normal features. Looking ahead, we hope that this retrieval-based perspective will pave the way for new task-specific training-free MUAD research and motivate the community to re-examine whether domain-specific training is truly necessary for strong anomaly detection. There are many potential societal consequences of our work, none of which we feel must be specifically highlighted here.

\bibliographystyle{icml2026}
\bibliography{main}

\newpage
\appendix
\onecolumn

\setcounter{figure}{0}
\setcounter{table}{0}
\renewcommand{\thefigure}{A\arabic{figure}}
\renewcommand{\thetable}{A\arabic{table}}

\section*{Appendix Overview}
This appendix provides supplementary details supporting the main manuscript,
organized as follows:

\begin{itemize}[leftmargin=*,label={}]
  \item \textemdash~\textbf{Sec.~A} provides full implementation details.
  \item \textemdash~\textbf{Sec.~B} provides more details about related works.
  \item \textemdash~\textbf{Sec.~C} provides proofs of our fidelity--stability dilemma analysis.
  \item \textemdash~\textbf{Sec.~D} provides proofs of our retrieval-based anomaly scoring analysis.
  \item \textemdash~\textbf{Sec.~E} reports additional results under multi-class few-shot settings.
  \item \textemdash~\textbf{Sec.~F} shows more details about the impact of encoder quality and resolution.
  \item \textemdash~\textbf{Sec.~G} reports and analyzes the results on the 3D-ADAM dataset.
  \item \textemdash~\textbf{Sec.~H} reports additional ablation results.
  \item \textemdash~\textbf{Sec.~I} provides a same-backbone comparison with Dinomaly.
  \item \textemdash~\textbf{Sec.~J} provides hyperparameter sensitivity analysis on $K$, $\rho$, and layer-fusion weights.
  \item \textemdash~\textbf{Sec.~K} analyzes unseen-normal generalization on the MVTec-AD dataset.
  \item \textemdash~\textbf{Sec.~L} reports additional incremental-class scaling results on the VisA dataset.
  \item \textemdash~\textbf{Sec.~M} reports per-category quantitative results on the MVTec-AD dataset, VisA dataset and Real-IAD dataset.
  \item \textemdash~\textbf{Sec.~N} shows qualitative visualizations on the MVTec-AD dataset, VisA dataset and Real-IAD dataset.
\end{itemize}

\section{Full Implementation Details}
\label{sec:id}
Our proposed RAD adopts a frozen DINOv3~\cite{simeoni2025dinov3} ViT-B/16 encoder. We extract patch tokens from layers $4, 7, 10$, and $12$ to perform multi-level feature matching. Input images are resized to $512^2$ followed by a $448^2$ center crop. The neighborhood radius $\rho$ is set to $1$ for MVTec-AD, $2$ for VisA, $0$ for Real-IAD and $1$ for 3D-ADAM to accommodate different levels of viewpoint changes. The number of nearest neighbors $K$ is set to $150$ for MVTec-AD, $900$ for VisA, $900$ for Real-IAD and $48$ for 3D-ADAM to accommodate varying dataset complexities. The mean of the top 1\% pixels in an anomaly map is used
as the image anomaly score. All experiments are conducted
with random seed=1 with cuda deterministic for invariable
weight initialization and batch order. Codes are implemented with Python 3.11 and PyTorch 2.7.0 cuda 12.8, and
run on an NVIDIA GeForce RTX5090 GPU (32GB).

\section{Related Work}
\label{sec:rw}

\paragraph{General Methods for Unsupervised Anomaly Detection} Here, we review representative approaches for unsupervised anomaly detection (UAD). Epistemic methods assume that networks react differently at inference time to seen and unseen inputs. Within this paradigm, pixel reconstruction methods train networks on anomaly-free images so that anomaly-free regions can be accurately reconstructed, whereas anomalous regions are expected to be poorly restored. Auto-encoders (AE)~\cite{zavrtanik2021reconstruction,hou2021divide}, variational auto-encoders (VAE)~\cite{liu2020towards,lu2023hierarchical}, and generative adversarial networks (GAN)~\cite{akcay2018ganomaly,yan2021learning} are commonly used for this purpose. However, pixel-level reconstruction may also restore previously unseen anomalies when they are visually similar to normal regions or only subtly deviate from them~\cite{deng2022anomaly}. To address this issue, feature reconstruction methods reconstruct features from pre-trained encoders instead of raw pixels~\cite{deng2022anomaly,yang2020dfr,you2022unified}, while freezing encoder parameters to avoid trivial solutions. In feature distillation~\cite{bergmann2020uninformed,salehi2021multiresolution}, a student network is trained from scratch to mimic the output features of a pre-trained teacher network given the same normal input images, under the same assumption that a student trained only on normal samples will specialize in normal-region features.

Beyond classical auto-encoder--type models, recent works exploit diffusion probabilistic models as stronger generative backbones for reconstruction-based UAD. Early diffusion-based approaches, such as AnoDDPM~\cite{wyatt2022anoddpm}, and DDAD~\cite{mousakhan2024anomaly}, employ denoising diffusion models either to synthesize anomalous samples or to generate pseudo-normal reconstructions from noisy inputs, and derive anomaly maps by contrasting input and denoised outputs. DiffAD~\cite{zhang2023unsupervised} further replaces the AE decoder with a latent diffusion model and introduces noisy condition embedding and interpolated channels to prevent direct copying of anomalous regions and to alleviate reconstruction ambiguity.

Pseudo-anomaly methods synthesize artificial defects on normal images to emulate anomalies, thereby converting UAD into supervised classification~\cite{li2021cutpaste} or segmentation problems~\cite{zavrtanik2021draem}. For example, CutPaste~\cite{li2021cutpaste} creates anomalous regions by randomly pasting cropped patches of normal images. DRAEM~\cite{zavrtanik2021draem} generates abnormal regions using noise as a mask and another image as the additive anomaly. DeTSeg~\cite{zhang2023destseg} adopts a similar anomaly-generation strategy and combines it with feature reconstruction. SimpleNet~\cite{liu2023simplenet} injects Gaussian noise into the pre-trained feature space to introduce anomalies. These approaches strongly depend on how well the synthesized anomalies approximate real ones, which limits their ability to generalize across datasets.

Finally, memory-based methods~\cite{defard2021padim,roth2022towards,bae2023pni,hyun2024reconpatch} store a large set of normal features (or their modeled distributions) extracted by networks pre-trained on large-scale datasets and compare them with test features at inference through nearest-neighbor search or density estimation in feature space. Representative approaches model patch-wise Gaussian statistics with Mahalanobis scoring~\cite{defard2021padim}, build compact prototype memories via coreset sampling~\cite{roth2022towards}, or refine the memory with more expressive probabilistic models and reconstruction-guided updates~\cite{bae2023pni,hyun2024reconpatch}. Together, these methods highlight the effectiveness of explicitly modeling feature-space memories for UAD and establish memory-based modeling as a strong foundation for accurate anomaly detection and localization.

\paragraph{Multi-Class Unsupervised Anomaly Detection}
While the above pipelines are often instantiated in a single-class regime, benchmarks with multiple object categories make it undesirable to train and maintain one detector per class. Multi-class UAD (MUAD) therefore considers learning a \emph{single} anomaly detector from the pooled anomaly-free data of all categories and reusing it for every class, typically without exploiting class labels during training. Because modeling a unified multi-class anomaly-free distribution is substantially harder than training separate one-class models, state-of-the-art MUAD methods rely on dedicated training pipelines that can be broadly grouped into encoder--decoder reconstruction, teacher--student distillation, memory-based matching, and diffusion-based generation.

Encoder--decoder reconstruction methods freeze a generic encoder and learn a shared decoder over all categories, detecting anomalies from reconstruction residuals in feature space. UniAD~\cite{you2022unified} is an early instantiation of this paradigm, and HVQ-Trans~\cite{lu2023hierarchical} further inserts a vector-quantized transformer bottleneck to suppress shortcut reconstructions while keeping a unified architecture. Subsequent works strengthen this reconstruction backbone: MambaAD~\cite{he2024mambaad} replaces the decoder with a Mamba-based state space model to better capture long-range dependencies across categories, and Dinomaly~\cite{guo2025dinomaly} shows that a carefully regularized reconstruction head on top of strong DINO features already yields state-of-the-art multi-class performance. Teacher--student MUAD methods adapt reverse-distillation pipelines such as Uninet~\cite{wei2025uninet} to the multi-class setting by training a single student network on mixed-category anomaly-free data to mimic a frozen teacher, using feature discrepancies for detection.

In parallel, memory-based MUAD techniques explicitly construct shared normal-feature memories instead of learning powerful decoders. CRAD~\cite{lee2024continuous} encodes anomaly-free features into a continuous grid-based memory that supports dense retrieval for all categories, ResAD~\cite{yao2024resad} models residual features within a shared hypersphere to reduce inter-class variation and improve class generalization, and OMAC~\cite{tan2024unsupervised} maintains dual memory banks with representative samples and a query mechanism so that a single model can handle all categories. Diffusion-based MUAD approaches, including DiAD and OmiAD~\cite{he2024diffusion,fengomiad}, leverage denoising diffusion models to synthesize pseudo-normal images or features conditioned on the test input, and sometimes distill the diffusion process into compact student networks for efficient deployment. Despite their architectural diversity, these MUAD pipelines share a common design philosophy: they all depend on task-specific training to approximate the multi-class anomaly-free distribution and to construct class-agnostic memories, leaving open whether a \emph{task-specific training-free} formulation can surpass these specialized architectures.

\paragraph{Few-Shot Multi-Class Unsupervised Anomaly Detection}
Pushing MUAD into an even more challenging regime, recent works study few-shot MUAD, where a single one-for-all detector must be adapted to multiple categories or domains given only a handful of normal exemplars per class. IIPAD~\cite{lvone} combines vision--language models with instance-induced prompt learning: instead of learning fixed prompts per category, it trains a class-shared prompt generator that produces instance-specific textual prompts guided by generic descriptions of normality and abnormality, together with a visual memory bank for retrieving similar features under the one-for-all paradigm. UniVAD~\cite{gu2025univad} further proposes a unified model for few-shot visual anomaly detection, leveraging a frozen foundation encoder and a unified feature memory to transfer across industrial, logical, and medical domains without domain-specific re-training. Complementary to these unified one-for-all designs, few-shot AD methods such as WinCLIP~\cite{jeong2023winclip} and PromptAD~\cite{li2024promptad} also leverage vision--language foundations and prompt learning, but are typically instantiated as separate detectors per dataset or category rather than a single scalable MUAD model.

\section{Proofs of Fidelity--Stability Analysis}
\label{sec:proof}

We collect here the detailed derivations underlying the discussion in
Sec.~\ref{sec:theory}. Throughout, all vector spaces are equipped with
the standard Euclidean inner product. We write $\|\cdot\|$ for the
Euclidean norm of vectors and $\|\cdot\|_{\mathrm{op}}$ for the
associated operator norm of linear maps (i.e., the spectral norm of a
matrix). For a linear map $L$, we denote by $\sigma_{\min}(L)$ and
$\sigma_{\max}(L)$ its smallest and largest singular values,
respectively.

\paragraph{Standing assumptions.}
Recall that the trained reconstruction map in feature space is
\begin{equation}
    R(z) \;\triangleq\; \Psi_{\theta^*}(B(z)), \qquad z \in \mathbb{R}^D,
\end{equation}
where $B:\mathbb{R}^D\to\mathbb{R}^d$ is the bottleneck and
$\Psi_{\theta^*}:\mathbb{R}^d\to\mathbb{R}^D$ is the decoder at the
trained parameters. During training and evaluation we only apply $R$ to
encoder features $z = \Phi(n)$ of image patches $n$, and we will often
write $R(n)$ as shorthand for $R(\Phi(n))$.

For notational simplicity, in what follows we drop the explicit
dependence on $\theta^*$ and write $\Psi := \Psi_{\theta^*}$, so that
$R = \Psi \circ B$ matches the shorthand used in
Sec.~\ref{sec:theory}.

For simplicity of analysis we assume that $B$ and $\Psi$ are
differentiable in a neighbourhood of the normal feature manifold, so
that $R = \Psi\circ B$ is differentiable there and all Jacobian matrices
are well-defined. For typical piecewise-linear architectures (e.g., ReLU
networks) this holds almost everywhere; our arguments are restricted to
points of differentiability on the normal manifold.

For an anomaly-free patch $n\in\mathcal{D}^{\mathrm{patch}}_{\mathrm{norm}}$,
recall that $\mathcal{V}_n\subset\mathbb{R}^D$ denotes a subspace of
benign feature variations around $\Phi(n)$. We view
$\mathcal{V}_n$ as a local linearization (tangent subspace) of the
normal feature manifold at $\Phi(n)$; in particular, it is
finite-dimensional, and in typical constructions its dimension is at
most $d$ so that it can be embedded by the bottleneck without collapsing
all directions.

\paragraph{Reconstruction error and amortized inverse.}
The reconstruction map in feature space is
$R(z) = \Psi(B(z))$, and for a patch $n$ we denote
$R(n) = R(\Phi(n))$. For each anomaly-free patch
$n\in\mathcal{D}^{\mathrm{patch}}_{\mathrm{norm}}$, we define the
\emph{reconstruction error} in feature space, in line with
Sec.~\ref{sec:theory}, as
\begin{equation}
    \varepsilon_{\mathrm{approx}}(n)
    \;\triangleq\;
    R(\Phi(n)) - \Phi(n),
\end{equation}
so that
\begin{equation}
    R(\Phi(n)) = \Phi(n) + \varepsilon_{\mathrm{approx}}(n),
    \qquad n \in \mathcal{D}_{\mathrm{norm}}^{\mathrm{patch}}.
\end{equation}
We call
\begin{equation}
    \alpha(n)\;\triangleq\;\big\|\varepsilon_{\mathrm{approx}}(n)\big\|
\end{equation}
the \emph{reconstruction gap} at patch $n$, and write
\begin{equation}
    \Delta_{\mathrm{app}}
    \;\triangleq\;
    \mathbb{E}_{n \sim
    \mathcal{D}^{\mathrm{patch}}_{\mathrm{norm}}} \alpha(n)
\end{equation}
for the dataset-average reconstruction gap.

Recall from Eq.~\eqref{eqa:residual_score} that the reconstruction-based
anomaly score is
$S_{\mathrm{rec}}(t) = \|\Phi(t) - R(t)\|$. For anomaly-free patches
$n\in\mathcal{D}^{\mathrm{patch}}_{\mathrm{norm}}$, we thus have
\begin{equation}
    \begin{aligned}
        S_{\mathrm{rec}}(n)
        &= \big\|\Phi(n) - R(\Phi(n))\big\| \\
        &= \big\|\Phi(n) - (\Phi(n) + \varepsilon_{\mathrm{approx}}(n))\big\| \\
        &= \big\|-\varepsilon_{\mathrm{approx}}(n)\big\| \\
        &= \big\|\varepsilon_{\mathrm{approx}}(n)\big\| \\
        &= \alpha(n).
    \end{aligned}
\end{equation}
Thus, on anomaly-free patches the reconstruction residuals are driven
entirely by the reconstruction error $\varepsilon_{\mathrm{approx}}$, and
$R$ behaves as a biased, amortized inverse of the bottlenecked encoder
on the normal feature manifold rather than an exact identity. The
magnitude $\alpha(n)$ and the geometry of
$\varepsilon_{\mathrm{approx}}(n)$ across $n$ govern how reconstruction
fidelity trades off against the stability of the residual score
$S_{\mathrm{rec}}(t)$.

\paragraph{Local fidelity and the Jacobian.}
We now justify the Jacobian-based form of
Assumption~\ref{ass:fidelity} used in Sec.~\ref{sec:theory}. For an
anomaly-free patch $n\in\mathcal{D}^{\mathrm{patch}}_{\mathrm{norm}}$,
recall that $\mathcal{V}_n\subset\mathbb{R}^D$ denotes a subspace of
benign feature variations around $\Phi(n)$. Define
\begin{equation}
    G(z) \;\triangleq\; R(z) - z,
\end{equation}
so that $R(z) = z + G(z)$ and, in particular,
$G(\Phi(n)) = R(\Phi(n)) - \Phi(n) = \varepsilon_{\mathrm{approx}}(n)$.

The geometric form of Assumption~\ref{ass:fidelity} states that there
exist $r>0$ and $\eta\in(0,1)$ such that, for all
$\delta\in\mathcal{V}_n$ with $\|\delta\|\le r$,
\begin{equation}
    \|R(\Phi(n)+\delta) - R(\Phi(n)) - \delta\|
    = \|G(\Phi(n)+\delta) - G(\Phi(n))\|
    \;\le\; \eta \|\delta\|.
    \label{eq:local_fid_G}
\end{equation}
In other words, $G$ is locally $\eta$-Lipschitz along benign directions
spanned by $\mathcal{V}_n$, and the reconstruction map $R$ is locally
near-identity along $\mathcal{V}_n$, up to relative error of order
$\eta$.

Assuming $R$ (and hence $G$) is differentiable at $\Phi(n)$, we can
differentiate \eqref{eq:local_fid_G} along directions in $\mathcal{V}_n$.
Let $v\in\mathcal{V}_n$ with $\|v\|=1$ and write $\delta = h v$. Then
\eqref{eq:local_fid_G} gives
\begin{equation}
    \frac{\|G(\Phi(n)+h v) - G(\Phi(n))\|}{|h|}
    \;\le\; \eta
    \qquad \text{for all sufficiently small $h$.}
\end{equation}
Taking the limit $h\to 0$ and using differentiability of $G$ at
$\Phi(n)$ yields
\begin{equation}
    \|J_G(\Phi(n)) v\| \;\le\; \eta
    \qquad \text{for all $v\in\mathcal{V}_n$ with $\|v\|=1$},
\end{equation}
which is equivalent to
\begin{equation}
    \big\|J_G(\Phi(n))\big|_{\mathcal{V}_n}\big\|_{\mathrm{op}}
    \;\le\; \eta.
\end{equation}

Next, note that $R(z) = z + G(z)$ implies
\begin{equation}
    J_R(\Phi(n)) = I + J_G(\Phi(n)),
\end{equation}
where $I$ is the identity on $\mathbb{R}^D$. For any unit vector
$v\in\mathcal{V}_n$ we have
\begin{equation}
    \begin{aligned}
        \big\|J_R(\Phi(n)) v\big\|
        &= \big\|(I + J_G(\Phi(n))) v\big\| \\
        &\ge \|v\| - \|J_G(\Phi(n)) v\|
        \qquad\text{(triangle inequality)} \\
        &\ge 1 - \eta,
    \end{aligned}
\end{equation}
using $\|v\|=1$ and $\|J_G(\Phi(n)) v\|\le \eta$.
Taking the minimum over all unit vectors $v\in\mathcal{V}_n$ gives
\begin{equation}
    \sigma_{\min}\big(J_R(\Phi(n))\big|_{\mathcal{V}_n}\big)
    = \min_{\substack{v\in\mathcal{V}_n \\ \|v\|=1}}
      \big\|J_R(\Phi(n)) v\big\|
    \;\ge\; 1 - \eta.
\end{equation}
Similarly, by the reverse triangle inequality we obtain the upper bound
\begin{equation}
    \big\|J_R(\Phi(n)) v\big\|
    \le \|v\| + \|J_G(\Phi(n)) v\|
    \le 1 + \eta,
\end{equation}
so that
\begin{equation}
    \sigma_{\max}\big(J_R(\Phi(n))\big|_{\mathcal{V}_n}\big)
    \;\le\; 1 + \eta.
\end{equation}
Thus, under differentiability, the local near-identity condition in
Assumption~\ref{ass:fidelity} is equivalent to the Jacobian-based
inequalities
\begin{equation}
    1 - \eta
    \;\le\;
    \sigma_{\min}\big(J_{R}(\Phi(n))\big|_{\mathcal{V}_n}\big)
    \le
    \sigma_{\max}\big(J_{R}(\Phi(n))\big|_{\mathcal{V}_n}\big)
    \;\le\; 1 + \eta,
\end{equation}
and we will use the lower bound
$\sigma_{\min}\big(J_{R}(\Phi(n))\big|_{\mathcal{V}_n}\big)\ge 1-\eta$
in the amplification analysis.

\paragraph{Chain rule and Jacobian factorization.}
Let $B:\mathbb{R}^D\to\mathbb{R}^d$ denote the bottleneck,
$\Psi:\mathbb{R}^d\to\mathbb{R}^D$ the decoder, and
$R = \Psi\circ B$ the feature reconstruction map. By the multivariate
chain rule, for any $z\in\mathbb{R}^D$,
\begin{equation}
    J_R(z) = J_\Psi\big(B(z)\big)\, J_B(z),
\end{equation}
where $J_B(z)\in\mathbb{R}^{d\times D}$ and
$J_\Psi(B(z))\in\mathbb{R}^{D\times d}$ are the Jacobian matrices of
$B$ and $\Psi$, respectively. In particular, at $z = \Phi(n)$ we obtain
\begin{equation}
    J_R(\Phi(n)) = J_\Psi\big(B(\Phi(n))\big)\, J_B\big(\Phi(n)\big).
\end{equation}

Restricting this identity to the subspace $\mathcal{V}_n$ yields
\begin{equation}
    J_R(\Phi(n))\big|_{\mathcal{V}_n}
    = J_\Psi\big(B(\Phi(n))\big)\,
      \Big(J_B\big(\Phi(n)\big)\big|_{\mathcal{V}_n}\Big),
\end{equation}
viewed as a linear operator from $\mathcal{V}_n$ to $\mathbb{R}^D$.
Concretely, if we fix an orthonormal basis of $\mathcal{V}_n$ and
identify $\mathcal{V}_n$ with $\mathbb{R}^k$ where
$k = \dim(\mathcal{V}_n)$, then $J_B(\Phi(n))|_{\mathcal{V}_n}$ can be
represented as a matrix in $\mathbb{R}^{d\times k}$ and
$J_R(\Phi(n))|_{\mathcal{V}_n}$ as a matrix in $\mathbb{R}^{D\times k}$,
with
\begin{equation}
    J_R(\Phi(n))|_{\mathcal{V}_n}
    =
    J_\Psi\big(B(\Phi(n))\big)\,
    J_B\big(\Phi(n)\big)|_{\mathcal{V}_n}.
\end{equation}
This identification justifies the use of standard singular value
inequalities for matrix products below.

\paragraph{A singular value inequality.}
We use the following standard inequality for singular values of matrix
products.

\begin{lemma}[singular value inequality]\label{lem:sv_ineq}
Let $A\in\mathbb{R}^{p\times q}$ and $B\in\mathbb{R}^{q\times r}$.
Then
\begin{equation}
    \sigma_{\min}(AB) \;\le\;
    \sigma_{\max}(A)\, \sigma_{\min}(B).
\end{equation}
\end{lemma}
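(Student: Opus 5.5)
The plan is to prove the inequality directly from the variational characterization of the extreme singular values, with the same convention the paper uses in the amplification analysis:
\[
\sigma_{\min}(M)=\min_{\|v\|=1}\|Mv\|,\qquad \sigma_{\max}(M)=\max_{\|v\|=1}\|Mv\|=\|M\|_{\mathrm{op}},
\]
where the minimum and maximum range over unit vectors in the \emph{input} space of $M$. For $AB$ and $B$, both are linear maps out of $\mathbb{R}^r$, so both minima are taken over the same unit sphere in $\mathbb{R}^r$. That shared domain is what makes the comparison possible.

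The key steps are as follows.

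\begin{enumerate}
\item Choose a witness for $\sigma_{\min}(B)$. By compactness of the unit sphere and continuity of $v\mapsto\|Bv\|$, there is a unit vector $v^\star\in\mathbb{R}^r$ with $\|Bv^\star\|=\sigma_{\min}(B)$. One can take $v^\star$ to be a right singular vector for the smallest singular value.
\item Test $AB$ on this same vector. Using the operator-norm bound $\|Ax\|\le\sigma_{\max}(A)\|x\|$ with $x=Bv^\star$, we get
\[
\|ABv^\star\|\le\sigma_{\max}(A)\,\|Bv^\star\|=\sigma_{\max}(A)\,\sigma_{\min}(B).
\]
\item Conclude. Since $\sigma_{\min}(AB)$ is the minimum of $\|ABv\|$ over all unit $v$, it is at most the value at $v^\star$, which gives $\sigma_{\min}(AB)\le\sigma_{\max}(A)\,\sigma_{\min}(B)$.
\end{enumerate}

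The only real obstacle is pinning down the convention for $\sigma_{\min}$ when matrices are non-square. If $\sigma_{\min}$ meant the smallest of the $\min(p,q)$ nonzero-indexed singular values rather than the min-over-unit-vectors quantity, the two sides could refer to different index sets. I will therefore state explicitly that $\sigma_{\min}$ is the variational quantity above. This is exactly how it is used for the restricted Jacobians $J_R(\Phi(n))|_{\mathcal{V}_n}$ and $J_B(\Phi(n))|_{\mathcal{V}_n}$, which share the domain $\mathcal{V}_n\cong\mathbb{R}^k$. Under this convention the degenerate case where $B$ has a nontrivial kernel is automatically covered: then $\sigma_{\min}(B)=0$, so $AB$ also kills $v^\star$ and both sides are $0$.

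Finally, I will remark how this feeds Lemma~\ref{lem:amplification}. Take $A=J_\Psi(B(\Phi(n)))$ and $B=J_B(\Phi(n))|_{\mathcal{V}_n}$. Combine the factorization of $J_R(\Phi(n))|_{\mathcal{V}_n}$ obtained from Eq.~\ref{eq:chainrule} with the bound $\sigma_{\min}(J_R(\Phi(n))|_{\mathcal{V}_n})\ge 1-\eta$ from Assumption~\ref{ass:fidelity}. Dividing by $\sigma_{\min}(J_B(\Phi(n))|_{\mathcal{V}_n})$ then gives the lemma. The division is legitimate because this quantity is nonzero: if it were zero, the inequality just proved would force $\sigma_{\min}(J_R(\Phi(n))|_{\mathcal{V}_n})=0$, contradicting $1-\eta>0$.
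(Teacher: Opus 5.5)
Your proof is correct and matches the paper's argument step for step: pick a unit vector $x_0$ with $\|Bx_0\|=\sigma_{\min}(B)$, bound $\|ABx_0\|\le\sigma_{\max}(A)\,\sigma_{\min}(B)$ via the operator norm, and conclude from the variational definition of $\sigma_{\min}(AB)$. Your explicit statement of the min-over-unit-vectors convention for non-square matrices, and your justification of the division step in the amplification lemma, make explicit what the paper leaves implicit or handles in its remark.
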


\begin{proof}
By definition,
\begin{equation}
    \sigma_{\min}(B) = \min_{\|x\|=1} \|B x\|.
\end{equation}
Thus there exists a unit vector $x_0$ such that
$\|B x_0\| = \sigma_{\min}(B)$. Then
\begin{equation}
    \|A B x_0\|
    \;\le\; \|A\|_{\mathrm{op}} \cdot \|B x_0\|
    = \sigma_{\max}(A)\, \sigma_{\min}(B).
\end{equation}
On the other hand,
$\sigma_{\min}(AB) = \min_{\|x\|=1} \|ABx\| \le \|AB x_0\|$.
Combining the two inequalities yields
\begin{equation}
    \sigma_{\min}(AB)
    \;\le\; \sigma_{\max}(A)\, \sigma_{\min}(B),
\end{equation}
as claimed.
\end{proof}

\paragraph{Proof of Lemma~\ref{lem:amplification}.}
We are now ready to prove the decoder amplification result. For an
anomaly-free patch $n\in\mathcal{D}^{\mathrm{patch}}_{\mathrm{norm}}$,
recall that $R = \Psi\circ B$ and
\begin{equation}
    J_R(\Phi(n))\big|_{\mathcal{V}_n}
    = J_\Psi\big(B(\Phi(n))\big)\,
      \Big(J_B\big(\Phi(n)\big)\big|_{\mathcal{V}_n}\Big).
\end{equation}
Applying Lemma~\ref{lem:sv_ineq} with
\begin{equation}
    A = J_\Psi\big(B(\Phi(n))\big),
    \qquad
    B = J_B\big(\Phi(n)\big)\big|_{\mathcal{V}_n},
\end{equation}
we obtain
\begin{equation}
    \sigma_{\min}\big(J_R(\Phi(n))\big|_{\mathcal{V}_n}\big)
    = \sigma_{\min}(AB)
    \;\le\;
    \sigma_{\max}\big(J_\Psi(B(\Phi(n)))\big)\,
    \sigma_{\min}\big(J_B(\Phi(n))\big|_{\mathcal{V}_n}\big).
\end{equation}
By Assumption~\ref{ass:fidelity} in its Jacobian form,
\begin{equation}
    \sigma_{\min}\big(J_R(\Phi(n))\big|_{\mathcal{V}_n}\big)
    \;\ge\; 1 - \eta.
\end{equation}
Combining the two inequalities gives
\begin{equation}
    1 - \eta
    \;\le\;
    \sigma_{\max}\big(J_\Psi(B(\Phi(n)))\big)\,
    \sigma_{\min}\big(J_B(\Phi(n))\big|_{\mathcal{V}_n}\big).
\end{equation}
Assuming
$\sigma_{\min}\big(J_B(\Phi(n))\big|_{\mathcal{V}_n}\big) > 0$, we can
divide both sides by this quantity to obtain
\begin{equation}
    \sigma_{\max}\big(J_\Psi(B(\Phi(n)))\big)
    \;\ge\;
    \frac{1-\eta}{
    \sigma_{\min}\big(J_B(\Phi(n))\big|_{\mathcal{V}_n}\big)}.
    \label{eq:amplification_bound}
\end{equation}
This is exactly the statement of Lemma~\ref{lem:amplification}.

\begin{remark}[Compatibility with strong compression]
The condition
$\sigma_{\min}\big(J_B(\Phi(n))\big|_{\mathcal{V}_n}\big)>0$ means that
the bottleneck Jacobian $J_B(\Phi(n))$ is injective on $\mathcal{V}_n$:
no benign direction in $\mathcal{V}_n$ is completely collapsed. When
$\sigma_{\min}\big(J_B(\Phi(n))\big|_{\mathcal{V}_n}\big)=0$, the
singular value inequality and the fidelity assumption become
incompatible:
\begin{equation}
    \sigma_{\min}\big(J_R(\Phi(n))\big|_{\mathcal{V}_n}\big)
    \;\le\;
    \sigma_{\max}\big(J_\Psi(B(\Phi(n)))\big)\,
    \sigma_{\min}\big(J_B(\Phi(n))\big|_{\mathcal{V}_n}\big)
    \;=\; 0,
\end{equation}
while Assumption~\ref{ass:fidelity} requires
$\sigma_{\min}(J_R(\Phi(n))|_{\mathcal{V}_n})\ge 1-\eta>0$. Thus, a
bottleneck that completely destroys some benign directions cannot
satisfy the local fidelity requirement. In the regime where
$0<\sigma_{\min}(J_B(\Phi(n))|_{\mathcal{V}_n})\ll 1$, the lower bound
\eqref{eq:amplification_bound} shows that the decoder Jacobian must
exhibit large gain along those directions, realizing the
fidelity--stability amplification effect discussed in
Sec.~\ref{sec:theory}.
\end{remark}

\medskip

In summary, Assumption~\ref{ass:fidelity} enforces local near-identity
reconstruction along benign feature directions $\mathcal{V}_n$, which
forces the decoder Jacobian to have large gain whenever the bottleneck
$B$ is strongly information losing in a local differential sense (i.e.,
when $\sigma_{\min}(J_B(\Phi(n))|_{\mathcal{V}_n})$ is small but
positive). Combined with the fact that residual scores on anomaly-free
patches are governed by the reconstruction error
$\varepsilon_{\mathrm{approx}}$, this amplification behaviour underpins the
fidelity--stability dilemma discussed in Sec.~\ref{sec:theory}.

\section{Proofs of Retrieval-based Analysis}
\label{sec:proof-retrieval}

In this section we provide formal derivations for the claims in
Sec.~\ref{sec:why-retrieval} about the fidelity and stability properties
of retrieval-based anomaly scoring. Throughout, we fix a norm
$\|\cdot\|$ on $\mathbb{R}^D$, and let
$\gamma\subset \mathbb{R}^D$ denote the finite, non-empty set of stored
anomaly-free embeddings (the memory bank). The retrieval-based anomaly
score is the distance-to-memory function
\begin{equation}
    d_\gamma(z)
    \;\triangleq\;
    \min_{u\in\gamma} \|z-u\|,
    \qquad
    S_{\mathrm{ret}}(z) \;\triangleq\; d_\gamma(z),
    \qquad z\in\mathbb{R}^D.
\end{equation}
In particular, for each stored normal embedding $z_i\in\gamma$ we
have $S_{\mathrm{ret}}(z_i) = 0$, since $z_i$ is its own nearest
neighbour.

\paragraph{Empirical saturation on anomaly-free embeddings.}
To formalize the statement that retrieval cannot be strictly improved
upon on the anomaly-free embeddings by any nonnegative scoring rule, we
introduce a simple empirical functional. For any anomaly score
$S:\mathbb{R}^D\to [0,\infty)$, define its empirical normal-score on
$\gamma$ as
\begin{equation}
    \mathcal{E}(S)
    \;\triangleq\;
    \frac{1}{|\gamma|}
    \sum_{z\in\gamma} S(z).
\end{equation}
This quantity can be viewed as a basic notion of empirical
approximation error on the anomaly-free embeddings, or equivalently as
the empirical risk for the pointwise loss $\ell(S(z),0) = S(z)$ with
ground-truth normal label $0$.

\begin{proposition}[Empirical saturation of retrieval]
\label{prop:retrieval-optimal}
Let $\gamma\subset\mathbb{R}^D$ be finite and non-empty, and let
$S:\mathbb{R}^D\to[0,\infty)$ be any nonnegative anomaly score.
Then
\begin{equation}
    \mathcal{E}(S_{\mathrm{ret}})
    = 0
    \;\le\;
    \mathcal{E}(S),
\end{equation}
with equality $\mathcal{E}(S) = 0$ if and only if
$S(z) = 0$ for all $z\in\gamma$.
In particular, no such $S$ can achieve strictly smaller empirical
approximation error on $\gamma$ than $S_{\mathrm{ret}}$, and for every
$z\in\gamma$ we have
\begin{equation}
    S_{\mathrm{ret}}(z)
    \;=\; 0
    \;\le\;
    S(z).
\end{equation}
\end{proposition}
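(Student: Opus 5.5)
The plan is to verify the three claims of Proposition~\ref{prop:retrieval-optimal} directly from the definitions of $d_\gamma$ and $\mathcal{E}$. The facts used are that a norm vanishes exactly at zero, that $\gamma$ is finite so the minimum defining $d_\gamma$ is attained, and that a finite sum of nonnegative reals is zero if and only if every term is zero. No earlier lemma is needed. Lemma~\ref{lem:nonexpansive} concerns stability and plays no role here.

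\textbf{Step 1: retrieval vanishes on the memory.} Fix $z\in\gamma$. Since $z$ is itself a candidate in the minimum, $0\le d_\gamma(z)\le\|z-z\|=0$, so $S_{\mathrm{ret}}(z)=0$. Averaging over the finite non-empty set $\gamma$ then gives $\mathcal{E}(S_{\mathrm{ret}})=0$. This is well defined because $|\gamma|\ge 1$.

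\textbf{Step 2: lower bound for an arbitrary score.} Let $S:\mathbb{R}^D\to[0,\infty)$ be any nonnegative score. Each term $S(z)$ with $z\in\gamma$ is nonnegative, so $\mathcal{E}(S)\ge 0=\mathcal{E}(S_{\mathrm{ret}})$. The pointwise statement $S_{\mathrm{ret}}(z)=0\le S(z)$ for $z\in\gamma$ is immediate from Step~1 together with nonnegativity of $S$.

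\textbf{Step 3: equality case.} If $S\equiv 0$ on $\gamma$, then $\mathcal{E}(S)=0$ trivially. Conversely, suppose $\mathcal{E}(S)=0$. Then $\sum_{z\in\gamma}S(z)=0$ is a finite sum of nonnegative terms, so each term must vanish. This case is the only point needing any care: it relies on finiteness of $\gamma$, since the functional is an average over finitely many points rather than a measure-theoretic integral.

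The "in particular" clause then follows from Steps~2 and~3. No nonnegative $S$ can achieve strictly smaller empirical error than $S_{\mathrm{ret}}$, and any $S$ matching it must vanish identically on $\gamma$.

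There is no genuine obstacle, but I would state the following explicitly. The argument works for any norm on $\mathbb{R}^D$, including the cosine dissimilarity on normalized embeddings, provided the identity of indiscernibles $\|z-u\|=0 \iff z=u$ is available.
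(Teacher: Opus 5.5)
Your proposal is correct and matches the paper's proof step for step: $z$ is its own admissible neighbour, so $S_{\mathrm{ret}}$ vanishes on $\gamma$; each summand of $\mathcal{E}(S)$ is nonnegative; and a vanishing finite sum of nonnegative terms forces every term to vanish. One small tightening: the argument needs only $\|z-z\|=0$, not the full identity of indiscernibles or attainment of the minimum, which is why it also covers the cosine dissimilarity $1-\langle z,u\rangle$ on unit vectors even though that is not a norm.
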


\begin{proof}
By definition of $S_{\mathrm{ret}}$ and the fact that
$z\in\gamma$ is an admissible neighbour for itself, we have
$S_{\mathrm{ret}}(z)=0$ for every $z\in\gamma$, and hence
\begin{equation}
    \mathcal{E}(S_{\mathrm{ret}})
    = \frac{1}{|\gamma|}
      \sum_{z\in\gamma} S_{\mathrm{ret}}(z)
    = 0.
\end{equation}
For any other nonnegative score $S$, each summand in $\mathcal{E}(S)$
satisfies $S(z)\ge 0$, so
\begin{equation}
    \mathcal{E}(S)
    = \frac{1}{|\gamma|}
      \sum_{z\in\gamma} S(z)
    \;\ge\; 0.
\end{equation}
If $\mathcal{E}(S) = 0$, then each summand must satisfy $S(z)=0$, which
gives the “if and only if” statement. The pointwise inequality
$S_{\mathrm{ret}}(z)\le S(z)$ on $\gamma$ follows immediately from
$S_{\mathrm{ret}}(z)=0$ and nonnegativity of $S(z)$.
\end{proof}

Proposition~\ref{prop:retrieval-optimal} formalizes the first fidelity
property used in Sec.~\ref{sec:why-retrieval}: retrieval assigns zero
score to every stored anomaly-free embedding. Thus, on the empirical
memory support, the distance-to-memory score attains the smallest
possible normal-score among all nonnegative anomaly scoring rules.
In contrast, a learned bottlenecked reconstruction map may assign
non-zero residuals to stored normal embeddings when its inverse is
imperfect.

\paragraph{Pointwise maximality within stable scores.}
We next formalize the stable score class used in
Sec.~\ref{sec:why-retrieval}. Let
\begin{equation}
\mathcal{F}_\gamma :=
\Big\{
S:\mathbb{R}^D\to[0,\infty)
\ \big|\
S(u)=0,\ \forall u\in\gamma,\ 
|S(z)-S(z')|\le \|z-z'\|,\ \forall z,z'
\Big\}.
\end{equation}
That is, $\mathcal{F}_\gamma$ contains nonnegative, $1$-Lipschitz
feature-space anomaly scores that vanish on the stored anomaly-free
memory. The following proposition shows that $S_{\mathrm{ret}}$ is the
largest member of this class.

\begin{proposition}[Pointwise maximality of retrieval]
\label{prop:canonical-distance}
For any $S\in\mathcal{F}_\gamma$ and every $x\in\mathbb{R}^D$,
\begin{equation}
    S(x)
    \;\le\;
    d_\gamma(x)
    \;=\;
    S_{\mathrm{ret}}(x).
\end{equation}
In particular, among all nonnegative $1$-Lipschitz anomaly scores that
vanish on $\gamma$, the retrieval score $S_{\mathrm{ret}}$ is
pointwise maximal.
\end{proposition}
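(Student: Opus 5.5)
The plan is a one-line Lipschitz argument anchored at the nearest memory point, followed by the identity $d_\gamma=S_{\mathrm{ret}}$ from the definition. Fix $S\in\mathcal{F}_\gamma$ and an arbitrary $x\in\mathbb{R}^D$.

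First, since $\gamma$ is finite and non-empty, the minimum defining $d_\gamma(x)$ is attained. We therefore choose $u^\star\in\arg\min_{u\in\gamma}\|x-u\|$, so that $\|x-u^\star\|=d_\gamma(x)$.

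Second, we use the two defining properties of $\mathcal{F}_\gamma$ at the pair $(x,u^\star)$. The vanishing condition gives $S(u^\star)=0$, since $u^\star\in\gamma$. The $1$-Lipschitz condition gives $S(x)-S(u^\star)\le|S(x)-S(u^\star)|\le\|x-u^\star\|$. Combining these,
\begin{equation}
S(x)\;\le\; S(u^\star)+\|x-u^\star\| \;=\; d_\gamma(x) \;=\; S_{\mathrm{ret}}(x).
\end{equation}
Because $x$ was arbitrary, the bound holds pointwise on all of $\mathbb{R}^D$. (Equivalently, one can apply the Lipschitz bound at every $u\in\gamma$ and take the infimum, which avoids selecting a minimizer and would also work for infinite $\gamma$.)

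Third, to read the result as \emph{maximality} rather than only an upper bound, we must check that $S_{\mathrm{ret}}$ itself lies in $\mathcal{F}_\gamma$. It is nonnegative because it is a minimum of norms. It vanishes on $\gamma$ by Proposition~\ref{prop:retrieval-optimal}. It is $1$-Lipschitz by Lemma~\ref{lem:nonexpansive}, whose triangle-inequality proof holds for any norm. Hence $S_{\mathrm{ret}}$ is the pointwise largest element of the class.

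No step here is a real obstacle. The only points needing care are the attainment of the minimum, which follows from finiteness of $\gamma$ or is sidestepped by the infimum version, and the membership check $S_{\mathrm{ret}}\in\mathcal{F}_\gamma$, which is what licenses the word "maximal".
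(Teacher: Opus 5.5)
Your proof is correct and follows the paper's argument: pick a nearest memory point in $\gamma$, use that $S$ vanishes there, and apply the $1$-Lipschitz bound to get $S(x)\le d_\gamma(x)$. Your extra check that $S_{\mathrm{ret}}\in\mathcal{F}_\gamma$ is a worthwhile addition, since it is what makes ``maximal'' literally true rather than just ``upper bound''; the paper does not do this in the proposition's proof and only establishes it later, in the proof of Theorem~\ref{thm:expected-separation}.
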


\begin{proof}
Fix $x\in\mathbb{R}^D$ and choose
$a_x\in\arg\min_{u\in\gamma}\|x-u\|$, so that
$d_\gamma(x)=\|x-a_x\|$.
Since $a_x\in\gamma$ and $S$ vanishes on $\gamma$, we have
$S(a_x)=0$.
By the $1$-Lipschitz property of $S$,
\begin{equation}
    |S(x)-S(a_x)|
    \;\le\;
    \|x-a_x\|.
\end{equation}
Using $S(a_x)=0$ and $S(x)\ge 0$, this gives
\begin{equation}
    S(x)
    =
    S(x)-S(a_x)
    \;\le\;
    \|x-a_x\|
    =
    d_\gamma(x)
    =
    S_{\mathrm{ret}}(x).
\end{equation}
This proves the claim.
\end{proof}

Proposition~\ref{prop:canonical-distance} formalizes the pointwise
maximality statement used in Sec.~\ref{sec:why-retrieval}: once the
stored anomaly-free memory $\gamma$ is fixed, the distance-to-memory
score is the largest nonnegative $1$-Lipschitz score that still assigns
zero score to all stored normal embeddings.


\paragraph{Expected anomaly--normal separation.}
The pointwise maximality above also gives a simple guarantee for the
expected score gap between anomalous and normal features. Let $\pi$ be a
distribution over anomalous--normal feature pairs $(A,N)$, and define
\begin{equation}
    J_\pi(S)
    \;\triangleq\;
    \mathbb{E}_{(A,N)\sim\pi}\big[S(A)-S(N)\big].
\end{equation}

\begin{theorem}[Near-optimal expected separation]
\label{thm:expected-separation}
Let $\pi$ be a distribution over anomalous--normal feature pairs $(A,N)$
such that
\begin{equation}
    \mathbb{E}_{(A,N)\sim\pi}\big[d_\gamma(A)+d_\gamma(N)\big] < \infty.
\end{equation}
For $S_{\mathrm{ret}}=d_\gamma$ and the stable score class
$\mathcal{F}_\gamma$ defined above,
\begin{equation}
    \sup_{S\in\mathcal{F}_\gamma} J_\pi(S)
    -
    J_\pi(S_{\mathrm{ret}})
    \;\le\;
    \mathbb{E}_{(A,N)\sim\pi}\big[d_\gamma(N)\big].
\end{equation}
In particular, if $d_\gamma(N)=0$ almost surely, then
\begin{equation}
    J_\pi(S_{\mathrm{ret}})
    =
    \sup_{S\in\mathcal{F}_\gamma} J_\pi(S).
\end{equation}
\end{theorem}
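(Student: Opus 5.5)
The plan is to bound $J_\pi(S)$ uniformly over $S\in\mathcal{F}_\gamma$ by splitting it into an anomalous term and a normal term. Proposition~\ref{prop:canonical-distance} will control the anomalous term, and nonnegativity will control the normal term. First I would check that every quantity is well defined. For any $S\in\mathcal{F}_\gamma$, Proposition~\ref{prop:canonical-distance} gives $0\le S(z)\le d_\gamma(z)$ pointwise. The integrability hypothesis $\mathbb{E}_\pi[d_\gamma(A)+d_\gamma(N)]<\infty$ therefore makes $S(A)$ and $S(N)$ integrable. So $J_\pi(S)=\mathbb{E}_\pi[S(A)]-\mathbb{E}_\pi[S(N)]$ is finite, and the expectation can be split by linearity. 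I also need $d_\gamma$ itself to lie in $\mathcal{F}_\gamma$. It is nonnegative, it vanishes on $\gamma$, and it is $1$-Lipschitz by Lemma~\ref{lem:nonexpansive}. This shows that $J_\pi(S_{\mathrm{ret}})$ is one of the values in the supremum, although the argument only needs it to be finite.

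Next, fix an arbitrary $S\in\mathcal{F}_\gamma$. Pointwise maximality gives $\mathbb{E}_\pi[S(A)]\le\mathbb{E}_\pi[d_\gamma(A)]$, and nonnegativity of $S$ gives $-\mathbb{E}_\pi[S(N)]\le 0$. Together these yield
\begin{equation}
J_\pi(S)\le \mathbb{E}_\pi[d_\gamma(A)] = J_\pi(S_{\mathrm{ret}})+\mathbb{E}_\pi[d_\gamma(N)].
\end{equation}
The equality rewrites $\mathbb{E}_\pi[d_\gamma(A)]$ as $J_\pi(d_\gamma)+\mathbb{E}_\pi[d_\gamma(N)]$, which is legitimate because both expectations are finite. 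Taking the supremum over $S$ and subtracting the finite number $J_\pi(S_{\mathrm{ret}})$ gives the stated bound.

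For the special case, suppose $d_\gamma(N)=0$ almost surely. Then the right-hand side vanishes, so $\sup_S J_\pi(S)\le J_\pi(S_{\mathrm{ret}})$. The reverse inequality holds because $S_{\mathrm{ret}}\in\mathcal{F}_\gamma$, and equality follows.

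There is no real obstacle in this argument. The only step needing care is the integrability bookkeeping, namely justifying the linearity split and the subtraction of finite quantities. That is exactly what the domination $0\le S\le d_\gamma$ together with the moment hypothesis provides.
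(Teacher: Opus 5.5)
Your proposal is correct and follows the paper's argument: you bound $S(A)\le d_\gamma(A)$ by pointwise maximality, drop $S(N)\ge 0$, and compare with $J_\pi(S_{\mathrm{ret}})=\mathbb{E}_\pi[d_\gamma(A)-d_\gamma(N)]$, then use $S_{\mathrm{ret}}\in\mathcal{F}_\gamma$ (Lemma~\ref{lem:nonexpansive}) for the equality case. Your integrability bookkeeping via the domination $0\le S\le d_\gamma$ is a detail the paper leaves implicit.
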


\begin{proof}
For any $S\in\mathcal{F}_\gamma$, Proposition~\ref{prop:canonical-distance}
gives $S(A)\le d_\gamma(A)$, while nonnegativity gives $S(N)\ge 0$.
Therefore,
\begin{equation}
    J_\pi(S)
    =
    \mathbb{E}_\pi[S(A)-S(N)]
    \le
    \mathbb{E}_\pi[d_\gamma(A)].
\end{equation}
For retrieval,
\begin{equation}
    J_\pi(S_{\mathrm{ret}})
    =
    \mathbb{E}_\pi[d_\gamma(A)-d_\gamma(N)].
\end{equation}
Thus, for every $S\in\mathcal{F}_\gamma$,
\begin{equation}
    J_\pi(S)-J_\pi(S_{\mathrm{ret}})
    \le
    \mathbb{E}_\pi[d_\gamma(N)].
\end{equation}
Taking the supremum over $S\in\mathcal{F}_\gamma$ proves the first claim.

If $d_\gamma(N)=0$ almost surely, then the right-hand side is zero, and hence
\begin{equation}
    \sup_{S\in\mathcal{F}_\gamma} J_\pi(S)
    \le
    J_\pi(S_{\mathrm{ret}}).
\end{equation}
Moreover, $S_{\mathrm{ret}}=d_\gamma$ belongs to $\mathcal{F}_\gamma$:
it is nonnegative, vanishes on $\gamma$, and is $1$-Lipschitz by
Lemma~\ref{lem:nonexpansive}. Therefore,
\begin{equation}
    \sup_{S\in\mathcal{F}_\gamma} J_\pi(S)
    \ge
    J_\pi(S_{\mathrm{ret}}).
\end{equation}
Combining the two inequalities yields
$J_\pi(S_{\mathrm{ret}})
=
\sup_{S\in\mathcal{F}_\gamma} J_\pi(S)$.
\end{proof}

Theorem~\ref{thm:expected-separation} is the formal version of the
near-optimality statement in Sec.~\ref{sec:why-retrieval}. It shows that
the gap between retrieval and the best score in the same stable score
class is controlled by the distance from unseen normal features to the
stored anomaly-free memory. Therefore, when normal test features are
well covered by the memory, retrieval is near-optimal within
$\mathcal{F}_\gamma$ for expected anomaly--normal separation.

\paragraph{Non-expansiveness of the retrieval score.}
We now give a complete proof of Lemma~\ref{lem:nonexpansive}, which
states that $S_{\mathrm{ret}}$ is $1$-Lipschitz in feature space and
therefore cannot amplify benign perturbations in encoder features.

\begin{lemma}[Non-expansiveness, restated]
For any $u,v \in \mathbb{R}^D$,
\begin{equation}
    \big|S_{\mathrm{ret}}(u) - S_{\mathrm{ret}}(v)\big|
    \;\le\; \|u - v\|.
\end{equation}
In particular, $S_{\mathrm{ret}}$ is a $1$-Lipschitz map from
$(\mathbb{R}^D,\|\cdot\|)$ to $(\mathbb{R},|\cdot|)$.
\end{lemma}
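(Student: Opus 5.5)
The plan is to prove the one-sided inequality $S_{\mathrm{ret}}(u) - S_{\mathrm{ret}}(v) \le \|u-v\|$ for arbitrary $u,v\in\mathbb{R}^D$, and then obtain the absolute-value statement by symmetry. Since $\gamma$ is finite and non-empty, the minimum defining $d_\gamma$ is attained. So for the fixed point $v$ I will pick a nearest stored embedding $a_v \in \arg\min_{a\in\gamma}\|v-a\|$, which satisfies $S_{\mathrm{ret}}(v)=\|v-a_v\|$. This avoids any infimum/approximation argument. If $\gamma$ were allowed to be infinite, I would instead take $a\in\gamma$ with $\|v-a\|\le d_\gamma(v)+\epsilon$ and let $\epsilon\to0$ at the end.

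The key step is that $a_v$ is an admissible candidate in the minimum defining $S_{\mathrm{ret}}(u)$. Hence $S_{\mathrm{ret}}(u)\le \|u-a_v\|$. Applying the triangle inequality for the fixed norm $\|\cdot\|$ gives $\|u-a_v\|\le \|u-v\|+\|v-a_v\| = \|u-v\| + S_{\mathrm{ret}}(v)$. Rearranging yields $S_{\mathrm{ret}}(u)-S_{\mathrm{ret}}(v)\le\|u-v\|$.

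Exchanging the roles of $u$ and $v$, with a nearest point $a_u$ chosen for $u$, gives $S_{\mathrm{ret}}(v)-S_{\mathrm{ret}}(u)\le\|v-u\|=\|u-v\|$, where the last equality uses the symmetry of norms. The two one-sided bounds together give $|S_{\mathrm{ret}}(u)-S_{\mathrm{ret}}(v)|\le\|u-v\|$, which is the $1$-Lipschitz property. I will also note explicitly that nothing beyond the norm axioms is used. The result therefore holds for any norm, including the cosine-type geometry on the unit sphere used in practice, viewed through the Euclidean norm, since $\|z-u\|^2 = 2-2\langle z,u\rangle$ for normalized vectors.

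I do not expect a genuine obstacle, since the argument is the standard one for distance-to-set functions. The only point needing care is the existence of the minimizer $a_v$, which finiteness of $\gamma$ guarantees, and I will state this explicitly. This lemma also supplies the membership $S_{\mathrm{ret}}\in\mathcal{F}_\gamma$ used in Theorem~\ref{thm:expected-separation}.
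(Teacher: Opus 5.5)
Your proof is correct and is essentially the paper's argument: pick a nearest stored embedding $a_v$ for $v$, bound $S_{\mathrm{ret}}(u)\le\|u-a_v\|\le\|u-v\|+S_{\mathrm{ret}}(v)$ by the triangle inequality, then swap $u$ and $v$. One caveat on your closing remark: for unit vectors the implemented cosine score is $1-\max\langle z,u\rangle=\tfrac12 d_\gamma(z)^2$ in the Euclidean norm, so the lemma identifies the same nearest neighbour, but that score is only a monotone transform of a $1$-Lipschitz function (it is $2$-Lipschitz on the sphere), not itself $1$-Lipschitz.
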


\begin{proof}
Fix $u,v\in\mathbb{R}^D$. Let
$a_v \in \arg\min_{a\in\gamma} \|v-a\|$ be a nearest neighbour of
$v$ in $\gamma$, so that
\begin{equation}
    S_{\mathrm{ret}}(v) = \|v - a_v\|.
\end{equation}
By definition of $S_{\mathrm{ret}}(u)$ and the triangle inequality,
\begin{equation}
    S_{\mathrm{ret}}(u)
    \;=\; \min_{a\in\gamma} \|u-a\|
    \;\le\; \|u - a_v\|
    \;\le\; \|u-v\| + \|v - a_v\|
    \;=\; \|u-v\| + S_{\mathrm{ret}}(v).
\end{equation}
Rearranging gives
\begin{equation}
    S_{\mathrm{ret}}(u) - S_{\mathrm{ret}}(v)
    \;\le\; \|u-v\|.
\end{equation}
Exchanging the roles of $u$ and $v$ and repeating the same argument
yields
\begin{equation}
    S_{\mathrm{ret}}(v) - S_{\mathrm{ret}}(u)
    \;\le\; \|u-v\|.
\end{equation}
Combining the two inequalities gives
\begin{equation}
    |S_{\mathrm{ret}}(u) - S_{\mathrm{ret}}(v)|
    \;\le\; \|u-v\|,
\end{equation}
which is exactly the desired non-expansiveness property.
\end{proof}

\paragraph{An alternative Lipschitz viewpoint.}
For completeness, we note that the non-expansiveness of
$S_{\mathrm{ret}}$ can also be seen directly from its distance-to-memory
form. For each anchor $a\in\gamma$, the map
\begin{equation}
    f_a(z) \;\triangleq\; \|z-a\|
\end{equation}
is $1$-Lipschitz in $z$ by the triangle inequality. Since $\gamma$ is
finite and
\begin{equation}
    S_{\mathrm{ret}}(z)
    = \min_{a\in\gamma} f_a(z),
\end{equation}
$S_{\mathrm{ret}}$ is the pointwise minimum of a finite family of
$1$-Lipschitz functions. Therefore, for any $u,v\in\mathbb{R}^D$,
the same nearest-neighbour argument as above yields
\begin{equation}
    |S_{\mathrm{ret}}(u)-S_{\mathrm{ret}}(v)|
    \le \|u-v\|,
\end{equation}
which again implies Lemma~\ref{lem:nonexpansive}.

\medskip


Taken together, Proposition~\ref{prop:retrieval-optimal},
Proposition~\ref{prop:canonical-distance},
Theorem~\ref{thm:expected-separation}, and
Lemma~\ref{lem:nonexpansive} formalize the fidelity and stability
properties of retrieval-based scoring highlighted in
Sec.~\ref{sec:why-retrieval}. Retrieval assigns zero score to the stored
anomaly-free embeddings, is pointwise maximal within the stable score
class $\mathcal{F}_\gamma$, and is near-optimal for expected
anomaly--normal separation when unseen normal features are close to the
memory support. At the same time, its distance-to-memory form is
non-expansive in feature space, which prevents score amplification under
benign feature perturbations. These properties support the use of RAD as
a way to preserve empirical fidelity while controlling score stability.

\section{Additional Few-shot Results}
\label{sec:few-shot}

Table~\ref{tab:mvtec_only} complements Fig.~\ref{fig:fewshot_scaling} by reporting absolute pixel-level AUROC/AUPRO under the same multi-class few-shot MVTec-AD protocol. Across all shot budgets, RAD achieves the best pixel performance: compared to the strongest few-shot baseline IIPAD, it yields consistently higher AUROC and about 2--3 points gain in AUPRO for 1-, 2-, and 4-shot settings, while maintaining even larger margins over both classical UAD baselines adapted to the few-shot regime (SPADE, PatchCore) and recent few-shot AD methods (WinCLIP, PromptAD) that are explicitly designed and trained for this scenario. Among training-free approaches, RAD slightly improves over UniVAD in AUROC at 1-shot and additionally provides substantially higher AUPRO, despite using a much smaller backbone. These results corroborate that a training-free RAD is competitive with, and often superior to, specialized few-shot AD pipelines even in the strict multi-class few-shot setting.

\begin{table}[H]
  \centering
  \caption{Quantitative comparisons of pixel-level anomaly detection in AUROC and AUPRO on MVTec-AD under the multi-class few-shot setting. 
           The \best{best} and \secondbest{second best} pixel performances are in red and blue colors, respectively.
           $\dag$ denotes methods tailored for few-shot AD.}
  \label{tab:mvtec_only}
  \setlength{\tabcolsep}{4pt}
  \renewcommand{\arraystretch}{1.1}
  \begin{tabular}{l l c ccc}
    \toprule
    \multirow{2}{*}{Method} & \multirow{2}{*}{Backbone(\#Params)} &
    \multirow{2}{*}{Training-free} &
    \multicolumn{3}{c}{MVTec-AD} \\
    & & & 1-shot & 2-shot & 4-shot \\
    \midrule
    SPADE \tiny \textit{Arxiv'20}          & ResNet-50 (69M)          & $\times$ &
      60.4/53.1        & 61.2/54.7           & 62.8/55.6 \\
    WinCLIP$^\dag$ \tiny \textit{CVPR'23}         & CLIP-B (188M)            & $\times$ &
      92.4/83.5        & 92.4/83.9               & 92.9/84.4 \\
    PromptAD$^\dag$ \tiny \textit{CVPR'24}    & CLIP-L (300M)  & $\times$ &
      91.8/83.6 & 92.2/84.3 & 92.4/84.6 \\
    IIPAD$^\dag$  \tiny \textit{ICLR'25}        & CLIP-L (300M)       & $\times$ &
      96.4/\secondbest{89.8}        & \secondbest{96.7}/\secondbest{90.3}           & \secondbest{97.0}/\secondbest{91.2} \\
    \midrule
    PatchCore \tiny \textit{CVPR'22}     & ResNet-50 (69M)  & \checkmark &
      83.9/72.7 & 89.6/74.2          & 92.6/80.8 \\
    UniVAD$^\dag$  \tiny \textit{CVPR'25}        & DINOv2-G (1.1B) + CLIP-L (300M)          & \checkmark &
          \secondbest{96.5}/- & -/-           & -/- \\
    \rowcolor{cyan!10}
    \textbf{RAD (Ours)}  & DINOv3-B (86M)            & \checkmark &
      \best{96.7}/\best{91.9} &
      \best{97.3}/\best{93.0} &
      \best{97.7}/\best{94.0} \\
    \bottomrule
  \end{tabular}
\end{table}

\section{Impact of Encoder Quality and Resolution}
\label{sec:encoder}

\begin{table*}[t]
\centering
\caption{Comparison between pre-trained ViT foundations, conducted on MVTec-AD (\%). All models are ViT-Base. The patch size of DINOv2 is $14^2$; others are $16^2$. $R512^2$-$C448^2$ represents first resizing images to $512 \times 512$, then center cropping to $448 \times 448$. ``IN-1K top-1'' is ImageNet-1K top-1 accuracy at $224^2$ resolution from the original encoder papers (DeiT: supervised; others: self-supervised pre-training followed by standard fine-tuning, except DINOv2 which is linear probing).}
\label{tab:pretrain_backbone}
\resizebox{\textwidth}{!}{
\setlength{\aboverulesep}{0pt}
\setlength{\belowrulesep}{0pt}
\begin{tabular}{lcccccccccc}
\toprule
\multirow{2}{*}{\shortstack[l]{Pre-Train\\Backbone}} &
\multirow{2}{*}{Type} &
\multirow{2}{*}{\shortstack{Image\\Size}} &
\multicolumn{3}{c}{Image-level} &
\multicolumn{4}{c}{Pixel-level} &
\multirow{2}{*}{\shortstack{IN-1K\\top-1}} \\
\cmidrule(lr){4-6} \cmidrule(lr){7-10}
& & & AUROC & AP & $F_{1}$-max & AUROC & AP & $F_{1}$-max & AUPRO & \\
\midrule
DeiT   & Supervised & R$512^2$-C$448^2$ & 98.0 & 99.2 & 97.7 & 97.0 & 70.1 & 68.7 & 91.3 & 81.8 \\
MAE    & MIM        & R$512^2$-C$448^2$ & 88.8 & 95.2 & 90.7 & 93.4 & 54.5 & 53.2 & 82.3 & 83.6 \\
D-iGPT & MIM        & R$512^2$-C$448^2$ & 99.1 & 99.6 & 98.5 & 97.9 & 67.7 & 67.5 & 92.7 & 86.2 \\
MoCoV3 & CL         & R$512^2$-C$448^2$ & 99.1 & 99.7 & 98.1 & 97.9 & 72.9 & 71.1 & 93.3 & 83.2 \\
DINO   & CL         & R$512^2$-C$448^2$ & 99.1 & 99.7 & 98.4 & 97.9 & 73.8 & 71.0 & 94.0 & 82.8 \\
iBOT   & CL+MIM     & R$512^2$-C$448^2$ & 99.2 & 99.7 & 98.6 & 98.4 & 74.1 & 72.3 & 94.5 & 84.0 \\
DINOv2 & CL+MIM     & R$512^2$-C$448^2$ & 99.7 & 99.9 & 99.5 & 98.1 & 70.0 & 68.7 & 95.6 & 84.5 \\
\rowcolor{cyan!10}
DINOv3 & CL+MIM     & R$512^2$-C$448^2$ & 99.6 & 99.8 & 99.0 & 98.5 & 75.6 & 71.3 & 94.9 & -- \\
\midrule
DeiT   & Supervised & R$256^2$-C$224^2$ & 95.1 & 98.1 & 96.9 & 96.4 & 63.8 & 63.5 & 87.7 & 81.8 \\
MAE    & MIM        & R$256^2$-C$224^2$ & 91.1 & 96.2 & 92.4 & 95.8 & 59.4 & 58.1 & 84.3 & 83.6 \\
BEiT   & MIM        & R$256^2$-C$224^2$ & 90.3 & 96.0 & 93.7 & 88.2 & 45.4 & 49.6 & 74.7 & 83.2 \\
BEiTv2 & MIM        & R$256^2$-C$224^2$ & 96.5 & 98.4 & 97.2 & 96.3 & 63.2 & 62.8 & 89.5 & 85.5 \\
D-iGPT & MIM        & R$256^2$-C$224^2$ & 99.2 & 99.6 & 98.4 & 97.6 & 61.9 & 63.3 & 91.6 & 86.2 \\
MoCoV3 & CL         & R$256^2$-C$224^2$ & 97.2 & 99.0 & 97.1 & 96.6 & 65.7 & 65.0 & 88.6 & 83.2 \\
DINO   & CL         & R$256^2$-C$224^2$ & 97.9 & 99.3 & 98.0 & 97.1 & 67.5 & 66.1 & 91.0 & 82.8 \\
iBOT   & CL+MIM     & R$256^2$-C$224^2$ & 98.2 & 99.3 & 98.0 & 97.5 & 67.5 & 66.6 & 91.3 & 84.0 \\
DINOv2 & CL+MIM     & R$256^2$-C$224^2$ & 99.2 & 99.7 & 98.4 & 97.8 & 64.8 & 64.9 & 93.2 & 84.5 \\
\rowcolor{cyan!10}
DINOv3 & CL+MIM     & R$256^2$-C$224^2$ & 99.5 & 99.8 & 98.8 & 98.2 & 69.2 & 66.5 & 94.0 & -- \\
\bottomrule
\end{tabular}
}
\end{table*}

\paragraph{Additional analysis.}
Table~\ref{tab:pretrain_backbone} also reports the ImageNet-1K top-1 accuracy (IN-1K top-1) for each foundation, taken from the original encoder papers at $224^2$ resolution (DeiT: supervised; others: self-supervised pre-training followed by standard fine-tuning, except DINOv2 which is evaluated with a linear probe). Using IN-1K top-1 as a proxy for generic representation strength, we observe a clear positive trend with RAD's localization quality. Focusing on the R$256^2$-C$224^2$ setting and the primary localization metric P-AUPRO, the encoders with available IN-1K numbers yield a Pearson correlation of of approximately $0.35$ and a Spearman rank correlation of $\rho \approx 0.60$ between IN-1K top-1 and P-AUPRO. If we exclude the BEiT family (BEiT/BEiTv2), which forms a clear outlier group with relatively strong ImageNet classification but notably weaker anomaly localization, the Pearson correlation further increases to approximately $0.47$ and the Spearman rank correlation to $\rho \approx 0.71$. These statistics quantitatively support the qualitative hierarchy discussed in the main text: RAD generally benefits from stronger ImageNet representations, while tokenizer-based MIM objectives such as BEiT do not automatically translate into superior anomaly maps.

\paragraph{Effect of resolution.}
Table~\ref{tab:pretrain_backbone} also allows us to isolate the impact of input resolution on RAD while keeping the encoder fixed. When increasing the resolution from R$256^2$-C$224^2$ to R$512^2$-C$448^2$, P-AUPRO improves for almost all foundations: the average gain across backbones is about $2.1\uparrow$ points, with the largest improvement observed for MoCoV3 ($4.7\uparrow$). The only exception is MAE, whose P-AUPRO slightly decreases at the higher resolution, suggesting that its patch-wise reconstruction objective is already well aligned with the lower-resolution inputs. Importantly, the relative ranking of encoders is largely preserved across resolutions: contrastive and hybrid encoders (DINOv3, DINOv2, iBOT, DINO, D-iGPT, MoCoV3) consistently occupy the top of the hierarchy, followed by supervised DeiT and finally the MIM-only encoders. Overall, these appendix results confirm that RAD's anomaly localization performance is primarily governed by the encoder's ImageNet representation quality, while increasing the input resolution provides a mostly uniform boost without altering this dependence.

\section{Results and Analysis on the 3D-ADAM Dataset}
\label{sec:adam}

Although 3D-ADAM provides both RGB and 3D signals, our evaluation uses only the RGB modality as a preliminary test for RAD under larger geometric, viewpoint, and appearance variations.
We apply the official background filtering so that pixel-level evaluation focuses on the foreground objects rather than irrelevant background regions.
All methods are evaluated under the multi-class standard setting, and Dinomaly is reported with a stronger DINOv3 backbone for a fairer comparison.

As shown in Table~\ref{tab:3dadam}, RAD achieves the best performance on all seven metrics.
Compared with PatchCore, RAD improves by \textbf{10.5$\uparrow$/6.6$\uparrow$/4.5$\uparrow$} on \textbf{I-AUROC/I-AP/I-$F_1$-max} and by \textbf{3.2$\uparrow$/18.6$\uparrow$/17.3$\uparrow$/15.4$\uparrow$} on \textbf{P-AUROC/P-AP/P-$F_1$-max/P-AUPRO}.
Compared with Dinomaly using the DINOv3 backbone, RAD further improves by \textbf{6.5$\uparrow$/3.3$\uparrow$/2.8$\uparrow$} on image-level metrics and by \textbf{0.1$\uparrow$/2.3$\uparrow$/2.4$\uparrow$/0.5$\uparrow$} on pixel-level metrics.
These results indicate that RAD remains effective under the larger 3D-induced variations of 3D-ADAM.

\begin{table}[t]
  \centering
  \caption{Quantitative comparisons on the official background-filtered 3D-ADAM benchmark under the multi-class standard setting.
  All methods use only the RGB modality. Dinomaly is reported with a stronger DINOv3 backbone.
  The \best{best} and \secondbest{second best} performances are in red and blue colors, respectively.}
  \label{tab:3dadam}

  \begingroup
  \footnotesize
  \setlength{\tabcolsep}{2.5pt}
  \renewcommand{\arraystretch}{0.95}

  \resizebox{0.88\linewidth}{!}{%
  \begin{tabular}{lccccccc}
    \toprule
    \multirow{2}{*}{Method} &
    \multicolumn{3}{c}{Image-level} &
    \multicolumn{4}{c}{Pixel-level} \\
    \cmidrule(lr){2-4} \cmidrule(lr){5-8}
    & I-AUROC & I-AP & I-$F_1$-max
    & P-AUROC & P-AP & P-$F_1$-max & P-AUPRO \\
    \midrule
    PatchCore {\scriptsize\textit{CVPR'22}}
    & 84.4 & 91.2 & 91.9
    & 95.7 & 23.5 & 29.1 & 78.8 \\
    Dinomaly (DINOv3) {\scriptsize\textit{CVPR'25}}
    & \secondbest{88.4} & \secondbest{94.5} & \secondbest{93.6}
    & \secondbest{98.8} & \secondbest{39.8} & \secondbest{44.0} & \secondbest{93.7} \\
    \midrule
    \rowcolor{cyan!10}
    \textbf{RAD (Ours)}
    & \best{94.9} & \best{97.8} & \best{96.4}
    & \best{98.9} & \best{42.1} & \best{46.4} & \best{94.2} \\
    \bottomrule
  \end{tabular}%
  }

  \endgroup
  \vspace{-2mm}
\end{table}

\paragraph{Analysis of pixel-level $F_1$.}
Although RAD achieves a high P-AUROC of \textbf{98.9} and improves P-$F_1$-max to \textbf{46.4}, pixel-level $F_1$ remains much lower than image-level $F_1$.
This is mainly due to the extreme pixel-level imbalance in 3D-ADAM: anomalous pixels account for only \textbf{0.61\%} of all evaluated pixels.
Under this imbalance, even a small high-score tail among normal pixels can produce many false positives in absolute count, reducing precision and therefore suppressing $F_1$.

To further examine this effect, we analyze the score distributions of normal and anomalous pixels.
As shown in Tab.~\ref{tab:3dadam_score_dist}, anomalous pixels have a much larger mean score than normal pixels, with \textbf{0.1156} versus \textbf{0.0096}.
Moreover, the average normal 95th percentile score is \textbf{0.0445}, which remains below the anomalous 5th percentile score of \textbf{0.0628}.
At 95\% TPR, the pixel-level FPR is \textbf{4.6\%}.
These results suggest that the remaining gap in pixel-level $F_1$ is mainly caused by extreme class imbalance and residual score-tail overlap, rather than collapsed pixel-level discrimination.

\begin{table}[h]
  \centering
  \caption{Pixel-score distribution analysis on 3D-ADAM. Despite the low anomaly-pixel ratio, normal and anomalous pixels remain well separated in score distribution.}
  \label{tab:3dadam_score_dist}
  \setlength{\tabcolsep}{6pt}
  \renewcommand{\arraystretch}{1.1}
  \begin{tabular}{lcc}
    \toprule
    Statistic & Normal pixels & Anomalous pixels \\
    \midrule
    Mean score & 0.0096 & 0.1156 \\
    Percentile statistic & 95th: 0.0445 & 5th: 0.0628 \\
    \bottomrule
  \end{tabular}
\end{table}

\begin{figure}[h]
    \centering
    \includegraphics[width=1\linewidth]{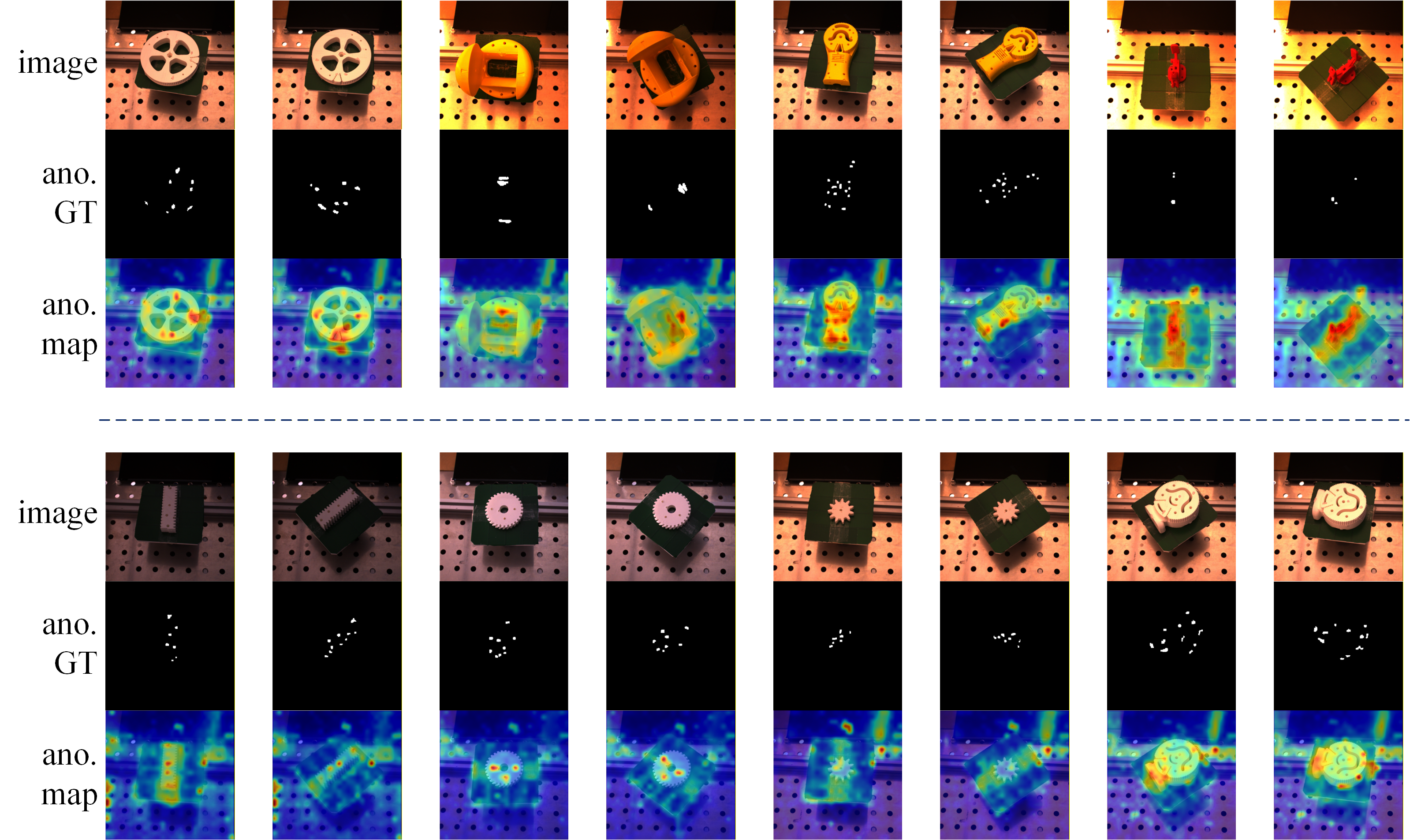}
    \caption{Anomaly map visualizations on 3D-ADAM using only the RGB modality. All samples are randomly chosen.}
    \label{fig:Figure_adam}
\end{figure}

Figure~\ref{fig:Figure_adam} presents qualitative anomaly maps on 3D-ADAM.
Consistent with the quantitative results, RAD can localize small missing components and subtle surface defects under large viewpoint and appearance changes.
At the same time, occasional activations still appear on visually complex background or supporting-board regions.
These false positives are partly caused by RGB appearance changes induced by viewpoint and illumination variations, which may make some normal regions appear far from the stored anomaly-free memory.

Overall, the 3D-ADAM results show that RAD remains effective under geometric and appearance variations, while the score-distribution analysis explains why pixel-level $F_1$ remains challenging despite high pixel AUROC.
Since the current evaluation uses only RGB, a promising future direction is to exploit the full 3D signals available in 3D-ADAM, such as depth, point clouds, or multi-view consistency, to better separate objects from background and suppress spurious responses on geometrically stable regions.

\section{Ablation Results}
\label{sec:ablation}

\begin{table}[h]
\centering
\caption{Ablations of RAD elements on MVTec-AD(\%).}
\label{tab:ablation}
\begin{tabular}{c c c r r}
\toprule
Multi-layer & Global Context Retrieval & Spatial Conditioning & P-AP & P-$F_1$-max \\
\midrule
          & \checkmark &           & 70.3 & 66.2 \\
          & \checkmark & \checkmark & 71.1 & 66.8 \\
\checkmark &          &            & 74.0 & 70.0 \\
\checkmark & \checkmark &           & 75.0 & 70.8 \\
\rowcolor{cyan!10}
\checkmark & \checkmark & \checkmark & \textbf{75.6} & \textbf{71.3} \\
\bottomrule
\end{tabular}
\end{table}

We conduct ablations on the three key design elements of RAD, namely multi-layer memory, global context retrieval, and
spatial conditioning, on MVTec-AD. Pixel-level results are summarized in Table~\ref{tab:ablation}. We focus on
P-AP and P-$F_1$-max since they directly reflect localization quality and other metrics reach saturation.

\paragraph{Effect of multi-layer memory.}
Introducing multi-layer memory yields a clear improvement. P-AP increases by $3.3\uparrow$ and P-$F_1$-max increases
by $3.4\uparrow$. This shows that aggregating features from multiple encoder layers substantially improves the robustness
of patch matching and strengthens localization of both subtle and large defects.

\paragraph{Complementarity of multi-layer memory and global retrieval.}
Starting from the multi-layer variant, adding global context retrieval brings further gains. P-AP increases by
$1.0\uparrow$ and P-$F_1$-max increases by $0.8\uparrow$. Even after exploiting rich multi-layer features, restricting
patch retrieval to a small set of globally compatible reference images still brings clear improvements, which shows that
the global retrieval stage effectively suppresses interference from semantically mismatched categories in the shared
memory bank.

\paragraph{Effect of spatial conditioning.}
Spatial conditioning provides consistent and non-trivial gains in different configurations. On the single-layer
configuration with global retrieval, adding spatial conditioning improves P-AP by $0.8\uparrow$ and P-$F_1$-max
by $0.6\uparrow$. On the strongest ablated variant that already combines multi-layer memory and global retrieval,
introducing spatial conditioning further increases P-AP by $0.6\uparrow$ and P-$F_1$-max by $0.5\uparrow$. These gains,
obtained on top of a strong task-specific training-free baseline, highlight the importance of local spatial coherence for avoiding
mismatches between different functional parts and for sharpening anomaly maps.

Overall, the full RAD configuration that combines all three components achieves the best localization performance.
Compared with the baseline that uses only global retrieval without multi-layer memory or spatial conditioning,
the full model achieves $5.3\uparrow$ P-AP and $5.1\uparrow$ P-$F_1$-max, and each component contributes complementary
improvements rather than redundant effects.

\section{Same-backbone Comparison with Dinomaly}
\label{sec:same-backbone}

To separate the contribution of retrieval from backbone strength, we compare RAD with Dinomaly under the same DINOv3-B backbone. As shown in Tab.~\ref{tab:same_backbone}, RAD still outperforms Dinomaly on both MVTec-AD~\cite{bergmann2019mvtec} and MVTec-AD2~\cite{heckler2026mvtec}. In particular, RAD improves P-AP by 2.7 on MVTec-AD and by 6.0 on MVTec-AD2, indicating that the retrieval design contributes beyond the use of a stronger frozen encoder.

\begin{table}[h]
  \centering
  \caption{Same-backbone comparison with Dinomaly using DINOv3-B.}
  \label{tab:same_backbone}
  \setlength{\tabcolsep}{4pt}
  \renewcommand{\arraystretch}{1.05}
  \begin{tabular}{lcccccc}
    \toprule
    \multirow{2}{*}{Method} &
    \multicolumn{3}{c}{MVTec-AD} &
    \multicolumn{3}{c}{MVTec-AD2} \\
    \cmidrule(lr){2-4} \cmidrule(lr){5-7}
    & P-AP & P-$F_1$ & P-AUPRO
    & P-AP & P-$F_1$ & P-AUPRO \\
    \midrule
    Dinomaly (DINOv3-B)
    & 72.9 & 70.3 & 94.7
    & 28.3 & 34.9 & 55.0 \\
    \rowcolor{cyan!10}
    RAD (DINOv3-B)
    & \textbf{75.6} & \textbf{71.3} & \textbf{94.9}
    & \textbf{34.3} & \textbf{39.6} & \textbf{57.2} \\
    \bottomrule
  \end{tabular}
\end{table}

\section{Hyperparameter Sensitivity Analysis}
\label{sec:sensitivity}

We further study the sensitivity of RAD to the number of globally retrieved references $K$, the spatial neighborhood radius $\rho$, and the layer-fusion weights $w_\ell$. As shown in Tab.~\ref{tab:mvtec_sensitivity}, RAD is generally insensitive to these hyperparameters. When sweeping $K$, the maximum variations on I-AUROC/I-$F_1$/P-AP/P-$F_1$ are only 1.0/1.3/2.7/2.0. When sweeping $\rho$, the variations are further limited to 0.7/0.7/0.7/0.8. For multi-layer fusion weights, excluding single-layer ablations, the maximum variations are 0.2/0.4/2.0/1.3. These small changes show that RAD is robust to the default choices of $K$, $\rho$, and $w_\ell$.

\begin{table}[h]
\centering
\caption{Sensitivity sweeps on MVTec-AD. The radius $\rho$ is defined on the discrete patch grid and therefore only takes integer values.}
\begin{tabular}{lcccc}
\toprule
Setting & I-AUROC & I-F1 & P-AP & P-F1 \\
\midrule
\multicolumn{5}{l}{\emph{Top-$K$ sweep} ($\rho=1$, $w_l=[1,1,1,1]$)} \\
$K=5$ & 98.6 & 97.7 & 72.9 & 69.3 \\
$K=10$ & 99.1 & 98.1 & 74.1 & 70.1 \\
$K=20$ & 99.3 & 98.4 & 74.8 & 70.7 \\
$K=50$ & 99.4 & 98.7 & 75.3 & 71.1 \\
$K=100$ & 99.5 & 98.8 & 75.5 & 71.2 \\
$\mathbf{K=150}$ & 99.6 & 99.0 & 75.6 & 71.3 \\
$K=200$ & 99.5 & 98.9 & 75.6 & 71.3 \\
\midrule
\multicolumn{5}{l}{\emph{Radius sweep} ($K=150$, $w_l=[1,1,1,1]$)} \\
$\rho=0$ & 98.9 & 98.3 & 74.9 & 70.5 \\
$\mathbf{\rho=1}$ & 99.6 & 99.0 & 75.6 & 71.3 \\
$\rho=2$ & 99.5 & 98.9 & 75.5 & 71.2 \\
$\rho=3$ & 99.5 & 98.9 & 75.5 & 71.2 \\
$\rho=4$ & 99.4 & 98.9 & 75.4 & 71.2 \\
$\rho=6$ & 99.3 & 98.8 & 75.3 & 71.1 \\
$\rho=8$ & 99.2 & 98.7 & 75.2 & 70.9 \\
$\rho=10$ & 98.9 & 98.4 & 74.9 & 70.6 \\
\midrule
\multicolumn{5}{l}{\emph{Layer-fusion sweep} ($K=150$, $\rho=1$)} \\
$w_l=[1,0,0,0]$ & 96.6 & 96.5 & 65.5 & 64.1 \\
$w_l=[0,1,0,0]$ & 99.2 & 98.4 & 71.0 & 68.6 \\
$w_l=[0,0,1,0]$ & 99.4 & 98.8 & 72.3 & 68.9 \\
$w_l=[0,0,0,1]$ & 99.5 & 99.0 & 71.2 & 66.9 \\
$\mathbf{w_l=[1,1,1,1]}$ & 99.6 & 99.0 & 75.6 & 71.3 \\
$w_l=[4,1,1,1]$ & 99.4 & 98.6 & 76.0 & 71.7 \\
$w_l=[1,4,1,1]$ & 99.4 & 98.7 & 74.4 & 70.8 \\
$w_l=[1,1,4,1]$ & 99.5 & 98.8 & 74.2 & 70.4 \\
$w_l=[1,1,1,4]$ & 99.6 & 99.0 & 75.2 & 70.5 \\
$w_l=[4,4,1,1]$ & 99.4 & 98.6 & 74.7 & 71.1 \\
$w_l=[1,4,4,1]$ & 99.5 & 98.8 & 74.0 & 70.5 \\
$w_l=[1,1,4,4]$ & 99.5 & 98.9 & 75.2 & 70.7 \\
$w_l=[1,2,3,4]$ & 99.5 & 98.9 & 75.4 & 70.9 \\
$w_l=[4,3,2,1]$ & 99.4 & 98.6 & 74.9 & 71.2 \\
\bottomrule
\end{tabular}
\label{tab:mvtec_sensitivity}
\end{table}

\section{Unseen Normal Generalization}
\label{sec:unseen-normal}

To assess whether retrieval-based scoring over-penalizes normal test samples that are not stored in the memory bank, we evaluate RAD on the official unseen-normal test subset of MVTec-AD. As shown in Tab.~\ref{tab:unseen_normal}, RAD produces a mean image-level score of \textbf{0.0596} for unseen normal samples and \textbf{0.2069} for anomalous samples, while Dinomaly produces \textbf{0.1724} and \textbf{0.2837}, respectively. At 95\% TPR, RAD obtains a lower false-positive rate of \textbf{3.5\%}, compared with \textbf{9.9\%} for Dinomaly. These results show that RAD does not simply assign high anomaly scores to all samples absent from the memory bank; unseen normal variations can still receive low retrieval scores when they remain close to the anomaly-free support.

\begin{table}[h]
  \centering
  \caption{Unseen-normal generalization on the official MVTec-AD unseen-normal test subset. RAD better separates unseen normal samples from anomalous samples and yields a lower false-positive rate at 95\% TPR.}
  \label{tab:unseen_normal}
  \setlength{\tabcolsep}{5pt}
  \renewcommand{\arraystretch}{1.05}
  \begin{tabular}{lccc}
    \toprule
    Method & Normal score $\downarrow$ & Anomaly score & FPR@95TPR $\downarrow$ \\
    \midrule
    Dinomaly & 0.1724 & 0.2837 & 9.9\% \\
    \rowcolor{cyan!10}
    RAD & \textbf{0.0596} & 0.2069 & \textbf{3.5\%} \\
    \bottomrule
  \end{tabular}
\end{table}

\section{Additional Incremental-class Scaling Results on VisA}
\label{sec:visa_incremental_scaling}

To further examine whether the incremental-class scaling behavior transfers beyond MVTec-AD, we conduct an additional experiment on VisA under a challenging target-class scaling protocol.
Specifically, we keep \textit{chewinggum}, \textit{cashew}, and \textit{candle} fixed in the anomaly-free memory, and gradually scale the target class \textit{macaroni1}.
For a fair comparison, Dinomaly is also reported with the stronger DINOv3 backbone.

As shown in Tab.~\ref{tab:visa_scaling_macaroni1}, RAD consistently outperforms Dinomaly across all memory sizes.
In the extremely low-data regime, with only $0.5\%$ of target-class anomaly-free samples, RAD improves I-AUROC by \textbf{19.1$\uparrow$}.
As more anomaly-free samples are added, both methods improve, while RAD maintains stronger localization performance.
At the full-data setting, RAD still improves over Dinomaly by \textbf{5.7$\uparrow$} on P-AP and \textbf{3.0$\uparrow$} on P-$F_1$.
These results support the conclusion that RAD can incorporate new class evidence by expanding the memory bank, without re-training model parameters.

\begin{table}[h]
\centering
\small
\setlength{\tabcolsep}{4pt}
\renewcommand{\arraystretch}{1.05}
\caption{Transfer results of the incremental-class scaling experiment on VisA. We keep \textit{chewinggum}, \textit{cashew}, and \textit{candle} fixed and gradually scale the target class \textit{macaroni1}.}
\label{tab:visa_scaling_macaroni1}
\begin{tabular}{lccc|ccc}
\toprule
\multirow{2}{*}{Ratio (\%)} & \multicolumn{3}{c|}{Dinomaly (DINOv3)} & \multicolumn{3}{c}{RAD} \\
 & I-AUROC & P-AP & P-$F_1$ & I-AUROC & P-AP & P-$F_1$ \\
\midrule
0.5   & 68.6 & 16.5 & 26.9 & \textbf{87.7} & \textbf{20.2} & \textbf{30.2} \\
5.0   & 90.9 & 26.4 & 34.7 & \textbf{92.6} & \textbf{28.6} & \textbf{35.2} \\
20.0  & 93.1 & 28.8 & 34.9 & \textbf{93.9} & \textbf{31.7} & \textbf{36.8} \\
100.0 & 95.9 & 31.7 & 37.2 & \textbf{96.5} & \textbf{37.4} & \textbf{40.2} \\
\bottomrule
\end{tabular}
\end{table}

\section{Per-category Quantitative Results}
\label{sec:pre-category}
For future research, we report the per-class results of MVTec-AD~\cite{bergmann2019mvtec}, VisA~\cite{zou2022spot}, and Real-IAD~\cite{wang2024real}.
The results of image-level anomaly detection and pixel-level anomaly localization on MVTec-AD are presented in
Table~\ref{tab:mvtec_i_per_class} and Table~\ref{tab:mvtec_p_per_class}, respectively.
The results of image-level anomaly detection and pixel-level anomaly localization on VisA are presented in
Table~\ref{tab:visa_i_per_class} and Table~\ref{tab:visa_p_per_class}, respectively.
The results of image-level anomaly detection and pixel-level anomaly localization on Real-IAD are presented in
Table~\ref{tab:real_iad_i_per_class} and Table~\ref{tab:real_iad_p_per_class}, respectively.

\begin{table*}[h]
\centering
\caption{Per-class performance on \textbf{MVTec-AD} dataset for multi-class anomaly detection with AUROC/AP/$F_{1}$-max metrics.}
\label{tab:mvtec_i_per_class}
\resizebox{\textwidth}{!}{
\setlength{\tabcolsep}{3pt}
\setlength{\aboverulesep}{0pt}
\setlength{\belowrulesep}{0pt}
\begin{tabular}{cccccccccc>{\columncolor{oursblue}}c}
\toprule
\multicolumn{2}{c}{Method $\rightarrow$} 
& RD4AD & UniAD & SimpleNet & DeSTSeg & DiAD & MambaAD & OmiAD & Dinomaly & RAD \\
\cmidrule(lr){1-2}
\multicolumn{2}{c}{Category $\downarrow$} 
& CVPR'22 & NIPS'22 & CVPR'23 & CVPR'23 & AAAI'24 & NIPS'24 & ICML'25 & CVPR'25 & Ours \\
\cmidrule(lr){1-10}
\multirow{10}{*}{\rotatebox{270}{Objects}} 
& Bottle     & 99.6/99.9/98.4 & 99.7/\textbf{100.}/\textbf{100.} & \textbf{100.}/\textbf{100.}/\textbf{100.} & 98.7/99.6/96.8 & 99.7/96.5/91.8 & \textbf{100.}/\textbf{100.}/\textbf{100.} & \textbf{100.}/\textbf{100.}/\textbf{100.} & \textbf{100.}/\textbf{100.}/\textbf{100.} & \textbf{100.}/\textbf{100.}/\textbf{100.} \\

& \cellcolor{rowgray} Cable & \cellcolor{rowgray} 84.1/89.5/82.5 & \cellcolor{rowgray} 95.2/95.9/88.0 & \cellcolor{rowgray} 97.5/98.5/94.7 & \cellcolor{rowgray} 89.5/94.6/85.9 & \cellcolor{rowgray} 94.8/98.8/95.2 & \cellcolor{rowgray} 98.8/99.2/95.7 & \cellcolor{rowgray} 98.4/99.4/95.6 & \cellcolor{rowgray} \textbf{100.}/\textbf{100.}/\textbf{100.} & 99.8/99.9/98.4 \\

& Capsule    & 94.1/96.9/96.9 & 86.9/97.8/94.4 & 90.7/97.9/93.5 & 82.8/95.9/92.6 & 89.0/97.5/95.5 & 94.4/98.7/94.9 & 94.7/99.3/96.8 & 97.9/99.5/97.7 & \textbf{98.5}/\textbf{99.7}/\textbf{99.1} \\

& \cellcolor{rowgray} Hazelnut & \cellcolor{rowgray} 60.8/69.8/86.4 & \cellcolor{rowgray} 99.8/\textbf{100.}/99.3 & \cellcolor{rowgray} 99.9/99.9/99.3 & \cellcolor{rowgray} 98.8/99.2/98.6 & \cellcolor{rowgray} 99.5/99.7/97.3 & \cellcolor{rowgray} \textbf{100.}/\textbf{100.}/\textbf{100.} & \cellcolor{rowgray} \textbf{100.}/\textbf{100.}/\textbf{100.} & \cellcolor{rowgray} \textbf{100.}/\textbf{100.}/\textbf{100.} & \textbf{100.}/\textbf{100.}/\textbf{100.} \\

& Metal Nut  & \textbf{100.}/\textbf{100.}/99.5 & 99.2/99.9/99.5 & 96.9/99.3/96.1 & 92.9/98.4/92.2 & 99.1/96.0/91.6 & 99.9/\textbf{100.}/99.5 & 99.4/99.9/98.9 & \textbf{100.}/\textbf{100.}/\textbf{100.} & \textbf{100.}/\textbf{100.}/\textbf{100.} \\

& \cellcolor{rowgray} Pill & \cellcolor{rowgray} 97.5/99.6/96.8 & \cellcolor{rowgray} 93.7/98.7/95.7 & \cellcolor{rowgray} 88.2/97.7/92.5 & \cellcolor{rowgray} 77.1/94.4/91.7 & \cellcolor{rowgray} 95.7/98.5/94.5 & \cellcolor{rowgray} 97.0/99.5/96.2 & \cellcolor{rowgray} 94.2/99.2/95.4 & \cellcolor{rowgray} \textbf{99.1}/\textbf{99.9}/\textbf{98.3} & 98.7/99.8/98.2 \\

& Screw      & 97.7/99.3/95.8 & 87.5/96.5/89.0 & 76.7/90.6/87.7 & 69.9/88.4/85.4 & 90.7/\textbf{99.7}/\textbf{97.9} & 94.7/97.9/94.0 & 96.9/98.8/96.3 & \textbf{98.4}/99.5/96.1 & 97.2/99.1/95.3 \\

& \cellcolor{rowgray} Toothbrush & \cellcolor{rowgray} 97.2/99.0/94.7 & \cellcolor{rowgray} 94.2/97.4/95.2 & \cellcolor{rowgray} 89.7/95.7/92.3 & \cellcolor{rowgray} 71.7/89.3/84.5 & \cellcolor{rowgray} 99.7/99.9/99.2 & \cellcolor{rowgray} 98.3/99.3/98.4 & \cellcolor{rowgray} 99.7/\textbf{100.}/\textbf{100.} & \cellcolor{rowgray} \textbf{100.}/\textbf{100.}/\textbf{100.} & \textbf{100.}/\textbf{100.}/\textbf{100.} \\

& Transistor & 94.2/99.8/90.0 & 99.8/98.0/93.8 & 99.2/98.7/97.6 & 78.2/79.5/68.8 & 99.8/99.6/97.4 & \textbf{100.}/\textbf{100.}/\textbf{100.} & 99.9/99.9/98.8 & 99.0/98.0/96.4 & 99.5/99.3/96.3 \\

& \cellcolor{rowgray} Zipper & \cellcolor{rowgray} 99.5/99.9/99.2 & \cellcolor{rowgray} 95.8/99.5/97.1 & \cellcolor{rowgray} 99.0/99.7/98.3 & \cellcolor{rowgray} 88.4/96.3/93.1 & \cellcolor{rowgray} 95.1/99.1/94.4 & \cellcolor{rowgray} 99.3/99.8/97.5 & \cellcolor{rowgray} 99.8/\textbf{100.}/99.6 & \cellcolor{rowgray} \textbf{100.}/\textbf{100.}/\textbf{100.} & 99.7/99.9/99.2 \\

\cmidrule(lr){1-10}
\multirow{5}{*}{\rotatebox{270}{Textures}} 
& Carpet & 98.5/99.6/97.2 & 99.8/99.9/\textbf{99.4} & 95.7/98.7/93.2 & 95.9/98.8/94.9 & 99.4/99.9/98.3 & 99.8/99.9/\textbf{99.4} & 99.6/\textbf{100.}/\textbf{99.4} & 99.8/\textbf{100.}/98.9 & \textbf{99.9}/\textbf{100.}/\textbf{99.4} \\

& \cellcolor{rowgray} Grid & \cellcolor{rowgray} 98.0/99.4/96.5 & \cellcolor{rowgray} 98.2/99.5/97.3 & \cellcolor{rowgray} 97.6/99.2/96.4 & \cellcolor{rowgray} 97.9/99.2/96.6 & \cellcolor{rowgray} 98.5/99.8/97.7 & \cellcolor{rowgray} \textbf{100.}/\textbf{100.}/\textbf{100.} & \cellcolor{rowgray} 99.8/99.9/99.1 & \cellcolor{rowgray} 99.9/\textbf{100.}/99.1 & \textbf{100.}/\textbf{100.}/\textbf{100.} \\

& Leather & \textbf{100.}/\textbf{100.}/\textbf{100.} & \textbf{100.}/\textbf{100.}/\textbf{100.} & \textbf{100.}/\textbf{100.}/\textbf{100.} & 99.2/99.8/98.9 & 99.8/99.7/97.6 & \textbf{100.}/\textbf{100.}/\textbf{100.} & \textbf{100.}/\textbf{100.}/\textbf{100.} & \textbf{100.}/\textbf{100.}/\textbf{100.} & \textbf{100.}/\textbf{100.}/\textbf{100.} \\

& \cellcolor{rowgray} Tile & \cellcolor{rowgray} 98.3/99.3/96.4 & \cellcolor{rowgray} 99.3/99.8/98.2 & \cellcolor{rowgray} 99.3/99.8/98.8 & \cellcolor{rowgray} 97.0/98.9/95.3 & \cellcolor{rowgray} 96.8/99.9/98.4 & \cellcolor{rowgray} 98.2/99.3/95.4 & \cellcolor{rowgray} \textbf{100.}/99.9/98.8 & \cellcolor{rowgray} \textbf{100.}/\textbf{100.}/\textbf{100.} & 99.8/99.9/98.8 \\

& Wood       & 99.2/99.8/98.3 & 98.6/99.6/96.6 & 98.4/99.5/96.7 & \textbf{99.9}/\textbf{100.}/99.2 & 99.7/\textbf{100.}/\textbf{100.} & 98.8/99.6/96.6 & 99.0/99.8/98.3 & 99.8/99.9/99.2 & 99.4/99.8/98.3 \\

\midrule
\multicolumn{2}{c}{\cellcolor{rowgray} Mean} & \cellcolor{rowgray} 94.6/96.5/95.2 & \cellcolor{rowgray} 96.5/98.8/96.2 & \cellcolor{rowgray} 95.3/98.4/95.8 & \cellcolor{rowgray} 89.2/95.5/91.6 & \cellcolor{rowgray} 97.2/99.0/96.5 & \cellcolor{rowgray} 98.6/99.6/97.8 & \cellcolor{rowgray} 98.8/99.7/98.5 & \cellcolor{rowgray} \textbf{99.6}/\textbf{100.}/\textbf{99.0} & 99.5/\textbf{100.}/98.9 \\
\bottomrule
\end{tabular}
}
\end{table*}

\begin{table*}[p]
\centering
\caption{Per-class performance on \textbf{MVTec-AD} dataset for multi-class anomaly localization with AUROC/AP/$F_{1}$-max/AUPRO metrics.}
\label{tab:mvtec_p_per_class}
\resizebox{\textwidth}{!}{
\setlength{\tabcolsep}{3pt}
\setlength{\aboverulesep}{0pt}
\setlength{\belowrulesep}{0pt}
\begin{tabular}{cccccccccc>{\columncolor{oursblue}}c}
\toprule
\multicolumn{2}{c}{Method $\rightarrow$} 
& RD4AD & UniAD & SimpleNet & DeSTSeg & DiAD & MambaAD & OmiAD & Dinomaly & RAD \\
\cmidrule(lr){1-2}
\multicolumn{2}{c}{Category $\downarrow$} 
& CVPR'22 & NIPS'22 & CVPR'23 & CVPR'23 & AAAI'24 & NIPS'24 & ICML'25 & CVPR'25 & Ours \\
\cmidrule(lr){1-10}
\multirow{10}{*}{\rotatebox{270}{Objects}} 
& Bottle     & 97.8/68.2/67.6/94.0 & 98.1/66.0/69.2/93.1 & 97.2/53.8/62.4/89.0 & 93.3/61.7/56.0/67.5 & 98.4/52.2/54.8/86.6 & 98.8/79.7/76.7/95.2 & 98.6/74.9/73.8/95.6 & \textbf{99.2}/\textbf{88.6}/\textbf{84.2}/\textbf{96.6} & 98.9/88.5/80.8/\textbf{96.6} \\

& \cellcolor{rowgray} Cable & \cellcolor{rowgray} 85.1/26.3/33.6/75.1 & \cellcolor{rowgray} 97.3/39.9/45.2/86.1 & \cellcolor{rowgray} 96.7/42.4/51.2/85.4 & \cellcolor{rowgray} 89.3/37.5/40.5/49.4 & \cellcolor{rowgray} 96.8/50.1/57.8/80.5 & \cellcolor{rowgray} 95.8/42.2/48.1/90.3 & \cellcolor{rowgray} 98.3/60.5/62.9/91.8 & \cellcolor{rowgray} \textbf{98.6}/72.0/\textbf{74.3}/\textbf{94.2} & 98.3/\textbf{79.5}/72.9/93.8 \\

& Capsule    & 98.8/43.4/50.0/94.8 & 98.5/42.7/46.5/92.1 & 98.5/35.4/44.3/84.5 & 95.8/47.9/48.9/62.1 & 97.1/42.0/45.3/87.2 & 98.4/43.9/47.7/92.6 & 98.9/48.1/52.0/92.5 & 98.7/61.4/\textbf{60.3}/97.2 & \textbf{99.0}/\textbf{65.3}/60.1/\textbf{97.6} \\

& \cellcolor{rowgray} Hazelnut & \cellcolor{rowgray} 97.9/36.2/51.6/92.7 & \cellcolor{rowgray} 98.1/55.2/56.8/94.1 & \cellcolor{rowgray} 98.4/44.6/51.4/87.4 & \cellcolor{rowgray} 98.2/65.8/61.6/84.5 & \cellcolor{rowgray} 98.3/79.2/80.4/91.5 & \cellcolor{rowgray} 99.0/63.6/64.4/95.7 & \cellcolor{rowgray} 98.6/59.6/59.9/94.3 & \cellcolor{rowgray} 99.4/82.2/76.4/\textbf{97.0} & \textbf{99.6}/\textbf{89.9}/\textbf{84.5}/\textbf{97.0} \\

& Metal Nut  & 94.8/55.5/66.4/91.9 & 62.7/14.6/29.2/81.8 & 98.0/\textbf{83.1}/79.4/85.2 & 84.2/42.0/22.8/53.0 & 97.3/30.0/38.3/90.6 & 96.7/74.5/79.1/93.7 & 96.5/66.6/75.6/90.3 & 96.9/78.6/86.7/94.9 & \textbf{98.5}/\textbf{89.0}/\textbf{89.2}/\textbf{96.0} \\

& \cellcolor{rowgray} Pill & \cellcolor{rowgray} 97.5/63.4/65.2/95.8 & \cellcolor{rowgray} 95.0/44.0/53.9/95.3 & \cellcolor{rowgray} 96.5/72.4/67.7/81.9 & \cellcolor{rowgray} 96.2/61.7/41.8/27.9 & \cellcolor{rowgray} 95.7/46.0/51.4/89.0 & \cellcolor{rowgray} 97.4/64.0/66.5/95.7 & \cellcolor{rowgray} 96.6/56.8/60.7/95.9 & \cellcolor{rowgray} \textbf{97.8}/\textbf{76.4}/\textbf{71.6}/\textbf{97.3} & 94.9/64.4/64.2/96.6 \\

& Screw      & 99.4/40.2/44.6/96.8 & 98.3/28.7/37.6/95.2 & 96.5/15.9/23.2/84.0 & 93.8/19.9/25.3/47.3 & 97.9/\textbf{60.6}/59.6/95.0 & 99.5/49.8/50.9/97.1 & 99.5/38.7/43.5/97.2 & \textbf{99.6}/60.2/\textbf{59.6}/\textbf{98.3} & 99.2/58.4/56.3/96.4 \\

& \cellcolor{rowgray} Toothbrush & \cellcolor{rowgray} 99.0/53.6/58.8/92.0 & \cellcolor{rowgray} 98.4/34.9/45.7/87.9 & \cellcolor{rowgray} 98.4/46.9/52.5/87.4 & \cellcolor{rowgray} 96.2/52.9/58.8/30.9 & \cellcolor{rowgray} 99.0/\textbf{78.7}/72.8/95.0 & \cellcolor{rowgray} 99.0/48.5/59.2/91.7 & \cellcolor{rowgray} 98.7/40.5/56.2/91.1 & \cellcolor{rowgray} 98.9/51.5/62.6/95.3 & \textbf{99.4}/73.0/\textbf{72.9}/\textbf{97.1} \\

& Transistor & 85.9/42.3/45.2/74.7 & \textbf{97.9}/59.5/\textbf{64.6}/93.5 & 95.8/58.2/56.0/83.2 & 73.6/38.4/39.2/43.9 & 95.1/15.6/31.7/90.0 & 96.5/69.4/67.1/87.0 & 98.4/73.4/72.5/\textbf{96.1} & 93.2/59.9/58.5/77.0 & 97.5/\textbf{80.8}/72.6/92.5 \\

& \cellcolor{rowgray} Zipper & \cellcolor{rowgray} 98.5/53.9/60.3/94.1 & \cellcolor{rowgray} 96.8/40.1/41.9/92.6 & \cellcolor{rowgray} 97.9/53.4/54.6/90.7 & \cellcolor{rowgray} 97.3/64.7/59.2/66.9 & \cellcolor{rowgray} 96.2/60.7/60.0/91.6 & \cellcolor{rowgray} 98.4/60.4/61.7/94.3 & \cellcolor{rowgray} 98.6/52.7/59.3/95.6 & \cellcolor{rowgray} \textbf{99.2}/\textbf{79.5}/\textbf{75.4}/\textbf{97.2} & 98.9/76.4/69.6/95.9 \\

\cmidrule(lr){1-10}
\multirow{5}{*}{\rotatebox{270}{Textures}} 
& Carpet     & 99.0/58.5/60.4/95.1 & 98.5/49.9/51.1/94.4 & 97.4/38.7/43.2/90.6 & 93.6/59.9/58.9/89.3 & 98.6/42.2/46.4/90.6 & 99.2/60.0/63.3/96.7 & 98.5/52.9/54.8/94.6 & 99.3/68.7/71.1/97.6 & \textbf{99.6}/\textbf{83.5}/\textbf{78.0}/\textbf{98.5} \\

& \cellcolor{rowgray} Grid & \cellcolor{rowgray} 96.5/23.0/28.4/97.0 & \cellcolor{rowgray} 63.1/10.7/11.9/92.9 & \cellcolor{rowgray} 96.8/20.5/27.6/88.6 & \cellcolor{rowgray} 97.0/42.1/46.9/86.8 & \cellcolor{rowgray} 96.6/\textbf{66.0}/\textbf{64.1}/94.0 & \cellcolor{rowgray} 99.2/47.4/47.7/97.0 & \cellcolor{rowgray} 98.5/35.4/37.1/95.5 & \cellcolor{rowgray} \textbf{99.4}/55.3/57.7/\textbf{97.2} & 99.3/60.1/59.4/95.8 \\

& Leather    & 99.3/38.0/45.1/97.4 & 98.8/32.9/34.4/96.8 & 98.7/28.5/32.9/92.7 & 99.5/\textbf{71.5}/\textbf{66.5}/91.1 & \textbf{99.8}/56.1/62.3/91.3 & 99.4/50.3/53.3/98.7 & 98.9/36.3/39.4/96.9 & 99.4/52.2/55.0/97.6 & 99.4/63.4/59.4/\textbf{98.9} \\

& \cellcolor{rowgray} Tile & \cellcolor{rowgray} 95.3/48.5/60.5/85.8 & \cellcolor{rowgray} 91.8/42.1/50.6/78.4 & \cellcolor{rowgray} 95.7/60.5/59.9/90.6 & \cellcolor{rowgray} 93.0/71.0/66.2/87.1 & \cellcolor{rowgray} 92.4/65.7/64.1/\textbf{90.7} & \cellcolor{rowgray} 93.8/45.1/54.8/80.0 & \cellcolor{rowgray} 92.7/47.5/54.7/82.2 & \cellcolor{rowgray} \textbf{98.1}/80.1/75.7/90.5 & 97.5/\textbf{82.6}/\textbf{77.1}/89.5 \\

& Wood       & 95.3/47.8/51.0/90.0 & 93.2/37.2/41.5/86.7 & 91.4/34.8/39.7/76.3 & 95.9/77.3/71.3/83.4 & 93.3/43.3/43.5/\textbf{97.5} & 94.4/46.2/48.2/91.2 & 94.2/44.5/48.0/88.5 & \textbf{97.6}/72.8/68.4/94.0 & 97.0/\textbf{78.9}/\textbf{72.6}/94.8 \\
\midrule
\multicolumn{2}{c}{\cellcolor{rowgray} Mean} & \cellcolor{rowgray} 96.1/48.6/53.8/91.1 & \cellcolor{rowgray} 96.8/43.4/49.5/90.7 & \cellcolor{rowgray} 96.9/45.9/49.7/86.5 & \cellcolor{rowgray} 93.1/54.3/50.9/64.8 & \cellcolor{rowgray} 96.8/52.6/55.5/90.7 & \cellcolor{rowgray} 97.7/56.3/59.2/93.1 & \cellcolor{rowgray} 97.7/52.6/56.7/93.2 & \cellcolor{rowgray} 98.4/69.3/69.2/94.8 & \textbf{98.5}/\textbf{75.6}/\textbf{71.3}/\textbf{95.8} \\
\bottomrule
\end{tabular}
}
\end{table*}

\begin{table*}[p]
\centering
\caption{Per-class performance on \textbf{VisA} dataset for multi-class anomaly detection with AUROC/AP/$F_{1}$-max metrics.}
\label{tab:visa_i_per_class}
\resizebox{\textwidth}{!}{
\setlength{\tabcolsep}{3pt}
\setlength{\aboverulesep}{0pt}
\setlength{\belowrulesep}{0pt}
\begin{tabular}{ccccccccc>{\columncolor{oursblue}}c}
\toprule
Method $\rightarrow$ & RD4AD & UniAD & SimpleNet & DeSTSeg & DiAD & MambaAD & OmiAD & Dinomaly & RAD \\
\cmidrule(lr){1-1}
Category $\downarrow$ & CVPR'22 & NIPS'22 & CVPR'23 & CVPR'23 & AAAI'24 & NIPS'24 & ICML'25 & CVPR'25 & Ours \\
\midrule
pcb1        & 96.2/95.5/91.9 & 92.8/92.7/87.8 & 91.6/91.9/86.0 & 87.6/83.1/83.7 & 88.1/88.7/80.7 & 95.4/93.0/91.6 & 97.8/97.4/95.1 & \textbf{99.1}/\textbf{99.1}/\textbf{96.6} & 98.1/97.2/\textbf{96.6} \\
\cellcolor{rowgray} pcb2 & \cellcolor{rowgray} 97.8/97.8/94.2 & \cellcolor{rowgray} 87.8/87.7/83.1 & \cellcolor{rowgray} 92.4/93.3/84.5 & \cellcolor{rowgray} 86.5/85.8/82.6 & \cellcolor{rowgray} 91.4/91.4/84.7 & \cellcolor{rowgray} 94.2/93.7/89.3 & \cellcolor{rowgray} 97.8/98.5/94.1 & \cellcolor{rowgray} \textbf{99.3}/\textbf{99.2}/\textbf{97.0} & 96.4/94.5/94.2 \\
pcb3        & 96.4/96.2/91.0 & 78.6/78.6/76.1 & 89.1/91.1/82.6 & 93.7/95.1/87.0 & 86.2/87.6/77.6 & 93.7/94.1/86.7 & 96.7/95.1/87.6 & 98.9/98.9/\textbf{96.1} & \textbf{99.5}/\textbf{99.5}/96.0 \\
\cellcolor{rowgray} pcb4 & \cellcolor{rowgray} 99.9/99.9/\textbf{99.0} & \cellcolor{rowgray} 98.8/98.8/94.3 & \cellcolor{rowgray} 97.0/97.0/93.5 & \cellcolor{rowgray} 97.8/97.8/92.7 & \cellcolor{rowgray} 99.6/99.5/97.0 & \cellcolor{rowgray} 99.9/99.9/98.5 & \cellcolor{rowgray} \textbf{100.}/\textbf{100.}/\textbf{99.0} & \cellcolor{rowgray} 99.8/99.8/98.0 & 99.6/99.6/98.5 \\
\cmidrule(lr){1-9}
macaroni1   & 75.9/ 1.5/76.8 & 79.9/79.8/72.7 & 85.9/82.5/73.1 & 76.6/69.0/71.0 & 85.7/85.2/78.8 & 91.6/89.8/81.6 & 97.3/97.5/92.8 & \textbf{98.0}/\textbf{97.6}/\textbf{94.2} & 96.9/97.0/92.2 \\
\cellcolor{rowgray} macaroni2 & \cellcolor{rowgray} 88.3/84.5/83.8 & \cellcolor{rowgray} 71.6/71.6/69.9 & \cellcolor{rowgray} 68.3/54.3/59.7 & \cellcolor{rowgray} 68.9/62.1/67.7 & \cellcolor{rowgray} 62.5/57.4/69.6 & \cellcolor{rowgray} 81.6/78.0/73.8 & \cellcolor{rowgray} 85.1/83.3/79.5 & \cellcolor{rowgray} \textbf{95.9}/\textbf{95.7}/\textbf{90.7} & 84.3/82.6/80.4 \\
capsules    & 82.2/90.4/81.3 & 55.6/55.6/76.9 & 74.1/82.8/74.6 & 87.1/93.0/84.2 & 58.2/69.0/78.5 & 91.8/95.0/88.8 & 85.7/89.0/78.8 & \textbf{98.6}/\textbf{99.0}/\textbf{97.1} & 97.6/98.5/95.6 \\
\cellcolor{rowgray} candle & \cellcolor{rowgray} 92.3/92.9/86.0 & \cellcolor{rowgray} 94.1/94.0/86.1 & \cellcolor{rowgray} 84.1/73.3/76.6 & \cellcolor{rowgray} 94.9/94.8/89.2 & \cellcolor{rowgray} 92.8/92.0/87.6 & \cellcolor{rowgray} 96.8/96.9/90.1 & \cellcolor{rowgray} 97.4/98.6/93.4 & \cellcolor{rowgray} 98.7/98.8/95.1 & \textbf{99.5}/\textbf{99.5}/\textbf{96.5} \\
\midrule
cashew      & 92.0/95.8/90.7 & 92.8/92.8/91.4 & 88.0/91.3/84.7 & 92.0/96.1/88.1 & 91.5/95.7/89.7 & 94.5/97.3/91.1 & 93.3/96.4/90.9 & 98.7/99.4/97.0 & \textbf{99.1}/\textbf{99.6}/\textbf{97.5} \\
\cellcolor{rowgray} chewing gum & \cellcolor{rowgray} 94.9/97.5/92.1 & \cellcolor{rowgray} 96.3/96.2/95.2 & \cellcolor{rowgray} 96.4/98.2/93.8 & \cellcolor{rowgray} 95.8/98.3/94.7 & \cellcolor{rowgray} 99.1/99.5/95.9 & \cellcolor{rowgray} 97.7/98.9/94.2 & \cellcolor{rowgray} 99.2/99.8/97.5 & \cellcolor{rowgray} \textbf{99.8}/\textbf{99.9}/\textbf{99.0} & 99.2/99.6/98.0 \\
fryum       & 95.3/97.9/91.5 & 83.0/83.0/85.0 & 88.4/93.0/83.3 & 92.1/96.1/89.5 & 89.8/95.0/87.2 & 95.2/97.7/90.5 & 94.0/96.5/88.5 & 98.8/99.4/96.5 & \textbf{99.5}/\textbf{99.8}/\textbf{99.0} \\
\cellcolor{rowgray} pipe fryum & \cellcolor{rowgray} 97.9/98.9/96.5 & \cellcolor{rowgray} 94.7/94.7/93.9 & \cellcolor{rowgray} 90.8/95.5/88.6 & \cellcolor{rowgray} 94.1/97.1/91.9 & \cellcolor{rowgray} 96.2/98.1/93.7 & \cellcolor{rowgray} 98.7/99.3/97.0 & \cellcolor{rowgray} 98.9/99.4/97.0 & \cellcolor{rowgray} 99.2/99.7/97.0 & \textbf{99.8}/\textbf{99.9}/\textbf{98.5} \\
\cmidrule(lr){1-9}
Mean        & 92.4/92.4/89.6 & 85.5/85.5/84.4 & 87.2/87.0/81.8 & 88.9/89.0/85.2 & 86.8/88.3/85.1 & 94.3/94.5/89.4 & 95.3/96.0/91.2 & \textbf{98.7}/\textbf{98.9}/\textbf{96.2} & 97.5/97.3/95.2 \\
\bottomrule
\end{tabular}
}
\end{table*}

\begin{table*}[p]
\centering
\caption{Per-class performance on \textbf{VisA} dataset for multi-class anomaly localization with AUROC/AP/$F_{1}$-max/AUPRO metrics.}
\label{tab:visa_p_per_class}
\resizebox{\textwidth}{!}{
\setlength{\tabcolsep}{3pt}
\setlength{\aboverulesep}{0pt}
\setlength{\belowrulesep}{0pt}
\begin{tabular}{ccccccccc>{\columncolor{oursblue}}c}
\toprule
Method $\rightarrow$ & RD4AD & UniAD & SimpleNet & DeSTSeg & DiAD & MambaAD & OmiAD & Dinomaly & RAD \\
\cmidrule(lr){1-1}
Category $\downarrow$ & CVPR'22 & NIPS'22 & CVPR'23 & CVPR'23 & AAAI'24 & NIPS'24 & ICML'25 & CVPR'25 & Ours \\
\midrule
pcb1        & 99.4/66.2/62.4/95.8 & 93.3/ 3.9/ 8.3/64.1 & 99.2/86.1/78.8/83.6 & 95.8/46.4/49.0/83.2 & 98.7/49.6/52.8/80.2 & \textbf{99.8}/77.1/72.4/92.8 & 99.7/70.8/66.2/93.0 & 99.5/87.9/80.5/95.1 & 99.7/\textbf{90.8}/\textbf{83.4}/\textbf{96.0} \\
\cellcolor{rowgray} pcb2 & \cellcolor{rowgray} 98.0/22.3/30.0/90.8 & \cellcolor{rowgray} 93.9/ 4.2/ 9.2/66.9 & \cellcolor{rowgray} 96.6/ 8.9/18.6/85.7 & \cellcolor{rowgray} 97.3/14.6/28.2/79.9 & \cellcolor{rowgray} 95.2/ 7.5/16.7/67.0 & \cellcolor{rowgray} \textbf{98.9}/13.3/23.4/89.6 & \cellcolor{rowgray} \textbf{98.9}/16.6/22.2/87.1 & \cellcolor{rowgray} 98.0/\textbf{47.0}/\textbf{49.8}/\textbf{91.3} & \textbf{98.9}/26.5/42.3/89.5 \\
pcb3        & 97.9/26.2/35.2/93.9 & 97.3/13.8/21.9/70.6 & 97.2/31.0/36.1/85.1 & 97.7/28.1/33.4/62.4 & 96.7/ 8.0/18.8/68.9 & \textbf{99.1}/18.3/27.4/89.1 & \textbf{99.1}/29.7/30.4/87.2 & 98.4/\textbf{41.7}/\textbf{45.3}/\textbf{94.6} & 99.0/30.9/38.5/92.8 \\
\cellcolor{rowgray} pcb4 & \cellcolor{rowgray} 97.8/31.4/37.0/88.7 & \cellcolor{rowgray} 94.9/14.7/22.9/72.3 & \cellcolor{rowgray} 93.9/23.9/32.9/61.1 & \cellcolor{rowgray} 95.8/\textbf{53.0}/\textbf{53.2}/76.9 & \cellcolor{rowgray} 97.0/17.6/27.2/85.0 & \cellcolor{rowgray} 98.6/47.0/46.9/87.6 & \cellcolor{rowgray} 98.1/42.0/44.1/86.4 & \cellcolor{rowgray} \textbf{98.7}/50.5/53.1/\textbf{94.4} & 98.6/51.8/51.7/93.1 \\
\midrule
macaroni1   & 99.4/ 2.9/ 6.9/95.3 & 97.4/ 3.7/ 9.7/84.0 & 98.9/ 3.5/ 8.4/92.0 & 99.1/ 5.8/13.4/62.4 & 94.1/10.2/16.7/68.5 & 99.5/17.5/27.6/95.2 & \textbf{99.7}/20.1/29.8/96.3 & 99.6/33.5/40.6/96.4 & \textbf{99.7}/\textbf{38.1}/\textbf{40.8}/\textbf{96.9} \\
\cellcolor{rowgray} macaroni2 & \cellcolor{rowgray} \textbf{99.7}/13.2/21.8/97.4 & \cellcolor{rowgray} 95.2/ 0.9/ 4.3/76.6 & \cellcolor{rowgray} 93.2/ 0.6/ 3.9/77.8 & \cellcolor{rowgray} 98.5/ 6.3/14.4/70.0 & \cellcolor{rowgray} 93.6/ 0.9/ 2.8/73.1 & \cellcolor{rowgray} 99.5/ 9.2/16.1/96.2 & \cellcolor{rowgray} 99.4/08.5/15.4/94.0 & \cellcolor{rowgray} \textbf{99.7}/24.7/\textbf{36.1}/\textbf{98.7} & 99.2/\textbf{26.8}/35.2/95.9 \\
capsules    & 99.4/60.4/60.8/93.1 & 88.7/ 3.0/ 7.4/43.7 & 97.1/52.9/53.3/73.7 & 96.9/33.2/ 9.1/76.7 & 97.3/10.0/21.0/77.9 & 99.1/61.3/59.8/91.8 & 99.4/62.7/59.6/90.8 & \textbf{99.6}/\textbf{65.0}/\textbf{66.6}/\textbf{97.4} & 98.9/58.2/59.6/97.0 \\
\cellcolor{rowgray} candle & \cellcolor{rowgray} 99.1/25.3/35.8/94.9 & \cellcolor{rowgray} 98.5/17.6/27.9/91.6 & \cellcolor{rowgray} 97.6/ 8.4/16.5/87.6 & \cellcolor{rowgray} 98.7/39.9/45.8/69.0 & \cellcolor{rowgray} 97.3/12.8/22.8/89.4 & \cellcolor{rowgray} 99.0/23.2/32.4/95.5 & \cellcolor{rowgray} 99.4/25.7/35.1/97.1 & \cellcolor{rowgray} 99.4/43.0/47.9/95.4 & \textbf{99.5}/\textbf{53.6}/\textbf{55.8}/\textbf{96.2} \\
\midrule
cashew      & 91.7/44.2/49.7/86.2 & 98.6/51.7/58.3/87.9 & 98.9/68.9/66.0/84.1 & 87.9/47.6/52.1/66.3 & 90.9/53.1/60.9/61.8 & 94.3/46.8/51.4/87.8 & 97.2/46.3/55.3/83.8 & 97.1/64.5/62.4/94.0 & \textbf{99.7}/\textbf{81.7}/\textbf{76.9}/\textbf{97.7} \\
\cellcolor{rowgray} chewing gum & \cellcolor{rowgray} 98.7/59.9/61.7/76.9 & \cellcolor{rowgray} 98.8/54.9/56.1/81.3 & \cellcolor{rowgray} 97.9/26.8/29.8/78.3 & \cellcolor{rowgray} 98.8/\textbf{86.9}/\textbf{81.0}/68.3 & \cellcolor{rowgray} 94.7/11.9/25.8/59.5 & \cellcolor{rowgray} 98.1/57.5/59.9/79.7 & \cellcolor{rowgray} 98.8/58.0/55.6/73.1 & \cellcolor{rowgray} 99.1/65.0/67.7/88.1 & \textbf{99.3}/82.3/76.6/\textbf{88.8} \\
fryum       & 97.0/47.6/51.5/93.4 & 95.9/34.0/40.6/76.2 & 93.0/39.1/45.4/85.1 & 88.1/35.2/38.5/47.7 & 97.6/\textbf{58.6}/\textbf{60.1}/81.3 & 96.9/47.8/51.9/91.6 & \textbf{97.7}/47.7/55.2/87.2 & 96.6/51.6/53.4/93.5 & 97.4/53.7/57.6/\textbf{94.0} \\
\cellcolor{rowgray} pipe fryum & \cellcolor{rowgray} 99.1/56.8/58.8/95.4 & \cellcolor{rowgray} 98.9/50.2/57.7/91.5 & \cellcolor{rowgray} 98.5/65.6/63.4/83.0 & \cellcolor{rowgray} 98.9/\textbf{78.8}/\textbf{72.7}/45.9 & \cellcolor{rowgray} \textbf{99.4}/72.7/69.9/89.9 & \cellcolor{rowgray} 99.1/53.5/58.5/95.1 & \cellcolor{rowgray} 99.2/56.2/60.1/94.5 & \cellcolor{rowgray} 99.2/64.3/65.1/95.2 & \textbf{99.4}/69.8/70.9/\textbf{96.8} \\
\midrule
Mean        & 98.1/38.0/42.6/91.8 & 95.9/21.0/27.0/75.6 & 96.8/34.7/37.8/81.4 & 96.1/39.6/43.4/67.4 & 96.0/26.1/33.0/75.2 & 98.5/39.4/44.0/91.0 & 98.9/40.4/44.1/89.2 & 98.7/53.2/55.7/94.5 & \textbf{99.1}/\textbf{55.3}/\textbf{57.5}/\textbf{94.6} \\
\bottomrule
\end{tabular}
}
\end{table*}

\begin{table*}[p]
\centering
\caption{Per-class performance on \textbf{Real-IAD} dataset for multi-class anomaly detection with AUROC/AP/$F_{1}$-max metrics.}
\label{tab:real_iad_i_per_class}
\resizebox{\textwidth}{!}{
\setlength{\tabcolsep}{3pt}
\setlength{\aboverulesep}{0pt}
\setlength{\belowrulesep}{0pt}
\begin{tabular}{ccccccccc>{\columncolor{oursblue}}c}
\toprule
Method $\rightarrow$ & RD4AD & UniAD & SimpleNet & DeSTSeg & DiAD & MambaAD & OmiAD & Dinomaly & RAD \\
\cmidrule(lr){1-1}
Category $\downarrow$ & CVPR'22 & NIPS'22 & CVPR'23 & CVPR'23 & AAAI'24 & NIPS'24 & ICML'25 & CVPR'25 & Ours \\
\midrule
audiojack        & 76.2/63.2/60.8 & 81.4/76.6/64.9 & 58.4/44.2/50.9 & 81.1/72.6/64.5 & 76.5/54.3/65.7 & 84.2/76.5/67.4 & 84.5/82.0/\textbf{77.8} & 86.8/82.4/72.2 & \textbf{87.9}/\textbf{83.6}/72.5 \\
\cellcolor{rowgray} bottle cap & \cellcolor{rowgray} 89.5/86.3/81.0 & \cellcolor{rowgray} 92.5/91.7/81.7 & \cellcolor{rowgray} 54.1/47.6/60.3 & \cellcolor{rowgray} 78.1/74.6/68.1 & \cellcolor{rowgray} 91.6/\textbf{94.0}/\textbf{87.9} & \cellcolor{rowgray} 92.8/92.0/82.1 & \cellcolor{rowgray} 93.7/92.7/85.2 & \cellcolor{rowgray} 89.9/86.7/81.2 & \textbf{93.8}/92.8/82.7 \\
button battery   & 73.3/78.9/76.1 & 75.9/81.6/76.3 & 52.5/60.5/72.4 & 86.7/89.2/83.5 & 80.5/71.3/70.6 & 79.8/85.3/77.8 & 84.9/90.1/80.1 & 86.6/88.9/82.1 & \textbf{89.5}/\textbf{91.3}/\textbf{85.2} \\
\cellcolor{rowgray} end cap & \cellcolor{rowgray} 79.8/84.0/77.8 & \cellcolor{rowgray} 80.9/86.1/78.0 & \cellcolor{rowgray} 51.6/60.8/72.9 & \cellcolor{rowgray} 77.9/81.1/77.1 & \cellcolor{rowgray} 85.1/83.4/\textbf{84.8} & \cellcolor{rowgray} 78.0/82.8/77.2 & \cellcolor{rowgray} 79.4/80.4/80.8 & \cellcolor{rowgray} 87.0/87.5/83.4 & \textbf{88.5}/\textbf{88.3}/84.7 \\
eraser           & 90.0/88.7/79.7 & 90.3/89.2/80.2 & 46.4/39.1/55.8 & 84.6/82.9/71.8 & 80.0/80.0/77.3 & 87.5/86.2/76.1 & 89.5/90.2/84.2 & 90.3/87.6/78.6 & \textbf{95.5}/\textbf{94.2}/\textbf{85.7} \\
\cellcolor{rowgray} fire hood & \cellcolor{rowgray} 78.3/70.1/64.5 & \cellcolor{rowgray} 80.6/74.8/66.4 & \cellcolor{rowgray} 58.1/41.9/54.4 & \cellcolor{rowgray} 81.7/72.4/67.7 & \cellcolor{rowgray} 83.3/81.7/80.5 & \cellcolor{rowgray} 79.3/72.5/64.8 & \cellcolor{rowgray} \textbf{94.1}/\textbf{87.6}/\textbf{83.3} & \cellcolor{rowgray} 83.8/76.2/69.5 & 89.6/82.5/76.7 \\
mint             & 65.8/63.1/64.8 & 67.0/66.6/64.6 & 52.4/50.3/63.7 & 58.4/55.8/63.7 & 76.7/76.7/\textbf{76.0} & 70.1/70.8/65.5 & 66.0/77.7/75.0 & 73.1/72.0/67.7 & \textbf{83.0}/\textbf{83.3}/74.7 \\
\cellcolor{rowgray} mounts & \cellcolor{rowgray} 88.6/79.9/74.8 & \cellcolor{rowgray} 87.6/77.3/77.2 & \cellcolor{rowgray} 58.7/48.1/52.4 & \cellcolor{rowgray} 74.7/56.5/63.1 & \cellcolor{rowgray} 75.3/74.5/82.5 & \cellcolor{rowgray} 86.8/78.0/73.5 & \cellcolor{rowgray} \textbf{95.2}/\textbf{92.3}/\textbf{85.9} & \cellcolor{rowgray} 90.4/84.2/78.0 & 87.4/74.3/77.6 \\
pcb              & 79.5/85.8/79.7 & 81.0/88.2/79.1 & 54.5/66.0/75.5 & 82.0/88.7/79.6 & 86.0/85.1/85.4 & 89.1/93.7/84.0 & 92.2/95.7/87.3 & 92.0/95.3/87.0 & \textbf{94.4}/\textbf{96.6}/\textbf{89.6} \\
\cellcolor{rowgray} phone battery & \cellcolor{rowgray} 87.5/83.3/77.1 & \cellcolor{rowgray} 83.6/80.0/71.6 & \cellcolor{rowgray} 51.6/43.8/58.0 & \cellcolor{rowgray} 83.3/81.8/72.1 & \cellcolor{rowgray} 82.3/77.7/75.9 & \cellcolor{rowgray} 90.2/88.9/80.5 & \cellcolor{rowgray} 92.6/93.0/84.5 & \cellcolor{rowgray} 92.9/91.6/82.5 & \textbf{95.7}/\textbf{94.3}/\textbf{87.6} \\
plastic nut      & 80.3/68.0/64.4 & 80.0/69.2/63.7 & 59.2/40.3/51.8 & 83.1/75.4/66.5 & 71.9/58.2/65.6 & 87.1/80.7/70.7 & 84.2/67.5/62.2 & 88.3/81.8/74.7 & \textbf{93.4}/\textbf{89.1}/\textbf{81.0} \\
\cellcolor{rowgray} plastic plug & \cellcolor{rowgray} 81.9/74.3/68.8 & \cellcolor{rowgray} 81.4/75.9/67.6 & \cellcolor{rowgray} 48.2/38.4/54.6 & \cellcolor{rowgray} 71.7/63.1/60.0 & \cellcolor{rowgray} 88.7/89.2/\textbf{90.9} & \cellcolor{rowgray} 85.7/82.2/72.6 & \cellcolor{rowgray} \textbf{94.1}/\textbf{93.2}/86.6 & \cellcolor{rowgray} 90.5/86.4/78.6 & 91.9/88.8/79.3 \\
porcelain doll   & 86.3/76.3/71.5 & 85.1/75.2/69.3 & 66.3/54.5/52.1 & 78.7/66.2/64.3 & 72.6/66.8/65.2 & 88.0/82.2/74.1 & 86.1/84.5/76.3 & 85.1/73.3/69.6 & \textbf{93.0}/\textbf{88.4}/\textbf{79.6} \\
\cellcolor{rowgray} regulator & \cellcolor{rowgray} 66.9/48.8/47.7 & \cellcolor{rowgray} 56.9/41.5/44.5 & \cellcolor{rowgray} 50.5/29.0/43.9 & \cellcolor{rowgray} 79.2/63.5/56.9 & \cellcolor{rowgray} 72.1/71.4/\textbf{78.2} & \cellcolor{rowgray} 69.7/58.7/50.4 & \cellcolor{rowgray} \textbf{89.5}/69.3/67.2 & \cellcolor{rowgray} 85.2/\textbf{78.9}/69.8 & 84.4/73.0/64.1 \\
rolled strip base & 97.5/98.7/94.7 & 98.7/99.3/96.5 & 59.0/75.7/79.8 & 96.5/98.2/93.0 & 68.4/55.9/56.8 & 98.0/99.0/95.0 & \textbf{99.8}/\textbf{99.9}/\textbf{98.9} & 99.2/99.6/97.1 & 99.4/99.7/97.8 \\
\cellcolor{rowgray} sim card set & \cellcolor{rowgray} 91.6/91.8/84.8 & \cellcolor{rowgray} 89.7/90.3/83.2 & \cellcolor{rowgray} 63.1/69.7/70.8 & \cellcolor{rowgray} 95.5/96.2/89.2 & \cellcolor{rowgray} 72.6/53.7/61.5 & \cellcolor{rowgray} 94.4/95.1/87.2 & 95.9/97.3/91.6 & 95.8/96.3/88.8 & \textbf{97.8}/\textbf{98.1}/\textbf{93.3} \\
switch           & 84.3/87.2/77.9 & 85.5/88.6/78.4 & 62.2/66.8/68.6 & 90.1/92.8/83.1 & 73.4/49.4/61.2 & 91.7/94.0/85.4 & 94.8/96.9/91.5 & \textbf{97.8}/98.1/\textbf{93.3} & 97.7/\textbf{98.2}/93.1 \\
\cellcolor{rowgray} tape & \cellcolor{rowgray} 96.0/95.1/87.6 & \cellcolor{rowgray} 97.2/96.2/89.4 & \cellcolor{rowgray} 49.9/41.1/54.5 & \cellcolor{rowgray} 94.5/93.4/85.9 & \cellcolor{rowgray} 73.9/57.8/66.1 & \cellcolor{rowgray} 96.8/95.9/89.3 & 98.0/98.0/\textbf{93.7} & 96.9/95.0/88.8 & \textbf{98.4}/\textbf{97.5}/92.4 \\
terminalblock    & 89.4/89.7/83.1 & 87.5/89.1/81.0 & 59.8/64.7/68.8 & 83.1/86.2/76.6 & 62.1/36.4/47.8 & 96.1/96.8/90.0 & \textbf{98.4}/\textbf{99.0}/\textbf{96.1} & 96.7/97.4/91.1 & 97.1/97.7/91.3 \\
\cellcolor{rowgray} toothbrush & \cellcolor{rowgray} 82.0/83.8/77.2 & \cellcolor{rowgray} 78.4/80.1/75.6 & \cellcolor{rowgray} 65.9/70.0/70.1 & \cellcolor{rowgray} 83.7/85.3/79.0 & \cellcolor{rowgray} \textbf{91.2}/93.7/\textbf{90.9} & \cellcolor{rowgray} 85.1/86.2/80.3 & 90.7/\textbf{94.8}/86.9 & 90.4/91.9/83.0 & 87.4/88.8/82.0 \\
toy              & 69.4/74.2/75.9 & 68.4/75.1/74.8 & 57.8/64.4/73.4 & 70.3/74.8/75.4 & 66.2/57.3/59.8 & 83.0/87.5/79.6 & 89.5/93.2/\textbf{87.6} & 85.6/89.1/81.9 & \textbf{90.2}/\textbf{93.2}/84.9 \\
\cellcolor{rowgray} toy brick & \cellcolor{rowgray} 63.6/56.1/59.0 & \cellcolor{rowgray} 77.0/71.1/66.2 & \cellcolor{rowgray} 58.3/49.7/58.2 & \cellcolor{rowgray} 73.2/68.7/63.3 & \cellcolor{rowgray} 68.4/45.3/55.9 & \cellcolor{rowgray} 70.5/63.7/61.6 & \cellcolor{rowgray} \textbf{85.0}/\textbf{85.5}/\textbf{74.8} & \cellcolor{rowgray} 72.3/65.1/63.4 & 81.0/78.3/69.0 \\
transistor1      & 91.0/94.0/85.1 & 93.7/95.9/88.9 & 62.2/69.2/72.1 & 90.2/92.1/84.6 & 73.1/63.1/62.7 & 94.4/96.0/89.0 & 96.2/98.3/92.9 & \textbf{97.4}/\textbf{98.2}/\textbf{93.1} & 96.9/97.7/92.4 \\
\cellcolor{rowgray} u block & \cellcolor{rowgray} 89.5/85.0/74.2 & \cellcolor{rowgray} 88.8/84.2/75.5 & \cellcolor{rowgray} 62.4/48.4/51.8 & \cellcolor{rowgray} 80.1/73.9/64.3 & \cellcolor{rowgray} 75.2/68.4/67.9 & \cellcolor{rowgray} 89.7/85.7/\textbf{75.3} & 90.1/83.4/74.5 & 89.9/84.0/75.2 & \textbf{93.2}/\textbf{90.0}/80.6 \\
usb              & 84.9/84.3/75.1 & 78.7/79.4/69.1 & 57.0/55.3/62.9 & 87.8/88.0/78.3 & 58.9/37.4/45.7 & 92.0/92.2/84.5 & \textbf{95.5}/\textbf{94.1}/\textbf{90.2} & 92.0/91.6/83.3 & 94.6/94.0/86.5 \\
\cellcolor{rowgray} usb adaptor & \cellcolor{rowgray} 71.1/61.4/62.2 & \cellcolor{rowgray} 76.8/71.3/64.9 & \cellcolor{rowgray} 47.5/38.4/56.5 & \cellcolor{rowgray} 80.1/74.9/67.4 & \cellcolor{rowgray} 76.9/60.2/67.2 & \cellcolor{rowgray} 79.4/76.0/66.3 & 82.6/82.4/72.6 & 81.5/74.5/69.4 & \textbf{88.2}/\textbf{84.6}/\textbf{77.0} \\
vcpill           & 85.1/80.3/72.4 & 87.1/84.0/74.7 & 59.0/48.7/56.4 & 83.8/81.5/69.9 & 64.1/40.4/56.2 & 88.3/87.7/77.4 & 91.4/90.7/82.8 & 92.0/91.2/82.0 & \textbf{94.3}/\textbf{93.4}/\textbf{84.5} \\
\cellcolor{rowgray} wooden beads & \cellcolor{rowgray} 81.2/78.9/70.9 & \cellcolor{rowgray} 78.4/77.2/67.8 & \cellcolor{rowgray} 55.1/52.0/60.2 & \cellcolor{rowgray} 82.4/78.5/73.0 & \cellcolor{rowgray} 62.1/56.4/65.9 & \cellcolor{rowgray} 82.5/81.7/71.8 & 77.0/83.5/74.9 & 87.3/85.8/77.4 & \textbf{91.8}/\textbf{91.1}/\textbf{82.1} \\
woodstick        & 76.9/61.2/58.1 & 80.8/72.6/63.6 & 58.2/35.6/45.2 & 80.4/69.2/60.3 & 74.1/\textbf{66.0}/\textbf{62.1} & 80.4/69.0/63.4 & \textbf{92.3}/65.9/60.5 & 84.0/73.3/65.6 & 84.8/\textbf{74.3}/67.4 \\
\cellcolor{rowgray} zipper & \cellcolor{rowgray} 95.3/97.2/91.2 & \cellcolor{rowgray} 98.2/98.9/95.3 & \cellcolor{rowgray} 77.2/86.7/77.6 & \cellcolor{rowgray} 96.9/98.1/93.5 & \cellcolor{rowgray} 86.0/87.0/84.0 & 99.2/99.6/96.9 & \textbf{99.8}/\textbf{99.9}/\textbf{99.0} & 99.1/99.5/96.5 & 99.1/99.5/95.9 \\
\midrule
Mean             & 82.4/79.0/73.9 & 83.0/80.9/74.3 & 57.2/53.4/61.5 & 82.3/79.2/73.2 & 75.6/66.4/69.9 & 86.3/84.6/77.0 & 90.1/88.6/82.8 & 89.3/86.8/80.2 & \textbf{92.0}/\textbf{89.9}/\textbf{83.0} \\
\bottomrule
\end{tabular}
}
\end{table*}

\begin{table*}[p]
\centering
\caption{Per-class performance on \textbf{Real-IAD} dataset for multi-class anomaly localization with AUROC/AP/$F_{1}$-max/AUPRO metrics.}
\label{tab:real_iad_p_per_class}
\resizebox{\textwidth}{!}{
\setlength{\tabcolsep}{3pt}
\setlength{\aboverulesep}{0pt}
\setlength{\belowrulesep}{0pt}
\begin{tabular}{ccccccccc>{\columncolor{oursblue}}c}
\toprule
Method $\rightarrow$ & RD4AD & UniAD & SimpleNet & DeSTSeg & DiAD & MambaAD & OmiAD & Dinomaly & RAD \\
\cmidrule(lr){1-1}
Category $\downarrow$ & CVPR'22 & NIPS'22 & CVPR'23 & CVPR'23 & AAAI'24 & NIPS'24 & ICML'25 & CVPR'25 & Ours \\
\midrule
audiojack        & 96.6/12.8/22.1/79.6 & 97.6/20.0/31.0/83.7 & 74.4/ 0.9/ 4.8/38.0 & 95.5/25.4/31.9/52.6 & 91.6/ 1.0/ 3.9/63.3 & 97.7/21.6/29.5/83.9 & 99.0/47.1/51.4/91.5 & 98.7/48.1/54.5/91.7 & \textbf{99.2}/\textbf{57.6}/\textbf{58.8}/\textbf{94.7} \\
\cellcolor{rowgray} bottle cap & \cellcolor{rowgray} 99.5/18.9/29.9/95.7 & \cellcolor{rowgray} 99.5/19.4/29.6/96.0 & \cellcolor{rowgray} 85.3/ 2.3/ 5.7/45.1 & \cellcolor{rowgray} 94.5/25.3/31.1/25.3 & \cellcolor{rowgray} 94.6/ 4.9/11.4/73.0 & \cellcolor{rowgray} \textbf{99.7}/30.6/34.6/97.2 & \cellcolor{rowgray} 99.4/23.4/29.0/95.2 & \cellcolor{rowgray} \textbf{99.7}/32.4/36.7/\textbf{98.1} & \textbf{99.7}/\textbf{44.5}/\textbf{45.5}/98.0 \\
button battery   & 97.6/33.8/37.8/86.5 & 96.7/28.5/34.4/77.5 & 75.9/ 3.2/ 6.6/40.5 & 98.3/63.9/\textbf{60.4}/36.9 & 84.1/ 1.4/ 5.3/66.9 & 98.1/46.7/49.5/86.2 & \textbf{99.2}/61.0/\textbf{60.4}/91.9 & 99.1/46.9/56.7/92.9 & \textbf{99.2}/\textbf{65.8}/60.0/\textbf{95.1} \\
\cellcolor{rowgray} end cap & \cellcolor{rowgray} 96.7/12.5/22.5/89.2 & \cellcolor{rowgray} 95.8/ 8.8/17.4/85.4 & \cellcolor{rowgray} 63.1/ 0.5/ 2.8/25.7 & \cellcolor{rowgray} 89.6/14.4/22.7/29.5 & \cellcolor{rowgray} 81.3/ 2.0/ 6.9/38.2 & \cellcolor{rowgray} 97.0/12.0/19.6/89.4 & \cellcolor{rowgray} 97.2/09.2/14.4/91.3 & \cellcolor{rowgray} 99.1/26.2/32.9/96.0 & \textbf{99.3}/\textbf{32.2}/\textbf{39.4}/\textbf{97.2} \\
eraser           & 99.5/30.8/36.7/96.0 & 99.3/24.4/30.9/94.1 & 80.6/ 2.7/ 7.1/42.8 & 95.8/52.7/53.9/46.7 & 91.1/ 7.7/15.4/67.5 & 99.2/30.2/38.3/93.7 & 99.3/39.5/44.2/93.3 & 99.5/39.6/43.3/96.4 & \textbf{99.8}/\textbf{56.1}/\textbf{55.7}/\textbf{98.8} \\
\cellcolor{rowgray} fire hood & \cellcolor{rowgray} 98.9/27.7/35.2/87.9 & \cellcolor{rowgray} 98.6/23.4/32.2/85.3 & \cellcolor{rowgray} 70.5/ 0.3/ 2.2/25.3 & \cellcolor{rowgray} 97.3/27.1/35.3/34.7 & \cellcolor{rowgray} 91.8/ 3.2/ 9.2/66.7 & \cellcolor{rowgray} 98.7/25.1/31.3/86.3 & \cellcolor{rowgray} 98.9/43.3/48.6/94.6 & \cellcolor{rowgray} 99.3/38.4/42.7/93.0 & \textbf{99.5}/\textbf{50.6}/\textbf{50.6}/\textbf{96.6} \\
mint             & 95.0/11.7/23.0/72.3 & 94.4/ 7.7/18.1/62.3 & 79.9/ 0.9/ 3.6/43.3 & 84.1/10.3/22.4/ 9.9 & 91.1/ 5.7/11.6/64.2 & 96.5/15.9/27.0/72.6 & 96.6/29.6/38.7/67.4 & 96.9/22.0/32.5/77.6 & \textbf{98.1}/\textbf{35.9}/\textbf{40.9}/\textbf{90.6} \\
\cellcolor{rowgray} mounts & \cellcolor{rowgray} 99.3/30.6/37.1/94.9 & \cellcolor{rowgray} 99.4/28.0/32.8/95.2 & \cellcolor{rowgray} 80.5/ 2.2/ 6.8/46.1 & \cellcolor{rowgray} 94.2/30.0/41.3/43.3 & \cellcolor{rowgray} 84.3/ 0.4/ 1.1/48.8 & \cellcolor{rowgray} 99.2/31.4/35.4/93.5 & \cellcolor{rowgray} \textbf{99.7}/39.2/40.0/\textbf{98.6} & \cellcolor{rowgray} 99.4/39.9/44.3/95.6 & 99.4/\textbf{45.8}/\textbf{46.1}/96.6 \\
pcb              & 97.5/15.8/24.3/88.3 & 97.0/18.5/28.1/81.6 & 78.0/ 1.4/ 4.3/41.3 & 97.2/37.1/40.4/48.8 & 92.0/ 3.7/ 7.4/66.5 & 99.2/46.3/50.4/93.1 & 99.0/48.8/51.2/92.1 & 99.3/55.0/56.3/95.7 & \textbf{100.}/\textbf{68.7}/\textbf{64.8}/\textbf{96.9} \\
\cellcolor{rowgray} phone battery & \cellcolor{rowgray} 77.3/22.6/31.7/94.5 & \cellcolor{rowgray} 85.5/11.2/21.6/88.5 & \cellcolor{rowgray} 43.4/ 0.1/ 0.9/11.8 & \cellcolor{rowgray} 79.5/25.6/33.8/39.5 & \cellcolor{rowgray} 96.8/ 5.3/11.4/85.4 & \cellcolor{rowgray} 99.4/36.3/41.3/95.3 & \cellcolor{rowgray} 99.2/41.1/44.5/94.0 & \cellcolor{rowgray} 99.7/51.6/54.2/96.8 & \textbf{99.8}/\textbf{60.8}/\textbf{57.4}/\textbf{98.6} \\
plastic nut      & 98.8/21.1/29.6/91.0 & 98.4/20.6/27.1/88.9 & 77.4/ 0.6/ 3.6/41.5 & 96.5/44.8/45.7/38.4 & 81.1/ 0.4/ 3.4/38.6 & 99.4/33.1/37.3/96.1 & 98.3/27.2/31.1/90.1 & 99.7/41.0/45.0/97.4 & \textbf{99.8}/\textbf{54.4}/\textbf{53.0}/\textbf{98.7} \\
\cellcolor{rowgray} plastic plug & \cellcolor{rowgray} 99.1/20.5/28.4/94.9 & \cellcolor{rowgray} 98.6/17.4/26.1/90.3 & \cellcolor{rowgray} 78.6/ 0.7/ 1.9/38.8 & \cellcolor{rowgray} 91.9/20.1/27.3/21.0 & \cellcolor{rowgray} 92.9/ 8.7/15.0/66.1 & \cellcolor{rowgray} 99.0/24.2/31.7/91.5 & \cellcolor{rowgray} \textbf{99.5}/37.1/41.3/97.1 & \cellcolor{rowgray} 99.4/31.7/37.2/95.6 & \textbf{99.5}/\textbf{42.0}/\textbf{43.4}/\textbf{98.0} \\
porcelain doll   & 99.2/24.8/34.6/95.7 & 98.7/14.1/24.5/93.2 & 81.8/ 2.0/ 6.4/47.0 & 93.1/35.9/40.3/24.8 & 93.1/ 1.4/ 4.8/70.4 & 99.2/31.3/36.6/95.4 & 98.8/18.3/26.3/93.8 & 99.3/27.9/33.9/96.0 & \textbf{100.}/\textbf{46.9}/\textbf{48.9}/\textbf{98.5} \\
\cellcolor{rowgray} regulator & \cellcolor{rowgray} 98.0/ 7.8/16.1/88.6 & \cellcolor{rowgray} 95.5/ 9.1/17.4/76.1 & \cellcolor{rowgray} 76.6/ 0.1/ 0.6/38.1 & \cellcolor{rowgray} 88.8/18.9/23.6/17.5 & \cellcolor{rowgray} 84.2/ 0.4/ 1.5/44.4 & \cellcolor{rowgray} 97.6/20.6/29.8/87.0 & \cellcolor{rowgray} \textbf{99.7}/37.4/42.2/\textbf{98.6} & \cellcolor{rowgray} 99.3/42.2/48.9/95.5 & 99.4/\textbf{49.0}/\textbf{52.7}/96.0 \\
rolled strip base& 99.7/31.4/39.9/98.4 & 99.6/20.7/32.2/97.8 & 80.5/ 1.7/ 5.1/52.1 & 99.2/48.7/50.1/55.5 & 87.7/ 0.6/ 3.2/63.4 & 99.7/37.4/42.5/98.8 & 99.7/32.4/42.5/98.9 & 99.7/41.6/45.5/98.5 & \textbf{99.9}/\textbf{59.5}/\textbf{61.7}/\textbf{99.4} \\
\cellcolor{rowgray} sim card set & \cellcolor{rowgray} 98.5/40.2/44.2/89.5 & \cellcolor{rowgray} 97.9/31.6/39.8/85.0 & \cellcolor{rowgray} 71.0/ 6.8/14.3/30.8 & \cellcolor{rowgray} 99.1/65.5/62.1/73.9 & \cellcolor{rowgray} 89.9/ 1.7/ 5.8/60.4 & \cellcolor{rowgray} 98.8/51.1/50.6/89.4 & \cellcolor{rowgray} 99.3/48.9/50.1/95.4 & \cellcolor{rowgray} 99.0/52.1/52.9/90.9 & \textbf{100.}/\textbf{70.7}/\textbf{65.0}/\textbf{97.6} \\
switch           & 94.4/18.9/26.6/90.9 & 98.1/33.8/40.6/90.7 & 71.7/ 3.7/ 9.3/44.2 & 97.4/57.6/55.6/44.7 & 90.5/ 1.4/ 5.3/64.2 & 98.2/39.9/45.4/92.9 & \textbf{99.5}/\textbf{63.6}/63.4/95.8 & 96.7/62.3/\textbf{63.6}/\textbf{95.9} & 99.4/57.6/58.2/97.7 \\
\cellcolor{rowgray} tape & \cellcolor{rowgray} 99.7/42.4/47.8/98.4 & \cellcolor{rowgray} 99.7/29.2/36.9/97.5 & \cellcolor{rowgray} 77.5/ 1.2/ 3.9/41.4 & \cellcolor{rowgray} 99.0/61.7/57.6/48.2 & \cellcolor{rowgray} 81.7/ 0.4/ 2.7/47.3 & \cellcolor{rowgray} \textbf{99.8}/47.1/48.2/98.0 & \cellcolor{rowgray} 99.7/29.8/36.4/\textbf{98.8} & \cellcolor{rowgray} \textbf{99.8}/54.0/55.8/\textbf{98.8} & \textbf{99.8}/\textbf{66.1}/\textbf{61.7}/\textbf{98.8} \\
terminalblock    & 99.5/27.4/35.8/97.6 & 99.2/23.1/30.5/94.4 & 87.0/ 0.8/ 3.6/54.8 & 96.6/40.6/44.1/34.8 & 75.5/ 0.1/ 1.1/38.5 & 99.8/35.3/39.7/98.2 & 99.8/48.3/51.0/98.9 & 99.8/48.0/50.7/98.8 & \textbf{99.9}/\textbf{61.2}/\textbf{59.6}/\textbf{99.2} \\
\cellcolor{rowgray} toothbrush & \cellcolor{rowgray} 96.9/26.1/34.2/88.7 & \cellcolor{rowgray} 95.7/16.4/25.3/84.3 & \cellcolor{rowgray} 84.7/ 7.2/14.8/52.6 & \cellcolor{rowgray} 94.3/30.0/37.3/42.8 & \cellcolor{rowgray} 82.0/ 1.9/ 6.6/54.5 & \cellcolor{rowgray} 97.5/27.8/36.7/91.4 & \cellcolor{rowgray} \textbf{98.8}/\textbf{39.8}/\textbf{47.9}/\textbf{93.8} & \cellcolor{rowgray} 96.9/38.3/43.9/90.4 & 97.0/39.2/44.4/90.2 \\
toy              & 95.2/ 5.1/12.8/82.3 & 93.4/ 4.6/12.4/70.5 & 67.7/ 0.1/ 0.4/25.0 & 86.3/ 8.1/15.9/16.4 & 82.1/ 1.1/ 4.2/50.3 & 96.0/16.4/25.8/86.3 & \textbf{97.8}/19.8/25.5/90.8 & 94.9/22.5/32.1/\textbf{91.0} & 92.9/\textbf{31.9}/\textbf{39.0}/90.0 \\
\cellcolor{rowgray} toy brick & \cellcolor{rowgray} 96.4/16.0/24.6/75.3 & \cellcolor{rowgray} 97.4/17.1/27.6/81.3 & \cellcolor{rowgray} 86.5/ 5.2/11.1/56.3 & \cellcolor{rowgray} 94.7/24.6/30.8/45.5 & \cellcolor{rowgray} 93.5/ 3.1/ 8.1/66.4 & \cellcolor{rowgray} 96.6/18.0/25.8/74.7 & \cellcolor{rowgray} \textbf{98.6}/44.3/48.7/\textbf{89.6} & \cellcolor{rowgray} 96.8/27.9/34.0/76.6 & 97.3/\textbf{46.5}/\textbf{50.7}/87.7 \\
transistor1      & 99.1/29.6/35.5/95.1 & 98.9/25.6/33.2/94.3 & 71.7/ 5.1/11.3/35.3 & 97.3/43.8/44.5/45.4 & 88.6/ 7.2/15.3/58.1 & 99.4/39.4/40.0/96.5 & 99.1/40.2/43.6/95.9 & \textbf{99.6}/53.5/53.3/\textbf{97.8} & 99.5/\textbf{58.0}/\textbf{55.6}/97.3 \\
\cellcolor{rowgray} u block & \cellcolor{rowgray} 99.6/40.5/45.2/96.9 & \cellcolor{rowgray} 99.3/22.3/29.6/94.3 & \cellcolor{rowgray} 76.2/ 4.8/12.2/34.0 & \cellcolor{rowgray} 96.9/57.1/55.7/38.5 & \cellcolor{rowgray} 88.8/ 1.6/ 5.4/54.2 & \cellcolor{rowgray} 99.5/37.8/46.1/95.4 & \cellcolor{rowgray} 99.5/24.2/35.6/97.8 & \cellcolor{rowgray} 99.5/41.8/45.6/95.7 & \textbf{99.7}/\textbf{62.6}/\textbf{63.0}/\textbf{97.9} \\
usb              & 98.1/26.4/35.2/91.0 & 97.9/20.6/31.7/85.3 & 81.1/ 1.5/ 4.9/52.4 & 98.4/42.2/47.7/57.1 & 78.0/ 1.0/ 3.1/28.0 & 99.2/39.1/44.4/95.2 & \textbf{99.6}/43.4/48.3/97.0 & 99.2/45.0/48.7/97.5 & 99.5/\textbf{52.0}/\textbf{53.2}/\textbf{97.9} \\
\cellcolor{rowgray} usb adaptor & \cellcolor{rowgray} 94.5/ 9.8/17.9/73.1 & \cellcolor{rowgray} 96.6/10.5/19.0/78.4 & \cellcolor{rowgray} 67.9/ 0.2/ 1.3/28.9 & \cellcolor{rowgray} 94.9/25.5/34.9/36.4 & \cellcolor{rowgray} 94.0/ 2.3/ 6.6/75.5 & \cellcolor{rowgray} 97.3/15.3/22.6/82.5 & \cellcolor{rowgray} 96.8/18.1/27.3/84.2 & \cellcolor{rowgray} 98.7/23.7/32.7/91.0 & \textbf{100.}/\textbf{42.8}/\textbf{44.8}/\textbf{96.0} \\
vcpill           & 98.3/43.1/48.6/88.7 & \textbf{99.1}/40.7/43.0/91.3 & 68.2/ 1.1/ 3.3/22.0 & 97.1/64.7/62.3/42.3 & 90.2/ 1.3/ 5.2/60.8 & 98.7/50.2/54.5/89.3 & 99.0/58.4/61.2/92.6 & \textbf{99.1}/66.4/66.7/93.7 & \textbf{99.1}/\textbf{73.1}/\textbf{70.3}/\textbf{94.1} \\
\cellcolor{rowgray} wooden beads & \cellcolor{rowgray} 98.0/27.1/34.7/85.7 & \cellcolor{rowgray} 97.6/16.5/23.6/84.6 & \cellcolor{rowgray} 68.1/ 2.4/ 6.0/28.3 & \cellcolor{rowgray} 94.7/38.9/42.9/39.4 & \cellcolor{rowgray} 85.0/ 1.1/ 4.7/45.6 & \cellcolor{rowgray} 98.0/32.6/39.8/84.5 & \cellcolor{rowgray} 97.3/26.2/31.4/83.1 & \cellcolor{rowgray} \textbf{99.1}/45.8/50.1/90.5 & \textbf{99.1}/\textbf{59.0}/\textbf{58.3}/\textbf{95.1} \\
woodstick        & 97.8/30.7/38.4/85.0 & 94.0/36.2/44.3/77.2 & 76.1/ 1.4/ 6.0/32.0 & 97.9/60.3/60.0/51.0 & 90.9/ 2.6/ 8.0/60.7 & 97.7/40.1/44.9/82.7 & 98.4/48.5/51.9/\textbf{93.1} & 99.0/50.9/52.1/90.5 & \textbf{98.9}/\textbf{62.7}/\textbf{61.0}/91.7 \\
\cellcolor{rowgray} zipper & \cellcolor{rowgray} 99.1/44.7/50.2/96.3 & \cellcolor{rowgray} 98.4/32.5/36.1/95.1 & \cellcolor{rowgray} 89.9/23.3/31.2/55.5 & \cellcolor{rowgray} 98.2/35.3/39.0/78.5 & \cellcolor{rowgray} 90.2/12.5/18.8/53.5 & \cellcolor{rowgray} 99.3/58.2/61.3/97.6 & \cellcolor{rowgray} 99.0/43.8/49.7/97.4 & \cellcolor{rowgray} \textbf{99.3}/\textbf{67.2}/\textbf{66.5}/\textbf{97.8} & 99.1/58.4/61.0/97.1 \\
\midrule
Mean             & 97.3/25.0/32.7/89.6 & 97.3/21.1/29.2/86.7 & 75.7/ 2.8/ 6.5/39.0 & 94.6/37.9/41.7/40.6 & 88.0/ 2.9/ 7.1/58.1 & 98.5/33.0/38.7/90.5 & 98.9/37.7/42.6/93.1 & 98.8/42.8/47.1/93.9 & \textbf{99.1}/\textbf{53.7}/\textbf{54.2}/\textbf{96.1} \\
\bottomrule
\end{tabular}
}
\end{table*}

\section{Qualitative Visualizations}
\label{sec:visualization}
We visualize the output anomaly maps of RAD on MVTec-AD~\cite{bergmann2019mvtec}, VisA~\cite{zou2022spot}, and Real-IAD~\cite{wang2024real}, as shown in Figure~\ref{fig:Figure_A1}, Figure~\ref{fig:Figure_A2}, and Figure~\ref{fig:Figure_A3}. It is noted that all visualized samples are randomly chosen without artificial selection.

\begin{figure}[h]
    \centering
    \includegraphics[width=1\linewidth]{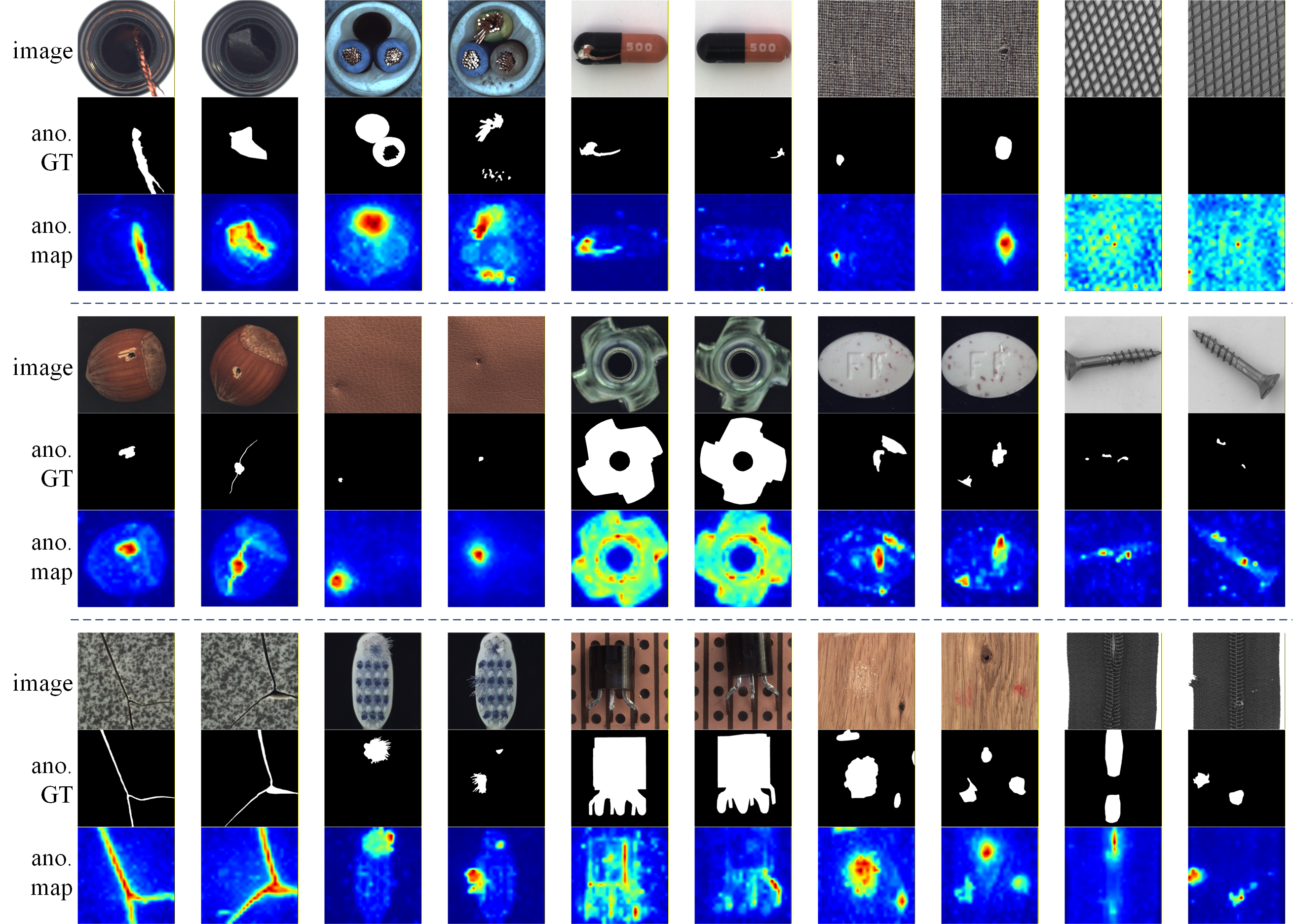}
    \caption{Anomaly maps visualization on MVTec-AD. All samples are randomly chosen.}
    \label{fig:Figure_A1}
\end{figure}
\clearpage

\begin{figure}[p]
    \centering
    \includegraphics[width=1\linewidth]{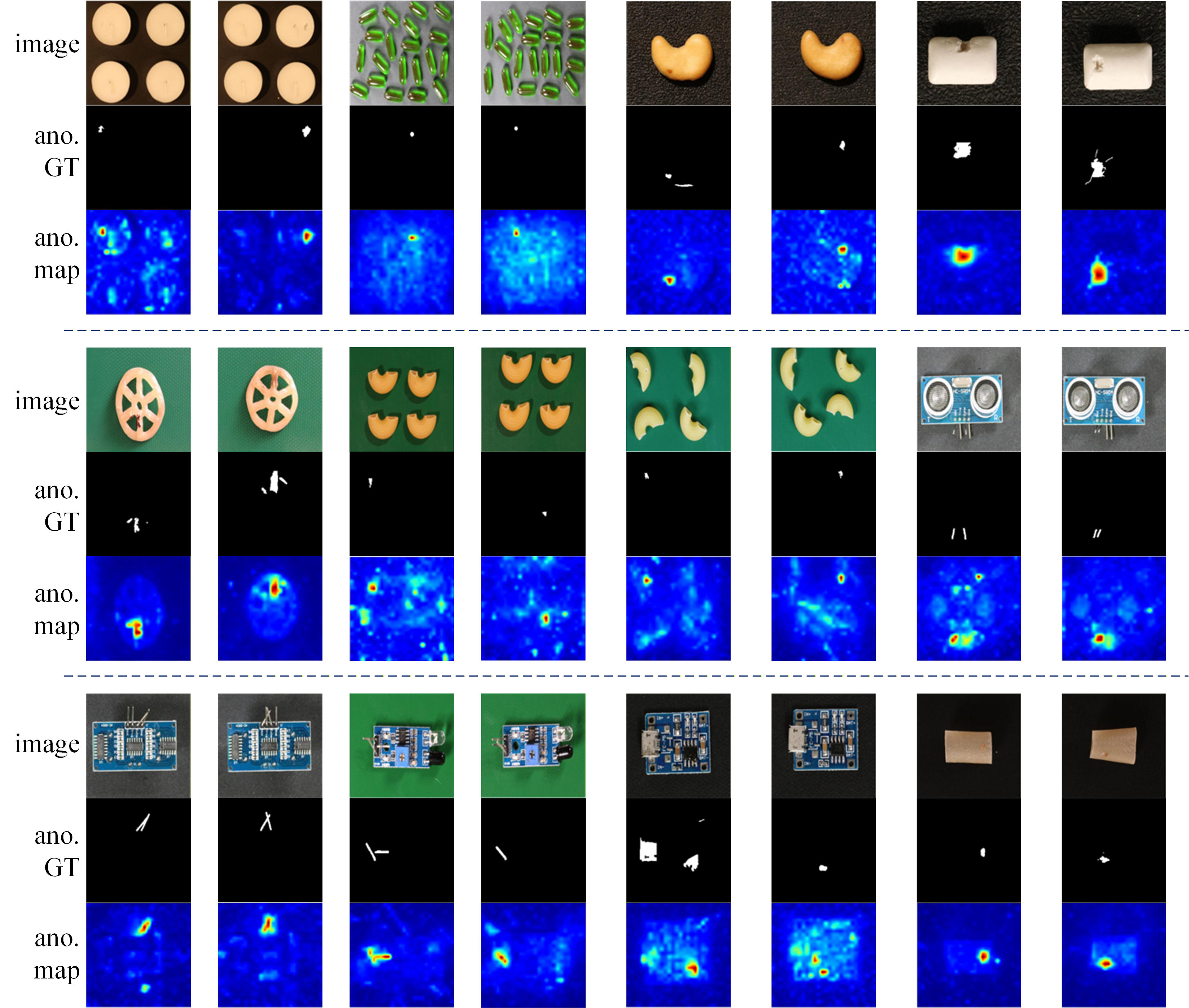}
    \caption{Anomaly maps visualization on VisA. All samples are randomly chosen.}
    \label{fig:Figure_A2}
\end{figure}
\clearpage

\begin{figure}[p]
    \centering
    \includegraphics[width=1\linewidth]{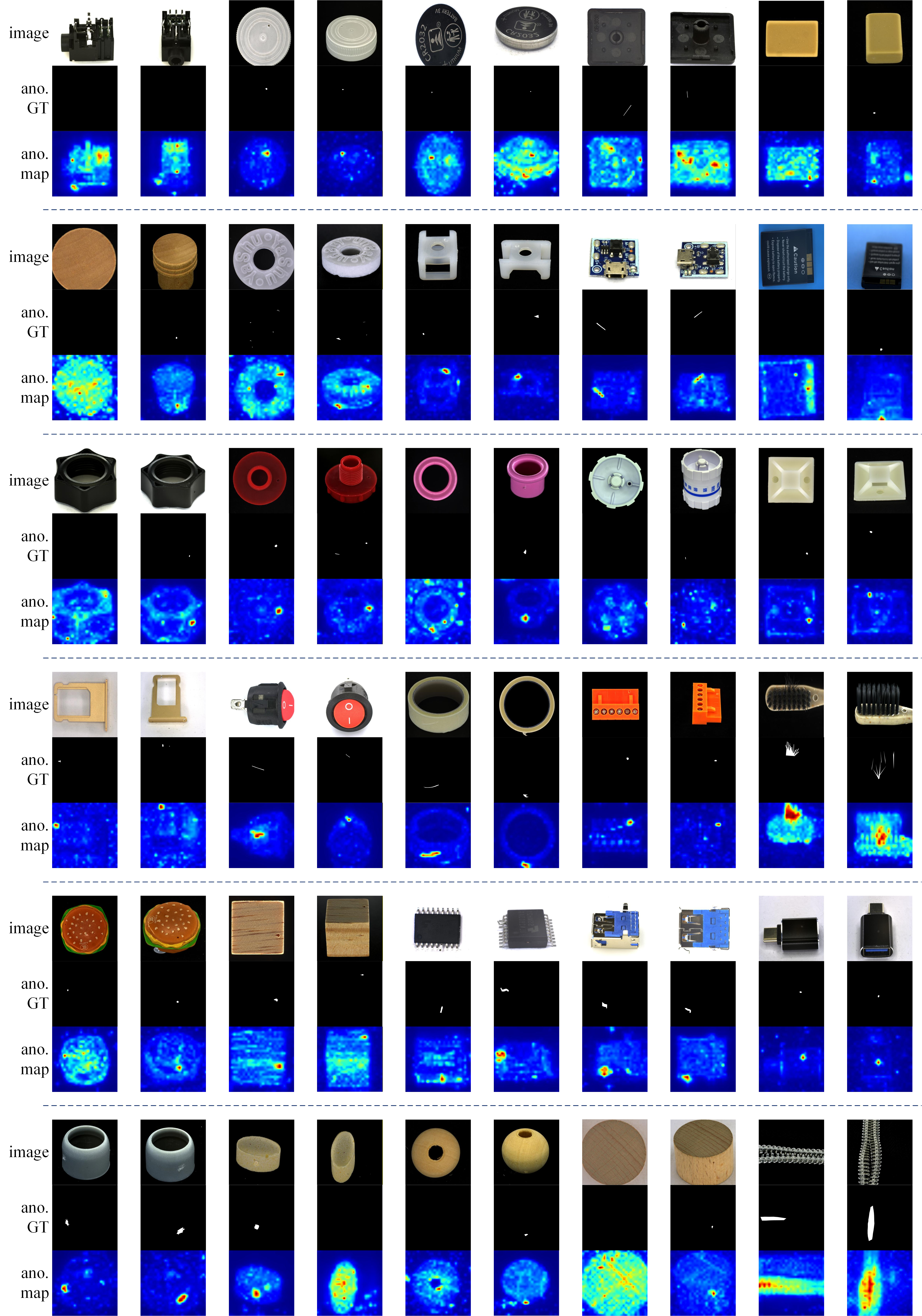}
    \caption{Anomaly maps visualization on Real-IAD. All samples are randomly chosen.}
    \label{fig:Figure_A3}
\end{figure}
\clearpage



\end{document}